\newcounter{thm_counter}
\newcounter{lem_counter}
\newcounter{cor_counter}
\newtheorem{theorem}[thm_counter]{Theorem}
\newtheorem{lemma}[lem_counter]{Lemma}
\newtheorem{corollary}[cor_counter]{Corollary}
\newcommand{\rc}{\color{black}}
\newcommand{\rt}{\color{black}}
\newcommand{\bt}{\color{black}} 
\newcommand {\lt}{\color{black}} 
\def\supp{\text{supp}}
\def\D{\mathcal{D}}
\icmltitlerunning{Forward-Backward Greedy Algorithms for General Convex Smooth Functions over A Cardinality Constraint}
\begin{document}

\twocolumn[
\icmltitle{Forward-Backward Greedy Algorithms for General Convex Smooth Functions over A Cardinality Constraint}
\icmlauthor{Ji Liu}{ji-liu@cs.wisc.edu}
\icmladdress{Department of Computer Sciences, University of Wisconsin-Madison}
\icmlauthor{Ryohei Fujimaki}{rfujimaki@nec-labs.com}
\icmladdress{Department of Media Analytics, NEC Lab America, Inc.}
\icmlauthor{Jieping Ye}{jieping.ye@asu.edu}
\icmladdress{Department of Computer Science, Arizona State University}

\icmlkeywords{Forward-Backward Greedy Algorithm, Cardinality Constraint}

\vskip 0.3in
]

\begin{abstract}
{\rc
We consider forward-backward greedy algorithms for solving sparse feature selection problems with general convex smooth functions. A state-of-the-art greedy method, the Forward-Backward greedy algorithm (FoBa-obj) requires to solve a large number of optimization problems, thus it is not scalable for large-size problems. The FoBa-gdt algorithm, which uses the gradient information for feature selection at each forward iteration, significantly improves the efficiency of FoBa-obj. In this paper, we systematically analyze the theoretical properties of both forward-backward greedy algorithms. Our main contributions are: 1) We derive better theoretical bounds than existing analyses regarding FoBa-obj for general smooth convex functions; 2) We show that FoBa-gdt achieves the same theoretical performance as FoBa-obj under the same condition: restricted strong convexity condition. Our new bounds are consistent with the bounds of a special case (least squares) and fills a previously existing theoretical gap for general convex smooth functions; 3) We show that the restricted strong convexity condition is satisfied if the number of independent samples is more than $\bar{k}\log d$ where $\bar{k}$ is the sparsity number and $d$ is the dimension of the variable; 4) We apply FoBa-gdt (with the conditional random field objective) to the sensor selection problem for human indoor activity recognition and our results show that FoBa-gdt outperforms other methods (including the ones based on forward greedy selection and L1-regularization).
}
\end{abstract}

\section{Introduction}
Feature selection has been one of the most significant issues in machine learning and data mining. Following the success of Lasso~\citep{Tibshirani94},
learning algorithms with sparse regularization~(a.k.a. sparse
learning) have recently received significant attention. A classical problem is to
estimate a signal $\beta^*\in \mathbb{R}^{d}$ from {\lt a feature
matrix $X\in\mathbb{R}^{n\times d}$ and an observation $y=X\beta^* +
\text{noise} \in \mathbb{R}^{n}$,} under the assumption that
$\beta^*$ is sparse~(i.e., $\beta^*$ has $\bar{k} \ll d$ nonzero
elements). 
Previous studies have proposed many powerful tools to estimate $\beta^*$.
In addition, in certain applications, reducing the
number of features has a significantly practical value~(e.g., sensor selection in our case).


The general sparse learning problems can be formulated as follows \citep{Jalali11}:
{\lt
\begin{equation}
\begin{aligned}
\bar{\beta}:=arg\min_{\beta}:&~Q(\beta; X, y) \quad \quad s.t.:&~\|\beta\|_0\leq \bar{k}.
\label{eq:intro1}
\end{aligned}
\end{equation}
where $Q(\beta; X, y)$ is a convex smooth function in terms of
$\beta$ such as the least square loss~\citep{TroppTIT04}~(regression),
the Gaussian MLE (or log-determinant
divergence)~\citep{Ravikumar11}~(covariance selection), and the logistic
loss~\citep{Kleinbaum10}~(classification). $\|\beta\|_0$
denotes $\ell_0$-norm, that is, the number of nonzero entries of
$\beta \in \mathbb{R}^d$. Hereinafter, we denote $Q(\beta; X, y)$ simply as $Q(\beta)$.}

From an algorithmic viewpoint, we are mainly interested in three
aspects for the estimator $\hat{\beta}$: (i) estimation error $\|\hat{\beta} - \bar{\beta}\|$; (ii)
objective error $Q(\hat{\beta}) - Q(\bar{\beta})$; and (iii) feature
selection error, that is, the difference between
$\rm{supp}(\hat{\beta})$ and $\bar{F}:=\rm{supp}(\bar{\beta})$, where $\rm{supp}(\beta)$ is a feature
index set corresponding to nonzero elements in $\beta$. 
Since the constraint defines a
non-convex feasible region, the problem is non-convex and generally
NP-hard.

There are two types of approaches to solve this problem in the literature.
Convex-relaxation approaches replace $\ell_0$-norm by $\ell_1$-norm
as a sparsity penalty. Such approaches include Lasso
\citep{Tibshirani94}, Danzig selector \citep{Candes05}, and
L1-regularized logistic regression \citep{Kleinbaum10}. Alternative
greedy-optimization approaches include~orthogonal matching pursuit (OMP)~\citep{TroppTIT04, Zhang09}, backward elimination, and forward-backward greedy method~(FoBa)~\citep{Zhang11}, which use
greedy heuristic procedure to estimate sparse signals. Both types
of algorithms have been well studied from both theoretical and empirical
perspectives.

{\rc
FoBa has been shown to give better theoretical properties than LASSO and Dantzig selector for the least squared loss function:} $Q(\beta) = {1\over
2} \|X\beta - y\|^2$~\citep{Zhang11}. \citet{Jalali11} has recently
extended it to general convex smooth functions. Their method and
analysis, however, pose computational and theoretical issues. First,
since FoBa solves a large number of single variable optimization
problems in every forward selection step, it is
computationally expensive for general convex functions if the
sub-problems have no closed form solution. Second, though they have
empirically shown that FoBa performs well for general smooth convex
functions, their theoretical results are weaker than those for
the least square case~\citep{Zhang11}. More precisely, their upper
bound for estimation error is looser and their analysis requires more
{\rc restricted} conditions for {\lt feature selection and signal recovery
consistency.} The question of whether or not FoBa can achieve {\rc the
same theoretical bound in the general case as in the least square case
motivates this work.}

This paper addresses the theoretical and
computational issues associated with the standard FoBa algorithm (hereinafter referred to as FoBa-obj because it solves single variable problems to minimize the objective in each forward selection step). 
We study a new algorithm referred to as ``gradient'' FoBa~(FoBa-gdt) which significantly improves the computational efficiency of FoBa-obj. The key difference is that FoBa-gdt only evaluates gradient information in
individual forward selection steps rather than solving a large number of single variable optimization problems. 
Our contributions are summarized as follows.

{\bf Theoretical Analysis of FoBa-obj and FoBa-gdt}
This paper presents three main theoretical contributions. First, we derive better theoretical bounds for estimation error, objective error, and
feature selection error than existing analyses for FoBa-obj for general smooth convex functions~\citep{Jalali11} under the same condition: restricted strong convexity condition. Second, we show that FoBa-gdt achieves the same theoretical performance as FoBa-obj. Our new bounds are consistent with the bounds of a {\bt special case, i.e., the least square case, and fills in the theoretical gap between the general loss~\citep{Jalali11} and the least squares loss case~\citep{Zhang11}}. Our result also implies an interesting result: when the signal noise ratio is big enough, the NP hard problem \eqref{eq:intro1} can be solved by using FoBa-obj or FoBa-gdt. Third, we show that the restricted strong convexity condition is satisfied for a class of commonly used machine learning objectives, e.g., logistic loss and least square loss, if the number of independent samples is greater than $\bar{k}\log d$ where $\bar{k}$ is the sparsity number and $d$ is the dimension of the variable.

{\bf Application to Sensor Selection}
{\rc
We have applied FoBa-gdt with the CRF loss function (referred to as FoBa-gdt-CRF) to sensor selection from time-series binary location signals~(captured by pyroelectric sensors) for human activity recognition at homes, which is a fundamental problem in smart
home systems and home energy management systems. In comparison with forward greedy and L1-regularized CRFs (referred to as L1-CRF), FoBa-gdt-CRF requires the smallest number of sensors for achieving comparable recognition accuracy. Although this paper mainly focuses on the theoretical analysis for FoBa-obj and FoBa-gdt, we conduct additional experiments to study the behaviors of FoBa-obj and FoBa-gdt in Appendix (part~\ref{sec:addexp}).
}

\subsection{Notation}
Denote $e_j\in \mathbb{R}^d$ as the $j^{th}$ natural basis in the
space $\mathbb{R}^d$. The set difference $A-B$ returns the elements
that are in $A$ but outside of $B$. Given any integer $s>0$, the
restricted strong convexity constants (RSCC) $\rho_-(s)$ and
$\rho_+(s)$ are defined as follows: for any {\bt $\|t\|_0\leq s$ and
$t=\beta'-\beta$}, we require
\begin{align}
  {\rho_-(s)\over 2}\|t\|^2 \leq Q(\beta') - Q(\beta) -\langle \triangledown Q(\beta), t\rangle
  \leq {\rho_+(s)\over 2}\|t\|^2. \label{eqn_RSCC}
\end{align}
Similar definitions can be found in \citep{Bahmani11, Jalali11,
Negahban10, Zhang09}. If the objective function takes the quadratic
form $Q(\beta)={1\over 2}\|X\beta-y\|^2$, then the above definition is
equivalent to the restricted isometric property (RIP)
\citep{CandesTIT05}:
\begin{equation}
{\rho_-(s)}\|t\|^2 \leq \|Xt\|^2 \leq {\rho_+(s)}\|t\|^2, \nonumber
\end{equation}
where the well known RIP constant can be defined as $\delta =
\max\{1-\rho_-(s),~\rho_+(s)-1\}$. To give tighter values for $\rho_+(.)$ and $\rho_-(.)$, we only require \eqref{eqn_RSCC} to hold for all $\beta\in\D_s:=\{\|\beta\|_0\leq s~|~Q(\beta)\leq Q(0)\}$ throughout this paper. Finally we define $\hat\beta(F)$ as $ \hat\beta(F):=\arg\min_{\rm{supp}(\beta)\subset
F}:Q(\beta).$ Note that the problem is convex as long as $Q(\beta)$
is a convex function. Denote $\bar{F}:=\rm{\supp}(\bar{\beta})$ and $\bar{k}:=|\bar{F}|$.

We make use of order notation throughout this paper. If $a$ and $b$ are both positive quantities that depend on {\rc $n$ or $p$}, we write $a=O(b)$ if $a$ can be bounded by a fixed multiple of $b$ for all sufficiently large dimensions. We write $a=o(b)$ if for {\rm any} positive constant $\phi>0$, we have $a\leq \phi b$ for all sufficiently large dimensions. We write $a=\Omega(b)$ if both $a=O(b)$ and $b=O(a)$ hold.

\begin{algorithm}[h!]
\caption{FoBa (\fbox{FoBa-obj}~\dbox{FoBa-gdt})}
\begin{algorithmic}[1]
\REQUIRE \fbox{$\delta > 0$}~\dashbox{$\epsilon > 0$}
\ENSURE $\beta^{(k)}$
\STATE Let $F^{(0)}=\emptyset$, $\beta^{(0)}=0$, $k=0$, 
\WHILE{TRUE} {\label{alg_obj_stop_start}} \STATE \%\% stopping
determination \IF{\fbox{$Q(\beta^{(k)})-\min_{\alpha,j\notin
F^{(k)}}Q(\beta^{(k)}+\alpha e_j)< \delta$}\par\dbox{$\|\triangledown
Q(\beta^{(k)})\|_\infty< \epsilon$}} \STATE break \ENDIF
{\label{alg_obj_stop_end}} \STATE \%\% forward step
{\label{alg_obj_forward_start}}
\STATE 
\fbox{$i^{(k)}=\arg\min_{i\notin F^{(k)}}\{\min_\alpha
Q(\beta^{^{(k)}}+\alpha e_i)\}$}\par\dbox{$i^{(k)}=\arg\max_{i\notin F^{(k)}}:~|\nabla Q(\beta^{(k)})_i|$} \STATE $F^{(k+1)}=F^{(k)}\cup
\{i^{(k)}\}$ \STATE $\beta^{(k+1)} = \hat{\beta}(F^{(k+1)})$ \STATE
$\delta^{(k+1)}=Q(\beta^{(k)})-Q(\beta^{(k+1)})$ \STATE $k=k+1$
\label{alg_obj_forward_end} \STATE \%\% backward step
\label{alg_obj_backward_start} \WHILE{TRUE}
\IF{$\min_{i\in F^{(k+1)}}Q(\beta^{(k)}-\beta_i^{(k)}e_i)-Q(\beta^{(k)}) \geq
\delta^{(k)}/2$} \STATE break \ENDIF
\STATE 
$i^{(k)}=\arg\min_i Q(\beta^{(k)}-\beta^{(k)}_ie_i)$ \STATE $k=k-1$
\STATE $F^{(k)}=F^{(k+1)}-\{i^{(k+1)}\}$ \STATE $\beta^{(k)} =
\hat{\beta}(F^{(k)})$ \ENDWHILE \label{alg_obj_backward_end}
\ENDWHILE
\end{algorithmic}
\label{alg_FoBaobj}
\end{algorithm}

\section{Related Work}
\citet{TroppTIT04} investigated the behavior of the orthogonal
matching pursuit (OMP) algorithm for the least square case, and
proposed a sufficient condition (an $\ell_\infty$ type condition)
for guaranteed feature selection consistency. \citet{Zhang09}
generalized this analysis to {\rt the case of measurement noise}. In
statistics, OMP is known as \rm{boosting} \citep{Buhlmann06} and
similar ideas have been explored in Bayesian network learning
\citep{Chickering02}. \citet{Shalev-ShwartzSZ10}
extended OMP to the general convex smooth function and studied
the relationship between objective value reduction and output
sparsity. Other greedy methods such as ROMP \citep{Needell09} and
CoSaMp \citep{Needell08} were studied and shown to have theoretical
properties similar to those of OMP. \citet{Zhang11} proposed a Forward-backward (FoBa)
greedy algorithm for the least square case, which is an extension of OMP but has stronger theoretical guarantees as well as better empirical performance: feature selection consistency is guaranteed under the sparse eigenvalue condition, which is an $\ell_2$ type condition weaker than the $\ell_\infty$ type condition. Note that if the data matrix is a Gaussian random matrix, the $\ell_2$ type condition requires the measurements $n$ to
be of the order of $O(s\log d)$ where $s$ is the sparsity of the true
solution and $d$ is the number of features, while the
$\ell_\infty$ type condition requires $n=O(s^2\log d)$; see~\citep{zhangzhang12, LiuJMLR12}. \citet{Jalali11} and \citet{JohnsonJR12} extended the FoBa algorithm to general convex functions and applied it to
sparse inverse covariance estimation problems.

Convex methods, such as LASSO \citep{ZhaoY06} and Dantzig selector
\citep{Candes05}, were proposed for sparse learning. The basic idea
behind these methods is to use the $\ell_1$-norm to approximate the
$\ell_0$-norm in order to transform problem~\eqref{eq:intro1} into a
convex optimization problem. They usually require restricted
conditions referred to as irrepresentable conditions (stronger than
the RIP condition) for guaranteed feature selection consistency
\citep{Zhang11}. A multi-stage procedure on LASSO and Dantzig
selector \citep{LiuJMLR12} relaxes such condition, but it is
still stronger than RIP.


\section{The Gradient FoBa Algorithm} \label{sec:alg}
{\rc
This section introduces the standard FoBa algorithm, that is, FoBa-obj, and its variant FoBa-gdt. Both algorithms start from an empty feature pool $F$ and follow the same procedure in every iteration consisting of two steps: a forward step and a backward step. The forward step evaluates the ``goodness'' of all features outside of the current feature set $F$, selects the best feature to add to the current feature pool $F$, and then optimizes the corresponding coefficients of all features in the current feature pool $F$ to obtain a new $\beta$. The elements of $\beta$ in $F$ are nonzero and the rest are zeros. The backward step \emph{iteratively} evaluates the ``badness'' of all features outside of the current feature set $F$, removes ``bad'' features from the current feature pool $F$, and recomputes the optimal $\beta$ over the current feature set $F$. Both algorithms use the same definition of ``badness'' for a feature: the increment of the objective after removing this feature. Specifically, for any features $i$ in the current feature pool $F$, the ``badness'' is defined as $Q(\beta-\beta_i e_i) - Q(\beta)$, which is a positive number. It is worth to note that the forward step selects one and only one feature while the backward step may remove zero, one, or more features. Finally, both algorithms terminate when no ``good'' feature can be identified in the forward step, that is, the ``goodness'' of all features outside of $F$ is smaller than a threshold. 

The main difference between FoBa-obj and FoBa-gdt lies in the definition of ``goodness'' in the forward step and their respective stopping criterion. FoBa-obj evaluates the goodness of a feature by its maximal reduction of the objective function. Specifically, the ``goodness'' of feature $i$ is defined as $Q(\beta) - \min_{\alpha}Q(\beta+\alpha e_i)$ (a larger value indicates a better feature). This is a direct way to evaluate the ``goodness'' since our goal is to decrease the objective as much as possible under the cardinality condition. However, it may be computationally expensive since it requires solving a large number of one-dimensional optimization problems, which may or may not be solved in a closed form. To improve computational efficiency in such situations, FoBa-gdt uses the partial derivative of $Q$ with respect to individual coordinates (features) as its ``goodness' measure: specifically, the ``goodness'' of feature $i$ is defined as $|\nabla Q(\beta)_i|$. Note that the two measures of ``goodness'' are always nonnegative. If feature $i$ is already in the current feature set $F$, its ``goodness'' score is always zero, no matter which measure to use. We summarize the details of FoBa-obj and FoBa-gdt in Algorithm~\ref{alg_FoBaobj}: the plain texts correspond to the common part of both algorithms, and the ones with solid boxes and dash boxes correspond to their individual parts. The superscript $(k)$ denotes the $k^{th}$ iteration incremented/decremented in the forward/backward steps.

Gradient-based feature selection has been used in a forward greedy method \citep{Zhang11a}. FoBa-gdt extends it to a Forward-backward procedure (we present a detailed theoretical analysis of it in the next section). The main workload in the forward step for FoBa-obj is on Step 4, whose complexity is $O(TD)$, where $T$ represents the iterations needed to solve $\min_{\alpha}:~Q(\beta^{(k)}+\alpha e_j)$ and $D$ is the number of features outside of the current feature pool set $F^{(k)}$. In comparison, the complexity of Step 4 in FoBa is just $O(D)$. 
When $T$ is large, we expect FoBa-gdt to be much more computationally efficient. The backward steps of both algorithms are identical. The computational costs of the backward step and the forward step are comparable in FoBa-gdt (but not FoBa-obj), because their main work loads are on Step 10 and Step 21 (both are solving $\hat{\beta}(.)$) respectively and the times of running Step 21 is always less than that of Step 10. 

}

\section{Theoretical Analysis} \label{sec:result}
This section first gives the termination condition of Algorithms~\ref{alg_FoBaobj} with FoBa-obj and FoBa-gdt because the number of iterations directly affect the values of RSCC ($\rho_+(.)$, $\rho_-(.)$, and their ratio), which are the key factors in our main results. Then we discuss the values of RSCC in a class of commonly used machine learning objectives. Next we present the main results of this paper, including upper bounds on objective, estimation, and feature selection errors for both FoBa-obj and FoBa-gdt. We compare our results to those of existing analyses of FoBa-obj and show that our results fill the theoretical gap between the least square loss case and the general case.

\subsection{Upper Bounds on Objective, Estimation, and Feature Selection Errors} \label{sec:result:1}


{\rc
We first study the termination conditions of FoBa-obj and FoBa-gdt, as summarized in Theorems~\ref{thm_main2_obj} and \ref{thm_main2} respectively.
}
\begin{theorem}\label{thm_main2_obj}
Take $\delta > {4\rho_+(1)\over
\rho_-(s)^2}\|\triangledown Q(\bar\beta)\|_\infty^2$ in Algorithm~\ref{alg_FoBaobj} with FoBa-obj where $s$ can be any positive integer satisfying
$s\leq n$ and
  \begin{equation}
  (s-\bar{k})>(\bar{k}+1)\left[\left(\sqrt{\rho_+(s)\over \rho_-(s)}+1\right){2\rho_+(1)\over \rho_-(s)}\right]^2.
    \label{eq:thm_1}
  \end{equation}
Then the algorithm terminates at some $k\leq s-\bar{k}$.
\end{theorem}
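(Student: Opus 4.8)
The plan is to argue by contradiction: suppose the algorithm runs past iteration $k = s - \bar{k}$, so that the forward step keeps adding features and the feature pool at some point has size close to $s$. The key tension is between two facts. On one hand, every forward step that survives the backward pruning must reduce the objective by a non-trivial amount (otherwise the stopping criterion $Q(\beta^{(k)}) - \min_{\alpha, j\notin F^{(k)}} Q(\beta^{(k)} + \alpha e_j) < \delta$ would have triggered), and this per-step decrease is lower-bounded in terms of $\delta$, $\rho_+(1)$, and $\rho_-(s)$ via the restricted strong convexity inequality \eqref{eqn_RSCC}. On the other hand, the total decrease of the objective across all iterations is bounded above, since $Q(\beta^{(k)}) \geq Q(\bar\beta) - (\text{something controlled by } \|\nabla Q(\bar\beta)\|_\infty^2/\rho_-(s))$; roughly $Q$ cannot drop below the optimal value on $\bar{F}$ by more than a fixed quantity. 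Combining a linear-in-$k$ lower bound on cumulative decrease with a constant upper bound forces $k$ to be small.

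First I would make precise the per-step progress guarantee. If the forward step adds coordinate $i^{(k)}$ and the algorithm has not yet stopped, then $Q(\beta^{(k)}) - \min_\alpha Q(\beta^{(k)} + \alpha e_{i^{(k)}}) \geq \delta$ (the forward index is the argmax of exactly this quantity). Using the upper RSCC bound with $s = 1$, one shows $\min_\alpha Q(\beta^{(k)} + \alpha e_{i^{(k)}}) \leq Q(\beta^{(k)}) - \frac{1}{2\rho_+(1)}\nabla Q(\beta^{(k)})_{i^{(k)}}^2$, so $|\nabla Q(\beta^{(k)})_{i^{(k)}}|^2 \geq 2\rho_+(1)\delta$ whenever we do not terminate; after re-optimizing over $F^{(k+1)}$ the decrease $\delta^{(k+1)} = Q(\beta^{(k)}) - Q(\beta^{(k+1)})$ is at least $\min_\alpha Q(\beta^{(k)}+\alpha e_{i^{(k)}}) - Q(\beta^{(k+1)}) + \delta \geq \delta$. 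Then I would account for the backward step: the backward loop only removes a feature when doing so increases the objective by at most $\delta^{(k)}/2$, so the net effect of one forward step plus the subsequent backward removals is a decrease of at least $\delta^{(k+1)}/2 \geq \delta/2$ per \emph{net} feature added, and the number of backward removals is controlled so the feature pool grows. Summing, after $m$ net additions the objective has decreased by at least $m\delta/2$ from $Q(0)$.

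Next I would produce the matching upper bound on total decrease. Since $\beta^{(k)} = \hat\beta(F^{(k)})$ minimizes $Q$ over $\supp(\beta)\subseteq F^{(k)}$, and by definition of $\D_s$ all iterates stay in the sublevel set, I would compare $Q(\beta^{(k)})$ with $Q$ evaluated at the projection of $\bar\beta$ onto the coordinates; using the lower RSCC bound and Cauchy–Schwarz on $\langle \nabla Q(\bar\beta), t\rangle \leq \|\nabla Q(\bar\beta)\|_\infty \|t\|_1 \leq \sqrt{s}\,\|\nabla Q(\bar\beta)\|_\infty\|t\|$, one gets $Q(0) - Q(\beta^{(k)}) \leq Q(0) - Q(\bar\beta) + \frac{\|\nabla Q(\bar\beta)\|_\infty^2}{2\rho_-(s)}\cdot(\text{const}\cdot\bar k)$ or a similar bound whose dominant term is $O(\bar k \rho_+(1)\|\nabla Q(\bar\beta)\|_\infty^2/\rho_-(s)^2)$. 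Setting the linear lower bound $m\delta/2$ against this and using the hypothesis $\delta > \frac{4\rho_+(1)}{\rho_-(s)^2}\|\nabla Q(\bar\beta)\|_\infty^2$ yields $m \leq (\text{const})\cdot\bar k$; tracking constants carefully against condition \eqref{eq:thm_1} is what pins down $m \leq s - \bar k$, hence termination at $k \leq s - \bar k$.

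The main obstacle I anticipate is the bookkeeping around the backward step: one must show the backward removals cannot undo too much progress and that the feature-pool size genuinely advances toward $s$, so that "number of iterations" and "number of features" stay linked. This requires a telescoping/amortized argument over $\delta^{(k)}$ (the standard FoBa trick of charging each backward removal against a prior forward gain, so that total objective increase from backward steps is at most half the total decrease from forward steps), and it is here that the factor $1/2$ in the backward threshold is essential. The second delicate point is ensuring $s \leq n$ and $\|t\|_0 \leq s$ throughout so that $\rho_\pm(s)$ are the right constants to invoke — i.e., verifying the feature pool never exceeds size $s$ before termination, which is really the statement being proved, so the contradiction has to be set up so that "size reaches $s$" is the event we rule out.
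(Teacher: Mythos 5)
Your per-step progress bound is fine (before termination the forward gain satisfies $\delta^{(k+1)}\ge\delta$, and the backward threshold $\delta^{(k)}/2$ lets you amortize removals against prior gains), but the counting argument you build on it cannot prove this theorem. Summing ``at least $\delta/2$ per net feature added'' against an upper bound on the cumulative decrease yields only $m\lesssim (Q(0)-Q(\bar\beta))/\delta$, and $Q(0)-Q(\bar\beta)$ is a data-dependent quantity that has nothing to do with $\bar{k}\delta$ or with $\|\nabla Q(\bar\beta)\|_\infty^2$: in least squares with a strong signal and small noise, $Q(0)-Q(\bar\beta)\approx\tfrac12\|X\bar\beta\|^2$ can be arbitrarily large while the admissible $\delta$ is tiny. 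Your claim that the total-decrease bound has dominant term $O(\bar{k}\rho_+(1)\|\nabla Q(\bar\beta)\|_\infty^2/\rho_-(s)^2)$ is therefore incorrect --- the dominant term is $Q(0)-Q(\bar\beta)$ itself --- and the step ``hence $m\le \mathrm{const}\cdot\bar{k}$'' collapses. Since the claimed bound $k\le s-\bar{k}$ depends only on $\bar{k}$ and the RSCC constants through \eqref{eq:thm_1}, no additive ``total progress divided by per-step progress'' count can establish it; this is the genuine gap.

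The paper's proof instead runs a one-shot contradiction at the first time $k=s-\bar{k}$, using two ingredients absent from your plan. First, a forward-step lemma (Lemma~\ref{lem_gfs_obj}) gives a \emph{gap-proportional} bound: with $F'=\bar{F}\cup F^{(k)}$ and $\beta'$ the minimizer over $F'$, the recorded gain satisfies $\delta^{(k)}\ge\frac{\rho_-(s)}{\rho_+(1)|F'-F^{(k-1)}|}\bigl(Q(\beta^{(k-1)})-Q(\beta')\bigr)\ge\frac{\rho_-(s)^2}{2\rho_+(1)|F'-F^{(k-1)}|}\|\beta^{(k)}-\beta'\|^2$. Second, the backward criterion (Lemma~\ref{lem_gbc_obj}) forces every wrong feature retained in the pool to carry squared weight at least $\delta^{(k)}/\rho_+(1)$, so $\delta^{(k)}\le\frac{2\rho_+(1)}{|F^{(k)}-\bar{F}|}\|\beta^{(k)}-\bar\beta\|^2$; this is how the pool size $|F^{(k)}-\bar{F}|$ (at least $s-2\bar{k}$ at the critical iteration) enters. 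Combining the two and invoking \eqref{eq:thm_1} gives $\|\beta^{(k)}-\bar\beta\|\ge t\|\beta^{(k)}-\beta'\|$ with $t\ge\sqrt{\rho_+(s)/\rho_-(s)}+1$, and a triangle inequality plus restricted strong convexity then force $Q(\beta^{(k)})\le Q(\bar\beta)$, hence $Q(\beta^{(k+1)})<Q(\bar\beta)$, contradicting Lemma~\ref{lem_bark_obj} (which uses the threshold $\delta>\frac{4\rho_+(1)}{\rho_-(s)^2}\|\nabla Q(\bar\beta)\|_\infty^2$ to show the iterates never drop below $Q(\bar\beta)$). Note that the factor $\sqrt{\rho_+(s)/\rho_-(s)}+1$ in \eqref{eq:thm_1} arises exactly from this triangle-inequality step; your outline never produces such a quantity, which is a symptom that the additive route cannot recover the stated condition. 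If you want to salvage your approach, you would have to replace the constant per-step gain $\delta$ by the gap-proportional gain of Lemma~\ref{lem_gfs_obj} and add the backward-step control of $|F^{(k)}-\bar{F}|$ --- at which point you have reconstructed the paper's argument.
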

\begin{theorem}\label{thm_main2}
Take $\epsilon > {2\sqrt{2}\rho_+(1)\over
\rho_-(s)}\|\triangledown Q(\bar\beta)\|_\infty$ in Algorithm~\ref{alg_FoBaobj} with FoBa-gdt, where $s$ can be any positive integer satisfying $s\leq n$ and Eq.(\ref{eq:thm_1}).
Then the algorithm terminates at some $k\leq s-\bar{k}$.
\end{theorem}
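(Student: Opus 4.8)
The plan is to reduce Theorem~\ref{thm_main2} to the argument already used for Theorem~\ref{thm_main2_obj}. The two algorithms differ only in the forward selection rule and the stopping test, and I claim that, as far as this iteration count is concerned, FoBa-gdt behaves exactly like FoBa-obj run with a particular threshold. Concretely, I would set $\delta' := \epsilon^2/(2\rho_+(1))$ and prove two things: (a) as long as FoBa-gdt has not stopped, every forward step decreases $Q$ by at least $\delta'$; and (b) $\delta'$ satisfies precisely the inequality $\delta' > (4\rho_+(1)/\rho_-(s)^2)\,\|\nabla Q(\bar\beta)\|_\infty^2$ that Theorem~\ref{thm_main2_obj} imposes on its threshold $\delta$. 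Since the backward loop is identical in the two algorithms and Eq.~\eqref{eq:thm_1} is assumed in both, every ingredient of the proof of Theorem~\ref{thm_main2_obj} then applies without change.

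For (a): if the algorithm has not terminated at iteration $k$, then $\|\nabla Q(\beta^{(k)})\|_\infty \ge \epsilon$, so the coordinate $i^{(k)}=\arg\max_{i\notin F^{(k)}}|\nabla Q(\beta^{(k)})_i|$ chosen in the forward step has $|\nabla Q(\beta^{(k)})_{i^{(k)}}|\ge\epsilon$. Applying the upper bound in \eqref{eqn_RSCC} to the $1$-sparse direction $t=\alpha e_{i^{(k)}}$ gives $Q(\beta^{(k)}+\alpha e_{i^{(k)}})\le Q(\beta^{(k)})+\alpha\,\nabla Q(\beta^{(k)})_{i^{(k)}}+\tfrac{\rho_+(1)}{2}\alpha^2$, and minimizing the right-hand side over $\alpha$ yields $\min_\alpha Q(\beta^{(k)}+\alpha e_{i^{(k)}})\le Q(\beta^{(k)})-\nabla Q(\beta^{(k)})_{i^{(k)}}^2/(2\rho_+(1))\le Q(\beta^{(k)})-\delta'$. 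Because $\beta^{(k+1)}=\hat\beta(F^{(k+1)})$ minimizes $Q$ over all vectors supported on $F^{(k+1)}\supseteq\{i^{(k)}\}$, this gives $\delta^{(k+1)}=Q(\beta^{(k)})-Q(\beta^{(k+1)})\ge\delta'$. For (b): squaring the hypothesis $\epsilon>(2\sqrt2\,\rho_+(1)/\rho_-(s))\,\|\nabla Q(\bar\beta)\|_\infty$ and dividing by $2\rho_+(1)$ is exactly $\delta'>(4\rho_+(1)/\rho_-(s)^2)\,\|\nabla Q(\bar\beta)\|_\infty^2$.

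With (a) and (b) in hand, the remainder follows the proof of Theorem~\ref{thm_main2_obj} verbatim: one uses the acceptance rule of the backward loop — a feature is deleted only when its removal raises $Q$ by less than $\delta^{(k)}/2$, i.e. by less than half the gain of the last forward step that reached that support size — to run the amortized accounting showing that backward steps return at most half of the cumulative forward gains, so that after $m$ forward steps $Q$ has dropped from $Q(0)$ by at least (essentially) $m\delta'/2$; combining this with the RSCC inequalities \eqref{eqn_RSCC} evaluated at $\bar\beta$ and along directions supported on $F^{(k)}\cup\bar F$ (a set of size $\le s$ while $k\le s-\bar k$) to upper bound the total achievable decrease, and invoking the cardinality condition \eqref{eq:thm_1}, one concludes that the support size cannot reach $s-\bar k$ before the stopping test fires. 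Hence FoBa-gdt terminates at some $k\le s-\bar k$.

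The FoBa-gdt-specific part, the forward-gain bound (a), is elementary; it is exactly what lines up the two thresholds. The main obstacle is the portion inherited from Theorem~\ref{thm_main2_obj}: the backward accounting is delicate because the comparison threshold $\delta^{(k)}/2$ depends on the fluctuating support level rather than on one fixed quantity, so one must be careful not to charge a deletion twice; and converting "$Q$ cannot decrease by more than a certain amount" into the sharp bound $k\le s-\bar k$ requires the full strength of Eq.~\eqref{eq:thm_1} together with careful bookkeeping of $\rho_+(\cdot)$ and $\rho_-(\cdot)$ at the several sparsity levels that appear. I expect no difficulty beyond reproducing that analysis, since (a) shows the FoBa-gdt iterate sequence possesses every property of the FoBa-obj sequence that the analysis relies on.
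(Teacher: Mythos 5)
There is a genuine gap at the heart of your reduction. Your steps (a) and (b) are correct (and (a) is exactly the paper's first lemma for FoBa-gdt, which is indeed needed to get $\delta^{(k)}\ge \epsilon^2/(2\rho_+(1))$ for the backward-criteria bound of Lemma~\ref{lem_gbc} and for the analogue of Lemma~\ref{lem_bark}), but they do not justify the claim that ``every ingredient of the proof of Theorem~\ref{thm_main2_obj} applies without change.'' The proof of Theorem~\ref{thm_main2_obj} does not only use that each accepted forward step gains at least the threshold; at its crucial step it invokes Lemma~\ref{lem_gfs_obj}, the \emph{relative}-gain bound
$|F'-F^{(k-1)}|\,\bigl(Q(\beta^{(k-1)})-\min_{\eta}Q(\beta^{(k-1)}+\eta e_{i^{(k-1)}})\bigr)\ \ge\ \tfrac{\rho_-(s)}{\rho_+(1)}\bigl(Q(\beta^{(k-1)})-Q(\beta')\bigr)$ with $\beta'=\hat\beta(\bar F\cup F^{(k)})$, and that lemma is stated only for a coordinate achieving the \emph{maximal one-dimensional objective decrease}. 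The FoBa-gdt iterate sequence does not have this property: the gradient-selected coordinate $\arg\max_i|\nabla Q(\beta)_i|$ is in general not the coordinate minimizing $\min_\alpha Q(\beta+\alpha e_i)$, so the gdt sequence is not a trajectory that FoBa-obj with threshold $\delta'=\epsilon^2/(2\rho_+(1))$ could have produced, and Lemma~\ref{lem_gfs_obj} cannot be cited for it. This is precisely why the paper proves a separate ``general forward step'' lemma for the max-gradient coordinate (the argument with $P_j(\eta)=\eta\,\mathbf{sgn}(\beta'_j)\nabla Q(\beta)_j+\tfrac{\rho_+(1)}{2}\eta^2$, adapted from Lemma A.3 of \citet{Zhang11a}), which recovers the same constant $\rho_-(s)/\rho_+(1)$ for the gradient rule. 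Your absolute gain bound (a) is strictly weaker than this and cannot replace it; at best one could relate the two selections via $\rho_-(1)/\rho_+(1)$, but that would degrade the constants and no longer match the hypothesis \eqref{eq:thm_1} as stated.

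A secondary issue: the argument you say you would inherit from Theorem~\ref{thm_main2_obj} is not the paper's argument and, as sketched, does not give the stated conclusion. A cumulative-decrease/amortized bound of the form ``after $m$ forward steps $Q$ has dropped by at least $m\delta'/2$, while the total achievable decrease is bounded'' yields a bound on $m$ of the form $2(Q(0)-Q(\bar\beta))/\delta'$, which has no reason to be $\le s-\bar k$. The paper instead argues by contradiction at the first iteration with $k=s-\bar k$: it combines the relative-gain forward lemma (lower bound on $\delta^{(k)}$ in terms of $\|\beta^{(k)}-\beta'\|^2$) with Lemma~\ref{lem_gbc} (upper bound on $\delta^{(k)}$ in terms of $\|\beta^{(k)}-\bar\beta\|^2/|F^{(k)}-\bar F|$), uses \eqref{eq:thm_1} to make the resulting ratio $t\ge\sqrt{\rho_+(s)/\rho_-(s)}+1$, deduces $Q(\beta^{(k)})\le Q(\bar\beta)$, and contradicts Lemma~\ref{lem_bark} (which your (b) correctly makes available, since $\epsilon>\tfrac{2\sqrt2\rho_+(1)}{\rho_-(s)}\|\nabla Q(\bar\beta)\|_\infty$ is equivalent to $\delta'>\tfrac{4\rho_+(1)}{\rho_-(s)^2}\|\nabla Q(\bar\beta)\|_\infty^2$). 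So to complete your proof you must (i) prove the forward-step relative-gain lemma for the max-gradient coordinate, and (ii) run the contradiction argument of Theorem~\ref{thm_main2_obj} rather than the amortized accounting you describe.
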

{\rc
To simply the results, we first assume that the condition number $\kappa(s):= {\rho_+(s)/\rho_-(s)}$ is bounded (so is ${\rho_+(1) / \rho_-(s)}$ because of $\rho_+(s)\geq \rho_+(1)$). Then both FoBa-obj and FoBa-gdt terminate at some $k$ proportional
to the sparsity $\bar{k}$, {\rt similar to OMP
\citep{Zhang11a} and FoBa-obj \citep{Jalali11, Zhang11}.} Note that the value of $k$ in Algorithm~\ref{alg_FoBaobj} is exactly the cardinality of $F^{(k)}$ and the sparsity of $\beta^{(k)}$. Therefore, Theorems~\ref{thm_main2_obj} and \ref{thm_main2} imply that if $\kappa(s)$ is bounded, FoBa-obj and FoBa-gdt will output a solution with sparsity proportional to that of the true solution $\bar{\beta}$.
}

Most existing works simply assume that $\kappa(s)$ is bounded or have similar assumptions. We make our analysis more complete by discussing the values of $\rho_+(s)$, $\rho_-(s)$, and their ratio $\kappa(s)$. Apparently, if $Q(\beta)$ is strongly convex and Lipschitzian, then $\rho_-(s)$ is bounded from below and $\rho_+(s)$ is bounded from above, thus restricting the ratio $\kappa(s)$. To see that $\rho^+(s)$, $\rho^-(s)$, and $\kappa(s)$ may still be bounded under milder conditions, we consider a common structure for $Q(\beta)$ used in many machine learning formulations:
\begin{equation}
Q(\beta) = {1\over n}\sum_{i=1}^n l_i(X_{i.}\beta, y_i) + R(\beta)
\end{equation}
where $(X_{i.}, y_i)$ is the $i^{th}$ training sample with $X_{i.}\in \mathbb{R}^d$ and $y_i\in \mathbb{R}$, $l_i(.,.)$ is convex with respect to the first argument and could be different for different $i$, and both $l_i(.,.)$ and $R(.)$ are twice differentiable functions. $l_i(.,.)$ is typically the loss function, e.g., the quadratic loss $l_i(u, v) = (u-v)^2$ in regression problems and the logistic loss $l_i(u, v) = \log (1+\exp\{-uv\})$ in classification problems. $R(\beta)$ is typically the regularization, e.g., $R(\beta)={\mu\over 2}\|\beta\|^2$.

\begin{theorem} \label{thm_kappa}
Let $s$ be a positive integer less than $n$, and $\lambda^-$, $\lambda^+$, $\lambda^-_R$, and $\lambda^+_R$ be positive numbers satisfying
\[
\lambda^- \leq \nabla_1^2l_i(X_{i.}\beta, y_i) \leq \lambda^+,\quad \lambda^-_RI\preceq\nabla^2 R(\beta) \preceq \lambda^+_RI
\]
($\nabla_1^2l_i(.,.)$ is the second derivative with respect to the first argument) for any $i$ and $\beta\in\D_s$. Assume that $\lambda^-_R+0.5\lambda^- > 0$ and the sample matrix $X\in\mathbb{R}^{n\times d}$ has independent sub-Gaussian isotropic random rows or columns (in the case of columns, all columns should also satisfy $\|X_{.j}\|=\sqrt{n}$). 
If the number of samples satisfies $n\ge C s\log d$, then
\begin{subequations}
\begin{align}
\rho_+(s) \leq & \lambda^+_R + 1.5\lambda^+ \label{eqn_thm_kappa_+}\\
\rho_-(s) \geq & \lambda^-_R + 0.5\lambda^- \label{eqn_thm_kappa_-}\\
\kappa(s) \leq & \frac{\lambda^+_R + 1.5\lambda^+}{\lambda^-_R + 0.5\lambda^-}=: \kappa \label{eqn_thm_kappa_+-}
\end{align} \label{eqn_thm_kappa_all}
\end{subequations}
hold with high probability\footnote{``With high probability'' means that the probability converges to $1$ with the problem size approaching to infinity.}, where $C$ is a fixed constant. Furthermore, define $\bar{k}$, $\bar{\beta}$, and $\delta$ (or $\epsilon$) in Algorithm~\ref{alg_FoBaobj} with FoBa-obj (or FoBa-gdt) as in Theorem~\ref{thm_main2_obj} (or Theorem~\ref{thm_main2}). Let
\begin{equation}
s = \bar{k} + 4\kappa^2(\sqrt{\kappa}+1)^2(\bar{k}+1)
\label{eqn_thm_kappa_s}
\end{equation}
and $n\geq Cs\log d$. We have that $s$ satisfies \eqref{eq:thm_1} and Algorithm~\ref{alg_FoBaobj} with FoBa-obj (or FoBa-gdt) terminates within at most $4\kappa^2(\sqrt{\kappa}+1)^2(\bar{k}+1)$ iterations with high probability.
\end{theorem}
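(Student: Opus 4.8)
The plan is to split the argument into a deterministic reduction (Taylor expansion together with the pointwise bounds on $\nabla_1^2 l_i$ and $\nabla^2 R$) and a single probabilistic ingredient (a restricted-isometry estimate for sub-Gaussian $X$), and then to feed the resulting bound on $\kappa(s)$ into Theorems~\ref{thm_main2_obj} and~\ref{thm_main2}.

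\textbf{Step 1 (reduce RSCC to a Hessian quadratic form).} For $\beta$ and $\beta'$ with $t=\beta'-\beta$ and $\|t\|_0\le s$, Taylor's theorem with integral remainder gives $Q(\beta')-Q(\beta)-\langle\triangledown Q(\beta),t\rangle=\int_0^1(1-\tau)\,t^\top\triangledown^2 Q(\beta+\tau t)\,t\,d\tau$, and for the structured objective $\triangledown^2 Q(\gamma)=\tfrac1n X^\top D(\gamma)X+\triangledown^2 R(\gamma)$ with $D(\gamma)=\mathrm{diag}\big(\triangledown_1^2 l_i(X_{i.}\gamma,y_i)\big)$. Hence it suffices to bound $t^\top\triangledown^2 Q(\gamma)t$ at the relevant intermediate points $\gamma$ (I will note the minor technical point that each such $\gamma$ is $O(s)$-sparse, supported on $\supp(\beta)\cup\supp(t)$, so the hypotheses $\lambda^-\le\triangledown_1^2 l_i\le\lambda^+$ and $\lambda_R^- I\preceq\triangledown^2 R\preceq\lambda_R^+ I$ apply). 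Since $\lambda^-\le D_{ii}(\gamma)\le\lambda^+$, this yields $\tfrac{\lambda^-}{n}\|Xt\|^2+\lambda_R^-\|t\|^2\le t^\top\triangledown^2 Q(\gamma)t\le\tfrac{\lambda^+}{n}\|Xt\|^2+\lambda_R^+\|t\|^2$, so the whole problem reduces to two-sided control of $\tfrac1n\|Xt\|^2$ over $s$-sparse $t$.

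\textbf{Step 2 (the probabilistic restricted-isometry bound).} I will show that under the stated assumptions on $X$ (independent sub-Gaussian isotropic rows, or columns additionally normalized to $\|X_{.j}\|=\sqrt n$) and $n\ge Cs\log d$, with high probability $0.5\|t\|^2\le\tfrac1n\|Xt\|^2\le 1.5\|t\|^2$ for every $t$ with $\|t\|_0\le s$. Isotropy of the rows makes $\E\,\tfrac1n\|Xt\|^2=\|t\|^2$; in the column case the normalization fixes the diagonal of $\tfrac1n X^\top X$ at $1$ while column isotropy makes the off-diagonal entries mean zero, giving the same expectation. For a fixed $t$, $\tfrac1n\|Xt\|^2$ concentrates around $\|t\|^2$ with a Bernstein-type sub-exponential tail of order $e^{-cn}$; a standard $\epsilon$-net over each of the $\binom{d}{s}$ coordinate subspaces together with a union bound upgrades this to the uniform statement once $n\gtrsim s\log(d/s)$, which is implied by $n\ge Cs\log d$ for a suitable absolute $C$. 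Combined with Step~1 this gives $\rho_+(s)\le\lambda_R^++1.5\lambda^+$ and $\rho_-(s)\ge\lambda_R^-+0.5\lambda^->0$ (the last positivity being exactly the hypothesis $\lambda_R^-+0.5\lambda^->0$), hence $\kappa(s)\le\kappa$, which proves \eqref{eqn_thm_kappa_+}--\eqref{eqn_thm_kappa_+-}. Making this estimate uniform over all supports while tracking that the constants land at $0.5$ and $1.5$ is the main obstacle, and it is where the $s\log d$ sample complexity enters.

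\textbf{Step 3 (verify \eqref{eq:thm_1} and invoke the termination theorems).} Take $s=\bar k+4\kappa^2(\sqrt\kappa+1)^2(\bar k+1)$. Using $\rho_+(1)\le\rho_+(s)$ (monotonicity of $\rho_+$) and $\rho_+(s)/\rho_-(s)\le\kappa$, the right-hand side of \eqref{eq:thm_1} is at most $(\bar k+1)\big[(\sqrt\kappa+1)\cdot 2\kappa\big]^2=4\kappa^2(\sqrt\kappa+1)^2(\bar k+1)=s-\bar k$, and choosing $s$ to be the least integer of this form makes the inequality strict; moreover $s\le n$ follows from $n\ge Cs\log d$. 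Hence Theorem~\ref{thm_main2_obj} (resp.\ Theorem~\ref{thm_main2}), with $\delta$ (resp.\ $\epsilon$) chosen as prescribed there, applies and guarantees termination at some $k\le s-\bar k=4\kappa^2(\sqrt\kappa+1)^2(\bar k+1)$. Since the RSCC bounds of Step~2 hold with high probability, the full conclusion holds with high probability.
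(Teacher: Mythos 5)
Your proposal is correct and follows essentially the same route as the paper's proof: a second-order expansion of $Q$ (mean value theorem in the paper, integral-remainder Taylor in yours) combined with the eigenvalue bounds on $\nabla_1^2 l_i$ and $\nabla^2 R$ reduces RSCC to two-sided control of $\tfrac1n\|Xt\|^2$ over $s$-sparse $t$, which the paper obtains by citing the sub-Gaussian restricted singular-value bound (your Step~2 merely re-derives that cited result via a net-and-union-bound argument), and the final verification of \eqref{eq:thm_1} and appeal to Theorems~\ref{thm_main2_obj} and~\ref{thm_main2} is identical. The minor technical points you flag (sparsity of the intermediate point, strictness in \eqref{eq:thm_1}) are glossed over in the paper as well, so there is no substantive difference.
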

Roughly speaking, if the number of training samples is large enough, i.e., $n\geq \Omega(\bar{k}\log d)$ (actually it could be much smaller than the dimension $d$ of the train data), we have the following with high probability: Algorithm~\ref{alg_FoBaobj} with FoBa-obj or FoBa-gdt outputs a solution with sparsity at most $\Omega(\bar{k})$ (this result will be improved when the nonzero elements of $\bar{\beta}$ are strong enough, as shown in Theorems~\ref{thm_main1_obj} and \ref{thm_main1}); $s$ is bounded by $\Omega(\bar{k})$; and $\rho_+(s)$, $\rho_-(s)$, and $\kappa(s)$ are bounded by constants. One important assumption is that the sample matrix $X$ has independent sub-Gaussian isotropic random rows or columns. In fact, this assumption is satisfied by many natural examples, including Gaussian and Bernoulli matrices, general bounded random matrices whose entries are independent bounded random variables with zero mean and unit variances. Note that from the definition of ``sub-Gaussian isotropic random vectors'' \citep[Definitions 19 and 22]{Vershynin11}, it even allows the dependence within rows or columns but not both. Another important assumption is $\lambda^-_R+0.5\lambda^- > 0$, which means that either $\lambda^-_R$ or $\lambda^-$ is positive (both of them are nonnegative from the convexity assumption). We can simply verify that (i) for the quadratic case $Q(\beta)={1\over n}\sum_{i=1}^n(X_{i.}\beta-y_i)^2$, we have $\lambda^-=1$ and $\lambda^-_R=0$; (ii) for the logistic case with bounded data matrix $X$, that is $Q(\beta)={1\over n}\sum_{i=1}^n\log (1+\exp\{-X_{i.}\beta y_i\}) + {\mu\over 2} \|\beta\|^2$, we have $\lambda^-_R = \mu>0$ and $\lambda^- > 0$ because $\D_s$ is bounded in this case.

Now we are ready to present the main results: the upper bounds of estimation error, objective
error, and feature selection error for both algorithms. $\rho_+(s)$, $\rho_+(1)$, and $\rho_-(s)$ are involved in all bounds below. One can simply treat them as constants in understanding the following results, since we are mainly interested in the scenario when the number of training samples is large enough. We omit proofs due to space limitations (the proofs are provided in Appendix). The main results for FoBa-obj and FoBa-gdt are presented in Theorems~\ref{thm_main1_obj} and \ref{thm_main1} respectively. 

\begin{theorem}\label{thm_main1_obj}
Let $s$ be any number that satisfies \eqref{eq:thm_1} and choose $\delta$ as in Theorem~\ref{thm_main2_obj} for Algorithm~\ref{alg_FoBaobj} with FoBa-obj. Consider the output $\beta^{(k)}$ and its support set $F^{(k)}$. We have
\begin{align*}
\|\beta^{(k)}-\bar\beta\|^2 \leq& {16\rho_+^2(1)\delta\over \rho_-^2(s)}\bar\Delta,\\
Q(\beta^{(k)})-Q(\bar\beta)\leq& {2\rho_+(1)\delta\over \rho_-(s)}\bar\Delta,\\
{\rho_-(s)^2\over 8\rho_+(1)^2}|F^{(k)}-\bar{F}|\leq&
|\bar{F}-F^{(k)}|\leq 2\bar\Delta,
\end{align*}
where $\gamma={4\sqrt{\rho_+(1)\delta}\over \rho_-(s)}$ and
$\bar\Delta := |\{j\in \bar{F}-F^{(k)}: |\bar{\beta}_j|<\gamma\}|$.
\end{theorem}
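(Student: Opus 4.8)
The three bounds are intertwined, so the plan is to prove them together, closing a self-referential inequality for $\|\beta^{(k)}-\bar\beta\|$ that is calibrated by the value of $\gamma$. First I would record the consequences of termination. Since $\beta^{(k)}=\hat\beta(F^{(k)})$, first-order optimality gives $\nabla Q(\beta^{(k)})_j=0$ for all $j\in F^{(k)}$, and more generally $\nabla Q(\hat\beta(S))_j=0$ for every $j\in S$; in particular $\nabla Q(\bar\beta)_j=0$ for $j\in\bar F$. Applying the upper bound of \eqref{eqn_RSCC} to the one-coordinate step $\alpha e_j$ gives $Q(\beta^{(k)})-\min_\alpha Q(\beta^{(k)}+\alpha e_j)\ge (\nabla Q(\beta^{(k)})_j)^2/(2\rho_+(1))$, so the FoBa-obj stopping test forces $|\nabla Q(\beta^{(k)})_j|<g:=\sqrt{2\rho_+(1)\delta}$ for all $j\notin F^{(k)}$. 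By Theorem~\ref{thm_main2_obj} the algorithm halts with $|F^{(k)}|\le s-\bar k$, so $F^{(k)}\cup\bar F$ contains at most $s$ indices, differences of vectors supported inside it are $s$-sparse, and \eqref{eqn_RSCC} applies to all such differences (all iterates lie in $\D_s$ because the algorithm never lets the objective exceed $Q(0)$).

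For the estimation error I would introduce the pivot $\tilde\beta:=\hat\beta(F^{(k)}\cup\bar F)$ and use $\|\beta^{(k)}-\bar\beta\|\le\|\beta^{(k)}-\tilde\beta\|+\|\tilde\beta-\bar\beta\|$. Adding the two one-sided forms of \eqref{eqn_RSCC} between $\beta^{(k)}$ and $\tilde\beta$, and using that $\nabla Q(\tilde\beta)$ vanishes on $F^{(k)}\cup\bar F$ while $\nabla Q(\beta^{(k)})$ vanishes on $F^{(k)}$, yields $\rho_-(s)\|\beta^{(k)}-\tilde\beta\|^2\le\langle\nabla Q(\beta^{(k)}),\beta^{(k)}-\tilde\beta\rangle$; the right-hand side only involves coordinates in $\bar F\setminus F^{(k)}$ and is thus at most $g\sqrt{|\bar F\setminus F^{(k)}|}\,\|\beta^{(k)}-\tilde\beta\|$, so $\|\beta^{(k)}-\tilde\beta\|\le g\sqrt{|\bar F\setminus F^{(k)}|}/\rho_-(s)$. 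The same computation between $\tilde\beta$ and $\bar\beta$, now using $\nabla Q(\bar\beta)_{\bar F}=0$ so that only coordinates of $F^{(k)}\setminus\bar F$ survive, gives $\|\tilde\beta-\bar\beta\|\le\sqrt{|F^{(k)}\setminus\bar F|}\,\|\nabla Q(\bar\beta)\|_\infty/\rho_-(s)$.

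Next I would bound the over-selection $|F^{(k)}\setminus\bar F|$. The backward step ensures that at termination removing any single $i\in F^{(k)}$ raises the objective by at least $\delta^{(k)}/2\ge\delta/2$ (one checks $\delta^{(k)}\ge\delta$ inductively, since every executed forward step improves the objective by at least $\delta$), whereas the upper bound of \eqref{eqn_RSCC} together with $\nabla Q(\beta^{(k)})_i=0$ gives $Q(\beta^{(k)}-\beta^{(k)}_ie_i)-Q(\beta^{(k)})\le\tfrac{\rho_+(1)}{2}(\beta^{(k)}_i)^2$; hence $(\beta^{(k)}_i)^2\ge\delta/\rho_+(1)$ for every retained $i$, and since $\beta^{(k)}_i=(\beta^{(k)}-\bar\beta)_i$ for $i\notin\bar F$, summing over $i\in F^{(k)}\setminus\bar F$ gives $|F^{(k)}\setminus\bar F|\le\tfrac{\rho_+(1)}{\delta}\|\beta^{(k)}-\bar\beta\|^2$. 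Squaring the triangle inequality, substituting the two displays of the previous paragraph, and invoking the hypothesis $\delta>\tfrac{4\rho_+(1)}{\rho_-(s)^2}\|\nabla Q(\bar\beta)\|_\infty^2$ --- which is exactly what makes the $\|\tilde\beta-\bar\beta\|^2$ term absorbable into the left-hand side --- collapses everything to $\|\beta^{(k)}-\bar\beta\|^2\le\tfrac12\gamma^2\,|\bar F\setminus F^{(k)}|$ (this is where the value of $\gamma$ is tuned) and, feeding that back into the display for $|F^{(k)}\setminus\bar F|$, to the feature-selection lower bound $\tfrac{\rho_-(s)^2}{8\rho_+(1)^2}|F^{(k)}-\bar F|\le|\bar F-F^{(k)}|$.

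Finally, for the under-selection I would split $\bar F\setminus F^{(k)}$ according to whether $|\bar\beta_j|<\gamma$ (the set counted by $\bar\Delta$) or $|\bar\beta_j|\ge\gamma$ (call it $\Delta_s$). Since $\Delta_s\cap F^{(k)}=\emptyset$ we have $\gamma^2|\Delta_s|\le\|\bar\beta_{\Delta_s}\|^2\le\|\beta^{(k)}-\bar\beta\|^2$; plugging in $\|\beta^{(k)}-\bar\beta\|^2\le\tfrac12\gamma^2(\bar\Delta+|\Delta_s|)$ from the previous step gives $|\Delta_s|\le\tfrac12(\bar\Delta+|\Delta_s|)$, i.e. $|\Delta_s|\le\bar\Delta$, so $|\bar F-F^{(k)}|\le2\bar\Delta$. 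Back-substitution then yields $\|\beta^{(k)}-\bar\beta\|^2\le\gamma^2\bar\Delta$, the stated estimation bound; the over-selection bound follows by absorbing $|\bar F-F^{(k)}|\le2\bar\Delta$; and the objective bound comes from the lower bound of \eqref{eqn_RSCC} between $\beta^{(k)}$ and $\bar\beta$ --- after rearranging, $Q(\beta^{(k)})-Q(\bar\beta)\le-\langle\nabla Q(\beta^{(k)}),\bar\beta-\beta^{(k)}\rangle-\tfrac{\rho_-(s)}{2}\|\beta^{(k)}-\bar\beta\|^2$, and bounding the inner product by $g\sqrt{|\bar F\setminus F^{(k)}|}\,\|\beta^{(k)}-\bar\beta\|$ and completing the square (with $|\bar F\setminus F^{(k)}|\le2\bar\Delta$) produces exactly $\tfrac{2\rho_+(1)\delta}{\rho_-(s)}\bar\Delta$. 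The main obstacle throughout is managing this circular dependence: the estimates must be sequenced so that the backward-step threshold (the $\delta^{(k)}/2$ bookkeeping through the inner while loop) and the $\nabla Q(\bar\beta)$ cross terms are controlled tightly enough for the constants to close --- the sharper treatment of those cross terms, which in the least-squares case of \citet{Zhang11} are merely the noise level, is precisely where the present bounds improve on \citet{Jalali11}.
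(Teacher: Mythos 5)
Your proof is correct, and most of its skeleton coincides with the paper's: you rederive the paper's Lemma~\ref{lem_obj_2} (the stopping test forces $\|\nabla Q(\beta^{(k)})\|_\infty\le\sqrt{2\rho_+(1)\delta}$) and Lemma~\ref{lem_gbc_obj} (the backward threshold, with $\delta^{(k)}\ge\delta$, forces $|F^{(k)}-\bar F|\le\tfrac{\rho_+(1)}{\delta}\|\beta^{(k)}-\bar\beta\|^2$), and your $\gamma$-threshold counting argument, the completion of the square for the objective bound, and the chaining that yields $\tfrac{\rho_-^2(s)}{8\rho_+^2(1)}|F^{(k)}-\bar F|\le|\bar F-F^{(k)}|\le2\bar\Delta$ are exactly the paper's steps. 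The genuine difference is how you reach the central inequality $\|\beta^{(k)}-\bar\beta\|^2\le\tfrac{8\rho_+(1)\delta}{\rho_-^2(s)}|\bar F-F^{(k)}|$: the paper first proves $Q(\beta^{(k)})\ge Q(\bar\beta)$ (Lemma~\ref{lem_bark_obj}, which already combines the backward bound with the lower bound on $\delta$) and then applies restricted strong convexity directly between $\beta^{(k)}$ and $\bar\beta$, whereas you route through the pivot $\tilde\beta=\hat\beta(F^{(k)}\cup\bar F)$, bound the two legs separately by adding the two one-sided RSCC inequalities, and use $\delta>\tfrac{4\rho_+(1)}{\rho_-^2(s)}\|\nabla Q(\bar\beta)\|_\infty^2$ to absorb the $\|\nabla Q(\bar\beta)\|_\infty$ cross term (fed through the over-selection bound), closing a self-referential inequality. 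The two uses of the $\delta$-hypothesis are equivalent in effect; your version avoids comparing objective values altogether and isolates where the $\nabla Q(\bar\beta)$ term enters, at the (mild) cost of needing $\tilde\beta\in\D_s$ and RSCC over $F^{(k)}\cup\bar F$, which Theorem~\ref{thm_main2_obj} supplies exactly as in the paper. One cosmetic remark: like the paper's own proof, you actually establish $\|\beta^{(k)}-\bar\beta\|^2\le\gamma^2\bar\Delta=\tfrac{16\rho_+(1)\delta}{\rho_-^2(s)}\bar\Delta$, while the theorem statement displays $16\rho_+^2(1)\delta/\rho_-^2(s)$; that discrepancy is in the paper's statement, not in your argument.
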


\begin{theorem}\label{thm_main1}
Let $s$ be any number that satisfies \eqref{eq:thm_1} and choose $\epsilon$ as in Theorem~\ref{thm_main2} for Algorithm~\ref{alg_FoBaobj} with FoBa-gdt. Consider the output $\beta^{(k)}$ and its support
set $F^{(k)}$. We have
\begin{align*}
\|\beta^{(k)}-\bar\beta\|^2 \leq& {8\epsilon^2\over \rho_-^2(s)}\bar\Delta,\\
Q(\beta^{(k)})-Q(\bar\beta)\leq& {\epsilon^2\over \rho_-(s)}\bar\Delta,\\
{\rho_-(s)^2\over 8\rho_+(1)^2}|F^{(k)}-\bar{F}|\leq&
|\bar{F}-F^{(k)}|\leq 2\bar\Delta,
\end{align*}
where $\gamma={2\sqrt{2}\epsilon\over \rho_-(s)}$ and $\bar\Delta :=
|\{j\in \bar{F}-F^{(k)}: |\bar{\beta}_j|<\gamma\}|$.
\end{theorem}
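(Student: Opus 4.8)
\noindent\textbf{Proof strategy for Theorem~\ref{thm_main1}.}
The plan is to derive the three bounds in the order \emph{feature selection} $\to$ \emph{objective} $\to$ \emph{estimation}, since the objective and estimation bounds both reduce, through restricted strong convexity, to the inequality $|\bar F-F^{(k)}|\le 2\bar\Delta$. I first collect ingredients. By Theorem~\ref{thm_main2} the algorithm terminates with $k=|F^{(k)}|\le s-\bar k$, so $|F^{(k)}\cup\bar F|\le s$ and every invocation of \eqref{eqn_RSCC} at sparsity level $s$ is legitimate. Two facts are used repeatedly: $\beta^{(k)}=\hat\beta(F^{(k)})$ minimizes $Q$ on its support, so $\nabla Q(\beta^{(k)})$ vanishes on $F^{(k)}$; and $\bar\beta=\hat\beta(\bar F)$ (it minimizes $Q$ over all vectors of support size $\le\bar k=|\bar F|$ and has support exactly $\bar F$), so $\nabla Q(\bar\beta)$ vanishes on $\bar F$. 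Write $a:=|F^{(k)}-\bar F|$, $m:=|\bar F-F^{(k)}|$, $\epsilon':=\|\nabla Q(\bar\beta)\|_\infty$; the hypothesis on $\epsilon$ reads $\epsilon'<\tfrac{\rho_-(s)}{2\sqrt2\,\rho_+(1)}\epsilon$.

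\textbf{Step 1 (two-sided RSCC estimate).} Applying the lower-bound half of \eqref{eqn_RSCC} to the pair $(\beta^{(k)},\bar\beta)$ in both directions and adding yields $\langle\nabla Q(\bar\beta)-\nabla Q(\beta^{(k)}),\,\bar\beta-\beta^{(k)}\rangle\ge\rho_-(s)\|\bar\beta-\beta^{(k)}\|^2$. Restricted to the support $F^{(k)}\cup\bar F$, the two gradient-vanishing facts leave $\nabla Q(\beta^{(k)})$ supported on $\bar F-F^{(k)}$ and $\nabla Q(\bar\beta)$ on $F^{(k)}-\bar F$, with sup-norms $<\epsilon$ (stopping rule) and $=\epsilon'$ respectively; bounding the two inner products on those blocks by $\epsilon\sqrt m\,\|\bar\beta-\beta^{(k)}\|$ and $\epsilon'\sqrt a\,\|\bar\beta-\beta^{(k)}\|$ and dividing out gives
\[
\rho_-(s)\,\|\beta^{(k)}-\bar\beta\|\ \le\ \epsilon'\sqrt a+\epsilon\sqrt m .
\]
\textbf{Step 2 (spurious features $a$).} A forward step runs only after the stopping test fails, so the selected coordinate has gradient magnitude $\ge\epsilon$, and the RSCC bound at sparsity $1$ forces each forward decrease to satisfy $\delta^{(\cdot)}\ge\epsilon^2/(2\rho_+(1))$. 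The backward acceptance rule then guarantees that at termination every $i\in F^{(k)}$ has removal cost $Q(\beta^{(k)}-\beta^{(k)}_ie_i)-Q(\beta^{(k)})\ge\delta^{(k)}/2\ge\epsilon^2/(4\rho_+(1))$; comparing with the RSCC \emph{upper} bound on that removal cost (using that $\nabla Q(\beta^{(k)})$ vanishes on $F^{(k)}$) gives $(\beta^{(k)}_i)^2\ge\epsilon^2/(2\rho_+(1)^2)$. Summing over $i\in F^{(k)}-\bar F$, using $\|\beta^{(k)}_{F^{(k)}-\bar F}\|\le\|\beta^{(k)}-\bar\beta\|$, Step 1, and $2\epsilon'^2<\rho_-(s)^2\epsilon^2/(4\rho_+(1)^2)$, yields $a\le\tfrac{8\rho_+(1)^2}{\rho_-(s)^2}\,m$, the left inequality of the feature-selection bound.

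\textbf{Step 3 (missed features $m$, then the two error bounds).} Let $B:=\{j\in\bar F-F^{(k)}:|\bar\beta_j|\ge\gamma\}$, so $|B|=m-\bar\Delta$. Since $\beta^{(k)}$ vanishes on $B$, $\gamma\sqrt{|B|}\le\|\bar\beta_B\|\le\|\beta^{(k)}-\bar\beta\|$; feeding in Step 1, the bound $\sqrt a\le\tfrac{2\sqrt2\rho_+(1)}{\rho_-(s)}\sqrt m$ from Step 2, and $\epsilon'<\tfrac{\rho_-(s)}{2\sqrt2\rho_+(1)}\epsilon$ gives $\gamma\sqrt{|B|}<\tfrac{2\epsilon}{\rho_-(s)}\sqrt m$; with $\gamma=\tfrac{2\sqrt2\epsilon}{\rho_-(s)}$ this is $2|B|<m$, i.e. $m\le 2\bar\Delta$. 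For the objective error, the RSCC lower bound at $(\beta^{(k)},\bar\beta)$, vanishing of $\nabla Q(\beta^{(k)})$ on $F^{(k)}$, and the stopping rule give $Q(\beta^{(k)})-Q(\bar\beta)\le\epsilon\sqrt m\,\|\beta^{(k)}-\bar\beta\|-\tfrac{\rho_-(s)}{2}\|\beta^{(k)}-\bar\beta\|^2\le\tfrac{\epsilon^2m}{2\rho_-(s)}\le\tfrac{\epsilon^2\bar\Delta}{\rho_-(s)}$. For the estimation error I split on the sign of $Q(\beta^{(k)})-Q(\bar\beta)$: if it is $\ge 0$, the same displayed inequality forces $\|\beta^{(k)}-\bar\beta\|\le\tfrac{2\epsilon\sqrt m}{\rho_-(s)}$, hence $\le\tfrac{8\epsilon^2\bar\Delta}{\rho_-(s)^2}$ after squaring and using $m\le 2\bar\Delta$; if it is $<0$, I instead use the RSCC lower bound \emph{around $\bar\beta$}, i.e. $\tfrac{\rho_-(s)}{2}\|\beta^{(k)}-\bar\beta\|^2\le\langle\nabla Q(\bar\beta),\beta^{(k)}-\bar\beta\rangle\le\epsilon'\sqrt a\,\|\beta^{(k)}-\bar\beta\|$, so $\|\beta^{(k)}-\bar\beta\|^2\le\tfrac{4\epsilon'^2a}{\rho_-(s)^2}$, and plugging in $\epsilon'^2<\tfrac{\rho_-(s)^2\epsilon^2}{8\rho_+(1)^2}$ with $a\le\tfrac{16\rho_+(1)^2\bar\Delta}{\rho_-(s)^2}$ from Steps~2--3 again gives $\tfrac{8\epsilon^2\bar\Delta}{\rho_-(s)^2}$.

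\textbf{Main obstacle.} The one genuinely delicate point is the backward-step bookkeeping invoked at the start of Step~2: one must verify the invariant that at termination every feature currently in $F^{(k)}$ has removal cost at least $\delta^{(k)}/2$, where the value $\delta^{(k)}$ attached to a level $k$ may have been recorded several iterations earlier and the forward/backward sweeps are interleaved. This is exactly the subtle part of the adaptive-FoBa analysis of \citet{Zhang11}, and here it must be combined with the uniform lower bound $\delta^{(\cdot)}\ge\epsilon^2/(2\rho_+(1))$ that the gradient-based selection rule supplies. Everything else is RSCC algebra, and the explicit numerical constants in the theorem are precisely those that make the three chained estimates close. (The proof of Theorem~\ref{thm_main1_obj} is parallel, with the per-step objective-reduction threshold $\delta$ playing the role of $\epsilon^2/(2\rho_+(1))$.)
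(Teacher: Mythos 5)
Your proposal is correct: all three bounds follow, and the constants close exactly as you claim (I checked the absorption $2\epsilon'^2<\rho_-(s)^2\epsilon^2/(4\rho_+(1)^2)$ in Step 2, the cancellation $\epsilon'\sqrt a<\epsilon\sqrt m$ in Step 3, and the case-$<0$ estimation bound $\tfrac{4\epsilon'^2 a}{\rho_-(s)^2}\le\tfrac{8\epsilon^2\bar\Delta}{\rho_-(s)^2}$). It rests on the same two pillars as the paper's proof: the stopping rule $\|\nabla Q(\beta^{(k)})\|_\infty<\epsilon$ and the backward-criterion bound $\|\beta^{(k)}_{F^{(k)}-\bar F}\|^2\ge\tfrac{\epsilon^2}{2\rho_+(1)^2}|F^{(k)}-\bar F|$ (your per-coordinate Step 2 is exactly the paper's Lemma~\ref{lem_gbc}, and your ``main obstacle'' is handled there simply by reading off the exit condition of the backward while-loop together with $\delta^{(k)}\ge\epsilon^2/(2\rho_+(1))$ from the forward step, so it is less delicate than in the adaptive analysis of \citet{Zhang11}). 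Where you differ is in the organization: the paper first proves Lemma~\ref{lem_bark}, showing the threshold on $\epsilon$ forces $Q(\beta^{(k)})\ge Q(\bar\beta)$ (by contradiction, via Lemma~\ref{lem_gbc}), and then runs a one-sided RSCC argument around $\beta^{(k)}$ to get $\|\bar\beta-\beta^{(k)}\|\le\tfrac{2\epsilon}{\rho_-(s)}\sqrt{|\bar F-F^{(k)}|}$, deriving $|\bar F-F^{(k)}|\le2\bar\Delta$, then the error bounds, and only at the end the bound on $|F^{(k)}-\bar F|$. You instead use the symmetrized RSCC (gradient-monotonicity) inequality, which keeps both gradient terms $\epsilon'\sqrt a$ and $\epsilon\sqrt m$ and is unconditional, bound the spurious set $a$ in terms of $m$ \emph{first}, and replace Lemma~\ref{lem_bark} by a two-case split only for the estimation bound. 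Your route buys an unconditional intermediate inequality and avoids a separate contradiction lemma; the paper's route isolates $Q(\beta^{(k)})\ge Q(\bar\beta)$ as a reusable fact (it is also needed in the termination proof of Theorem~\ref{thm_main2}) and keeps the main chain one-sided, which makes the FoBa-obj and FoBa-gdt proofs line up verbatim. Either way the final constants coincide with the theorem statement.
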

Although FoBa-obj and FoBa-gdt use different criteria to evaluate the
``goodness'' of each feature, they actually guarantee the same
properties. Choose $\epsilon^2$ and $\delta$ in the order of $\Omega(\|\nabla Q(\bar{\beta})\|^2_{\infty})$. For both algorithms, we have that the estimation error $\|\beta^{(k)}-\bar{\beta}\|^2$ and the objective error $Q(\beta^{(k)})-Q(\bar\beta)$ are bounded by $\Omega(\bar\Delta \|\nabla Q(\bar{\beta})\|^2_{\infty})$, and the feature selection errors $|F^{(k)}-\bar{F}|$ and $|\bar{F}-F^{(k)}|$ are bounded by $\Omega(\bar\Delta)$. $\|\nabla Q(\bar\beta)\|_\infty$ and $\bar\Delta$ are two key factors in these bounds. $\|\nabla Q(\bar\beta)\|_\infty$ roughly represents the noise level\footnote{To see this, we can consider the least square case (with standard noise assumption and each column of the measurement matrix $X\in \mathbb{R}^{n\times d}$ is normalized to 1): $\|\nabla Q(\bar\beta)\|_\infty \leq \Omega(\sqrt{n^{-1}\log d}\sigma)$ holds with high probability, where $\sigma$ is the standard derivation.}. $\bar{\Delta}$ defines the number of weak channels of the true solution $\bar\beta$ in $\bar{F}$. One can see that if all channels of $\bar{\beta}$ on $\bar{F}$ are strong enough, that is, $|\bar{\beta}_j| > \Omega(\|\nabla Q(\bar{\beta})\|_\infty)~\forall j\in\bar{F}$, $\bar\Delta$ turns out to be $0$. In other words, all errors (estimation error, objective error, and feature selection error) become $0$, when the signal noise ratio is big enough. Note that under this condition, the original NP hard problem~\eqref{eq:intro1} is solved exactly, which is summarized in the following corollary:
\begin{corollary}
Let $s$ be any number that satisfies \eqref{eq:thm_1} and choose $\delta$ (or $\epsilon$) as in Theorem~\ref{thm_main2_obj} (or \ref{thm_main2}) for Algorithm~\ref{alg_FoBaobj} with FoBa-gdt (or FoBa-obj). If $${|\bar{\beta}_j| \over \|\nabla Q(\bar{\beta})\|_{\infty}} \ge {8\rho_+(1)\over \rho_-^2(s)}\quad \forall j\in \bar{F},$$ then problem~\eqref{eq:intro1} can be solved exactly.
\end{corollary}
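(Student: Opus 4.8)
The corollary is an immediate consequence of Theorems~\ref{thm_main1_obj} and \ref{thm_main1}: the plan is to verify that the stated hypothesis forces the quantity $\bar\Delta$ appearing in both theorems to vanish, after which all three error bounds (estimation, objective, and feature selection) collapse to $0$ and the algorithm's output must coincide with $\bar\beta$, the minimizer in \eqref{eq:intro1}.

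First I would compute the threshold $\gamma$ explicitly for each algorithm by substituting the prescribed value of $\delta$ (resp. $\epsilon$). For FoBa-obj, taking $\delta$ down to its lower limit $\frac{4\rho_+(1)}{\rho_-(s)^2}\|\nabla Q(\bar\beta)\|_\infty^2$ gives
\[
\gamma=\frac{4\sqrt{\rho_+(1)\delta}}{\rho_-(s)}=\frac{8\rho_+(1)}{\rho_-(s)^2}\,\|\nabla Q(\bar\beta)\|_\infty,
\]
and for FoBa-gdt, taking $\epsilon$ down to $\frac{2\sqrt2\,\rho_+(1)}{\rho_-(s)}\|\nabla Q(\bar\beta)\|_\infty$ yields the identical value $\gamma=\frac{2\sqrt2\,\epsilon}{\rho_-(s)}=\frac{8\rho_+(1)}{\rho_-(s)^2}\|\nabla Q(\bar\beta)\|_\infty$. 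Thus the hypothesis of the corollary is precisely $|\bar\beta_j|\ge\gamma$ for every $j\in\bar F$, so the index set $\{\,j\in\bar F-F^{(k)}:|\bar\beta_j|<\gamma\,\}$ is empty and $\bar\Delta=0$.

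It then remains to read off the conclusion. By Theorem~\ref{thm_main2_obj} (resp. \ref{thm_main2}), the chosen $\delta$ (resp. $\epsilon$) together with an $s$ satisfying \eqref{eq:thm_1} guarantees that the algorithm terminates at some finite $k$, so its output $\beta^{(k)}$ with support $F^{(k)}$ is well defined. Plugging $\bar\Delta=0$ into Theorem~\ref{thm_main1_obj} (resp. \ref{thm_main1}) gives $\|\beta^{(k)}-\bar\beta\|^2\le 0$, hence $\beta^{(k)}=\bar\beta$, and $|\bar F-F^{(k)}|\le 2\bar\Delta=0$ together with $\frac{\rho_-(s)^2}{8\rho_+(1)^2}|F^{(k)}-\bar F|\le 0$ forces $F^{(k)}=\bar F$. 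Since $\bar\beta$ is by definition the optimal solution of the cardinality-constrained problem \eqref{eq:intro1}, the algorithm has produced an exact solution.

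The only delicate point is the mismatch between the strict inequality in the choice of $\delta$ (resp. $\epsilon$) in Theorems~\ref{thm_main2_obj} and \ref{thm_main2} and the non-strict inequality in the hypothesis; I would resolve it by letting $\delta$ (resp. $\epsilon$) approach its lower bound, so that $\gamma$ can be made arbitrarily close to $\frac{8\rho_+(1)}{\rho_-(s)^2}\|\nabla Q(\bar\beta)\|_\infty$ while $\bar\Delta$ remains $0$. This is cosmetic; the substance of the argument is entirely contained in Theorems~\ref{thm_main1_obj} and \ref{thm_main1}, so there is no real obstacle here — the corollary is a specialization of those bounds to the regime $\bar\Delta=0$.
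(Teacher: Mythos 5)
Your proposal is correct and is essentially the paper's own argument: the corollary is stated as an immediate consequence of Theorems~\ref{thm_main1_obj} and~\ref{thm_main1}, obtained exactly by noting that the hypothesis makes $\bar\Delta=0$ so all three error bounds vanish and $\beta^{(k)}=\bar\beta$, $F^{(k)}=\bar F$. The only wrinkle you flag (non-strict ``$\ge$'' in the hypothesis versus the strict choice of $\delta$ or $\epsilon$, which makes $\gamma$ strictly exceed the threshold) is an imprecision in the paper's own statement rather than in your reasoning, and your reading of it as the intended boundary case is the right one.
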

One may argue that since it is difficult to set $\delta$ or $\epsilon$, it is still hard to solve \eqref{eq:intro1}. In practice, one does not have to set $\delta$ or $\epsilon$ and only needs to run Algorithm~\ref{alg_FoBaobj} without checking the stopping condition until all features are selected. Then the most recent $\beta^{(\bar{k})}$ gives the solution to \eqref{eq:intro1}.

\subsection{Comparison for the General Convex Case}
\citet{Jalali11} analyzed FoBa-obj for general convex smooth functions and here we compare our results to theirs. They chose the true model 
$\beta^*$ as the target rather than the true solution $\bar\beta$.
In order to simplify the comparison, we assume that the distance
between the true solution and the true model is not too
great\footnote{This assumption is not absolutely fair, but holds in
many cases, such as in the least square case, which will be made clear in
Section~\ref{sec:result:3}.}, that is, we have $\beta^*\approx
\bar\beta$, $\rm{supp}(\beta^*)=\rm{supp}(\bar{\beta})$, and $\|\nabla Q(\beta^*)\|_\infty \approx \|\nabla
Q(\bar\beta)\|_\infty$. We compare our results from
Section~\ref{sec:result:1} and the results in \citep{Jalali11}. In
the estimation error comparison, we have from our results:
\begin{align*}
&\|\beta^{(k)}-\beta^*\|\approx \|\beta^{(k)}-\bar\beta\| \\
\leq & \Omega({\bar\Delta}^{1/2}\|\nabla Q(\bar\beta)\|_\infty)
\approx  \Omega({\bar\Delta}^{1/2}\|\nabla Q(\beta^*)\|_\infty)
\end{align*}
and from the results in \citep{Jalali11}: $\|\beta^{(k)}-\beta^*\| \leq \Omega(\bar{k}\|\nabla Q(\beta^*)\|_\infty).$
Note that ${\Delta}^{1/2} \leq \bar{k}^{1/2} \ll \bar{k}$.
Therefore, {\rt under our assumptions with respect to $\beta^*$} and $\bar\beta$,
our analysis gives a tighter bound. Notably, when there are a large number of
strong channels in $\bar\beta$ (or approximately $\beta^*$), we will have
$\bar{\Delta} \ll \bar{k}$.

Let us next consider the condition {\rt required for feature selection
consistency, that is,
$\rm{supp}(F^{(k)})=\rm{supp}(\bar{F})=\rm{supp}(\beta^*)$. We have from our results:
$$\|\bar{\beta}_j\| \geq \Omega(\|\nabla Q(\bar\beta)\|_\infty)~\forall j \in \rm{supp}(\beta^*)$$and from the results in \citep{Jalali11}:
$$\|{\beta}^*_j\| \geq \Omega(\bar{k}\|\nabla Q(\beta^*)\|_\infty)~\forall j \in \rm{supp}(\beta^*).$$
When $\beta^*\approx \bar\beta$ and $\|\nabla Q(\beta^*)\|_\infty
\approx \|\nabla Q(\bar\beta)\|_\infty$, our results guarantee feature selection consistency under a weaker condition.}

\subsection{A Special Case: Least Square Loss} \label{sec:result:3}
We next consider the least square case: $Q(\beta)={1\over
2}\|X\beta-y\|^2$ and shows that our analysis for the two algorithms in
Section~\ref{sec:result:1} fills in a theoretical gap between this
special case and the general convex smooth case.

{\rt Following previous studies \citep{Candes05, Zhang11a, ZhaoY06}}, we
assume that $y=X\beta^* +\varepsilon$ where the entries in
$\varepsilon$ are independent random sub-gaussian variables,
$\beta^*$ is the true model with the support set $\bar{F}$ and
the sparsity number $\bar{k}:=|\bar{F}|$, and $X\in
\mathbb{R}^{n\times d}$ is normalized as $\|X_{.i}\|^2=1$ for all
columns $i=1,\cdots,d$. We then have following inequalities with
high probability \citep{Zhang09}:
\begin{align}
&\|\nabla Q(\beta^*)\|_\infty = \|X^T\varepsilon\|_\infty \leq \Omega(\sqrt{n^{-1}\log d}), \label{eqn_gQstar}
\\
&\|\nabla Q(\bar\beta)\|_\infty \leq \Omega(\sqrt{n^{-1}\log d}), \label{eqn_gQbar}
\\
&\|\bar\beta - \beta^*\|_2 \leq \Omega(\sqrt{n^{-1}\bar{k}}), \label{eq:secResult1}
\\
&\|\bar\beta-\beta^*\|_\infty \leq \Omega(\sqrt{n^{-1}\log \bar{k}}), \label{eq:secResult2}
\end{align}
implying that $\bar\beta$ and $\beta^*$ are quite close when
the true model is really sparse, that is, when $\bar{k}\ll n$.

An analysis for FoBa-obj in the least square case \citep{Zhang11} has
indicated that the following estimation error bound holds with high probability:
\begin{align}
&\|\beta^{(k)}-\beta^*\|^2 \leq \Omega(n^{-1}(\bar{k}+ \nonumber
\\
\quad &\log d|\{j\in \bar{F}:~|\beta_j^*|\leq \Omega(\sqrt{n^{-1}\log d})\}|))
\label{eq:secIntro1}
\end{align}
as well as the following condition for feature selection consistency: 
if $|{\beta}_j^*|\geq \Omega(\sqrt{n^{-1}\log d})~\forall j\in \bar{F}$, then
\begin{align}
{\rm supp}(\beta^{(k)})={\rm supp}(\beta^*)
\label{eq:secIntro2}
\end{align}
Applying the analysis for general convex smooth
cases in \citep{Jalali11} to the least square case, one obtains the following estimation
error bound from Eq.~\eqref{eqn_gQstar}
\begin{equation*}
\begin{aligned}
\|\beta^{(k)}-\beta^*\|^2 \leq \Omega(\bar{k}^2\|\triangledown
Q(\beta^*)\|_\infty^2) \leq \Omega({n^{-1} \bar{k}^2 \log d})
\end{aligned}
\end{equation*}
and the following condition of feature selection consistency: if $|{\beta}_j^*|\geq
\Omega({\sqrt{\bar{k}n^{-1}\log d}})~\forall~j\in \bar{F}$, then
\begin{equation*}
\begin{aligned}
\rm{supp}(\beta^{(k)})=\rm{supp}(\beta^*).
\end{aligned}
\end{equation*}
One can observe that the general analysis gives a looser bound for
estimation error and requires a stronger condition for feature
selection consistency than the analysis for the special case.

Our results in Theorems~\ref{thm_main1_obj} and~\ref{thm_main1} bridge this gap when combined with Eqs.~\eqref{eq:secResult1} and \eqref{eq:secResult2}.
The first inequalities in Theorems~\ref{thm_main1_obj} and~\ref{thm_main1} indicate that
{\small
\begin{align*}
&\|\beta^{(k)} - \beta^*\|^2 \leq (\|\beta^{(k)}-\bar\beta\| + \|\bar\beta - \beta^*\|)^2 \\
\leq &\Omega\bigg(n^{-1}(\bar{k}+ \log d~|~\{j\in \bar{F}-F^{(k)}:\\
&~|\bar{\beta}_j|<\Omega(n^{-1/2}\sqrt{\log d})\}|)\bigg)~~~~[\text{from Eq.~\eqref{eq:secResult1}}]\\
\leq &\Omega\bigg(n^{-1}(\bar{k}+ \log d~|~\{j\in \bar{F}-F^{(k)}:\\
&~|\beta^*_j|<\Omega(n^{-1/2}\sqrt{\log d})\}|)\bigg)~~~~[\text{from
Eq.~\eqref{eq:secResult2}}]
\end{align*}
}
which is consistent with the results in Eq.~\eqref{eq:secIntro1}. The
last inequality in Theorem~\ref{thm_main1} also implies that feature selection consistency is guaranteed as well, as long as
$|\bar\beta_j|>\Omega(\sqrt{n^{-1}\log d})$ (or $|\beta^*_j|>\Omega(\sqrt{n^{-1}\log d})$) for all $j\in \bar{F}$. This
requirement agrees with the results in
Eq.~\eqref{eq:secIntro2}. 

\section{Application: Sensor Selection for Human Activity Recognition}
Machine learning technologies for smart home systems and home energy
management systems have recently attracted much attention. Among the
many promising applications such as optimal energy control,
emergency alerts for elderly persons living alone, and automatic
life-logging, a fundamental challenge for these applications is to
recognize human activity at homes, with the smallest number of
sensors. The data mining task here is to minimize the number of
sensors without significantly worsening recognition accuracy. We
used pyroelectric sensors, which return binary signals in reaction
to human motion.

Fig.~\ref{fig:sensor position} shows our experimental room layout
and sensor locations. The numbers represent sensors, and the
ellipsoid around each represents the area covered by it. We used 40 sensors, i.e., we observe a 40-dimensional binary time series. A single person lives in the room
for roughly one month, and data is collected on the basis of manually tagging his
activities into the pre-determined 14 categories summarized in
Table~\ref{table:activities}. For data preparation
reasons, we use the first 20\%~(roughly one week) samples in the
data, and divide it into 10\% for training and 10\% for testing. The
numbers of training and test samples are given in
Table~\ref{fig:sensor position}.

Pyroelectric sensors are preferable over cameras for two
practical reasons: cameras tend to create a psychological barrier
and pyroelectric sensors are much cheaper and easier to implement at
homes. Such sensors only observe noisy binary location information.
This means that, for high recognition accuracy, history (sequence)
information must be taken into account. The binary time series data follows a linear-chain conditional random field (CRF)~\citep{LaffertyMP01, Sutton06}. Linear-chain CRF gives a smooth and convex loss function; see Appendix~\ref{sec:CRFsyn} for more details of CRF.

Our task then is sensor selection on the basis of noisy binary time
series data, and to do this we apply our FoBa-gdt-CRF (FoBa-gdt with CRF objective function). Since it is very expensive to evaluate the CRF objective value and its gradient, FoBa-obj becomes impractical in this case (a large number of optimization problems in the forward step make it computationally very expensive). Here, we
consider a sensor to have been ``used'' if at least one feature
related to it is used in the CRF. Note that we have 14
activity-signal binary features~(i.e., indicators of sensor/activity
simultaneous activations) for each single sensor, and therefore we
have $40 \times 14 = 560$ such features in total. In addition, we
have {\lt $14 \times 14 = 196$} activity-activity binary
features~(i.e., indicators of the activities at times $t-1$ and $t$).
As explained in Section~\ref{sec:CRFsyn}, we only enforced sparsity
on the first type of features.

\begin{figure}[t]
  \centering
 \vspace{-3mm}
   \subfigure{\includegraphics[width=0.45\textwidth, height=0.45\textwidth]{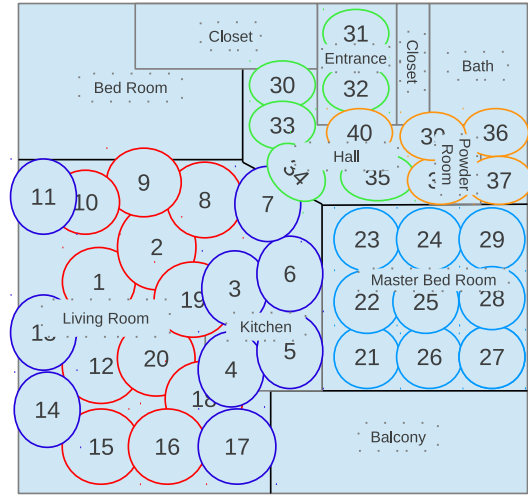}}    
\vspace{-10mm}
\caption{Room layout and sensor locations.}    \label{fig:sensor position}
\vspace{-5mm}
\end{figure}

\begin{table}[t]
{\tiny
\caption{Activities in the sensor data set}
\begin{center}
  \begin{tabular}{  c  c  c  }
   \\ \toprule
   ID & Activity & train / test samples\\ \midrule
   1 & Sleeping & 81K / 87K \\
   2  &  Out of Home (OH) & 66K / 42K  \\
   3  &  Using Computer & 64K / 46K  \\
   4  &  Relaxing & 25K / 65K  \\
   5  &  Eating &  6.4K / 6.0K  \\
   6  &  Cooking  & 5.2K / 4.6K  \\
   7  &  Showering (Bathing) & 3.9K / 45.0K \\
   8  &  No Event& 3.4K / 3.5K \\
   9  &  Using Toilet & 2.5K / 2.6K  \\
   10  &  Hygiene (brushing teeth, etc.)& 1.6K / 1.6K \\
   11  &  Dishwashing &  1.5K /1.8K   \\
   12 &  Beverage Preparation & 1.4K / 1.4K \\
   13 &  Bath Cleaning/Preparation & 0.5K / 0.3K  \\
   14 &  Others  & 6.5K / 2.1K \\ \midrule
   Total &  -  & 270K / 270K \\ \bottomrule
  \end{tabular}
  \end{center}
}  
\label{table:activities}
\vspace{-5mm}
\end{table}

First we compare FoBa-gdt-CRF with Forward-gdt-CRF (Forward-gdt
with CRF loss function) and L1-CRF\footnote{L1-CRF solves the
optimization problem with CRF loss + L1 regularization. Since it is
difficult to search the whole space L1 regularization parameter value space, we
investigated a number of discrete values.} in terms of test
recognition error over the number of sensors selected~(see the top
of Fig.~\ref{fig:sensor accuracy}). We can observe that
\begin{itemize}[noitemsep,nolistsep,leftmargin=*]
\item The performance for all methods get improved when the umber of sensors increases.
\item FoBa-gdt-CRF and Forward-gdt-CRF achieve comparable performance. However, FoBa-gdt-CRF reduces the error rate slightly faster, in terms of the number of sensors.
\item FoBa-gdt-CRF achieves its best performance with 14-15 sensors while Forward-gdt-CRF needs 17-18 sensors to achieve the same error level. We obtain sufficient accuracy by using fewer than 40 sensors.
\item FoBa-gdt-CRF consistently requires fewer features than Forward-gdt-CRF to achieve the same error level when using the same number of sensors.
\end{itemize}
\begin{figure}[t]
  \centering
    \subfigure{\includegraphics[scale=0.32]{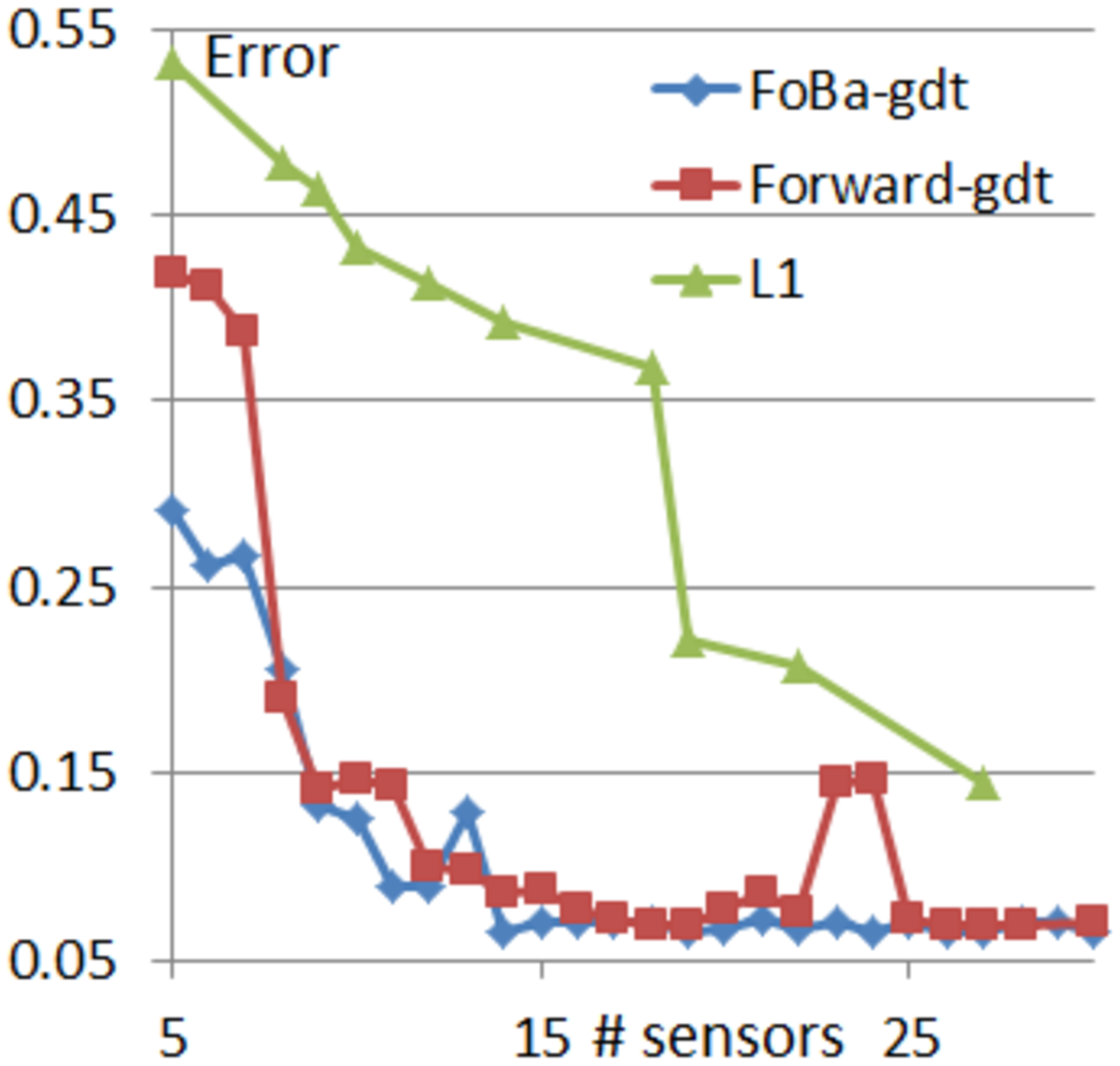}} 
    \subfigure{\includegraphics[scale=0.32]{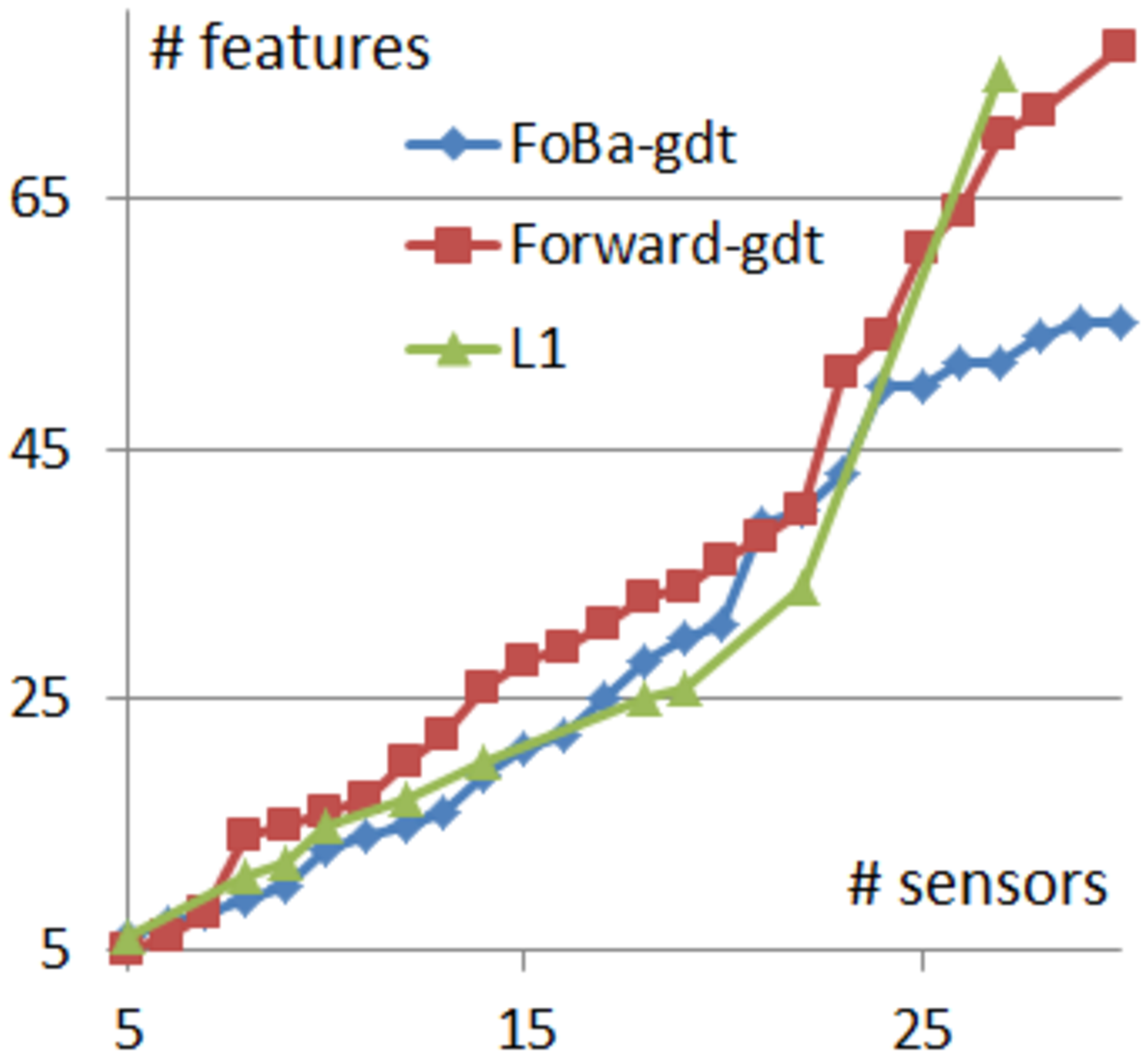}}\\
    \subfigure{\includegraphics[scale=0.80]{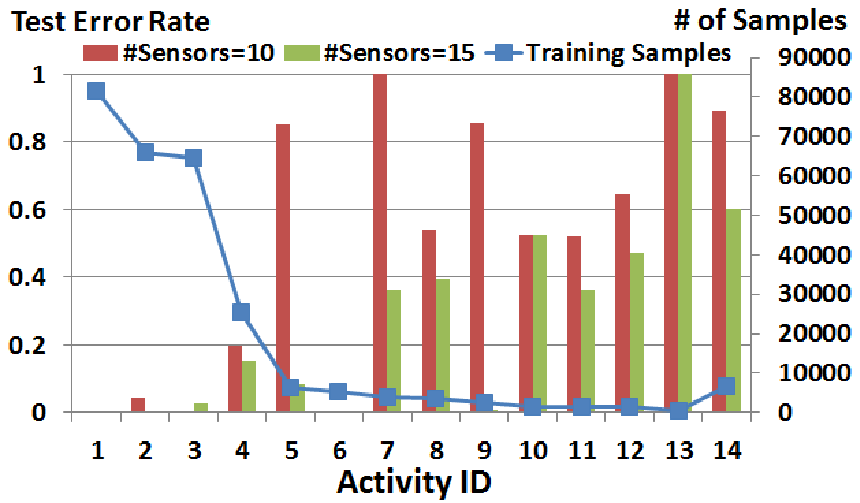}}
    \vspace{-5mm}
    \caption{Top: comparisons of FoBa-gdt-CRF, Forward-gdt-CRF and L1-CRF. Bottom: test error rates~(FoBa-gdt-CRF) for individual activities.}
    \label{fig:sensor accuracy}
\vspace{-5mm}
\end{figure}

We also analyze the test error rates of FoBa-gdt-CRF for individual
activities. We consider two cases with the number of sensors being 10 and 15, and entered
their test error rates for each individual activity in the bottom of
Fig.~\ref{fig:sensor accuracy}. We observe that:
\begin{itemize}[noitemsep,nolistsep,leftmargin=*]
\item The high frequency activities (e.g., activities \{1,2,3,4\}) are well recognized in both cases. In other words, FoBa-gdt-CRF is likely to select sensors (features) which contribute to the discrimination of high frequency activities.
\item The error rates for activities \{5, 7, 9\} significantly improve when the number of sensors increases from 10 to 15. Activities 7 and 9 are {\it Showering} and {\it Using Toilet}, and the use of additional sensors \{36, 37, 40\} {\lt seems} to have contributed to this improvement. 
Also, a dinner table was located near sensor 2, which is why the error rate w.r.t. activity 5 (Eating) significantly decreases from the case \# of sensors=10 to \# of sensors=15 by including sensor 2. 
\end{itemize}

\begin{table}[t]
{\tiny
\begin{center}
\caption{Sensor IDs selected by FoBa-gdt-CRF.}
\centering
  \begin{tabular}{  c  c  }
   \\ \toprule
   \# of sensors=10 &  \{1, 4, 5, 9, 10, 13, 19, 28, 34, 38\} \\
   \# of sensors=15 &  \{\# of sensors=10\} + \{2, 7, 36, 37, 40\} \\ \bottomrule
  \end{tabular}
\end{center}
}
\label{table:selected sensors}
\vspace{-5mm}
\end{table}


\section{Conclusion}

This paper considers two forward-backward greedy methods, a state-of-the-art greedy method FoBa-obj and its variant FoBa-gdt which is more efficient than FoBa-obj, for solving sparse feature selection problems with general convex smooth functions. We systematically analyze the theoretical properties of both algorithms. Our main contributions include: (i) We derive better theoretical bounds for FoBa-obj and FoBa-gdt than existing analyses regarding FoBa-obj in \citep{Jalali11} for general smooth convex functions. Our result also suggests that the NP hard problem~\eqref{eq:intro1} can be solved by FoBa-obj and FoBa-gdt if the signal noise ratio is big enough; (ii) Our new bounds are consistent with the bounds of a special case (least squares)~\citep{Zhang11} and fills a previously existing theoretical gap for general convex smooth functions \citep{Jalali11}; (iii) We provide the condition to satisfy the restricted strong convexity condition in commonly used machine learning problems; 
(iv) We apply FoBa-gdt (with the conditional random field objective) to the sensor selection problem for human indoor activity recognition and our results show that FoBa-gdt can successfully remove unnecessary sensors and is able to select more valuable sensors than other methods (including the ones based on forward greedy selection and L1-regularization). As for the future work, we plan to extend FoBa algorithms to minimize a general convex smooth function over a low rank constraint.  



\section{Acknowledgements}
We would like to sincerely thank Professor Masamichi Shimosaka of the
University of Tokyo for providing sensor data collected in
his research and Professor Stephen Wright of the University of
Wisconsin-Madison for constructive comments and helpful advice.
The majority of the work reported here was done during the internship of the first
author at NEC Laboratories America, Cupertino, CA.

\clearpage
{
\bibliographystyle{icml2014}
\bibliography{reference}
}

\clearpage
\appendix
\onecolumn{
\begin{center}
{\huge\bf
Appendix: Proofs
}
\end{center}
This appendix provides the proofs for our main results including the properties of FoBa-obj (Theorems~\ref{thm_main2_obj}, and \ref{thm_main1_obj}) and FoBa-gdt (Theorems~\ref{thm_main2} and \ref{thm_main1}), and analysis for the RSCC condition (Theorem~\ref{thm_kappa}).

Our proofs for Theorems~\ref{thm_main2_obj}, \ref{thm_main2}, \ref{thm_main1_obj}, and \ref{thm_main1} borrowed many tools from early literatures including \citet{Zhang11a}, \citet{Jalali11}, \citet{JohnsonJR12}, \citet{Zhang09}, \citet{Zhang11}. The key novelty in our proof is to develop a key property for the backward step (see Lemmas~\ref{lem_gbc_obj} and~\ref{lem_gbc}), which gives an upper bound for the number of wrong features included in the feature pool. By taking a full advantage of this upper bound, the presented proofs for our main results (i.e., Theorems~\ref{thm_main2_obj}, \ref{thm_main2}, \ref{thm_main1_obj}, and \ref{thm_main1}) are significantly simplified. It avoids the complicated induction procedure in the proof for the forward greedy method \citep{Zhang11a} and also improves the existing analysis for the same problem in \citet{Jalali11}. 

\section{Proofs of FoBa-obj}
First we prove the properties of the FoBa-obj algorithm, particularly Theorems~\ref{thm_main1_obj} and \ref{thm_main2_obj}.

Lemma~\ref{lem_obj_1} and Lemma~\ref{lem_obj_2} build up the dependence between the objective function change $\delta$ and the gradient $|\nabla Q(\beta)_j|$. Lemma~\ref{lem_gbc_obj} studies the effect of the backward step. Basically, it gives an upper bound of $|F^{(k)}-\bar{F}|$, which meets our intuition that the backward step is used to optimize the
size of the feature pool. Lemma~\ref{lem_bark_obj} shows that if $\delta$ is big enough, which
means that the algorithm terminates early, then $Q(\beta^{(k)})$ cannot be smaller than $Q(\bar\beta)$. Lemma~\ref{lem_gfs_obj} studies the forward step of FoBa-obj. It shows that the objective decreases sufficiently in every forward step, which together with Lemma~\ref{lem_bark_obj} (that is, $Q(\beta^{(k)})>Q(\bar\beta)$), implies that the algorithm will terminate within limited number of iterations. Based on these results, we provide the complete proofs of Theorems~\ref{thm_main1_obj} and~\ref{thm_main2_obj} at the end of this section. 

\begin{lemma} \label{lem_obj_1}
If $Q(\beta)-\min_\eta Q(\beta+\eta e_j) \geq \delta$, then we have
$$|\nabla Q(\beta)_j|\geq \sqrt{2\rho_-(1)\delta}.$$
\end{lemma}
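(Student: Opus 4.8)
The plan is to prove this by a one-line convexity/quadratic-lower-bound argument combined with completing the square. The only tool needed is the left inequality in the RSCC definition \eqref{eqn_RSCC} specialized to $s=1$.

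First I would fix the coordinate $j$, write $g := \nabla Q(\beta)_j$, and consider the one-dimensional function $\phi(\eta) := Q(\beta + \eta e_j)$. The increment $t = \eta e_j$ satisfies $\|t\|_0 \le 1$, so applying the lower RSCC bound in \eqref{eqn_RSCC} with $s = 1$ gives
\begin{equation*}
Q(\beta + \eta e_j) - Q(\beta) - \eta\, g \;\ge\; \frac{\rho_-(1)}{2}\,\eta^2 \qquad \text{for all } \eta .
\end{equation*}
Next I would minimize the right-hand side over $\eta$: the quadratic $\eta g + \tfrac{\rho_-(1)}{2}\eta^2$ is minimized at $\eta^\star = -g/\rho_-(1)$ with minimum value $-g^2/(2\rho_-(1))$. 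Hence
\begin{equation*}
\min_\eta Q(\beta + \eta e_j) \;\ge\; Q(\beta) - \frac{g^2}{2\rho_-(1)},
\end{equation*}
so that $Q(\beta) - \min_\eta Q(\beta + \eta e_j) \le g^2/(2\rho_-(1))$. Combining with the hypothesis $Q(\beta) - \min_\eta Q(\beta+\eta e_j) \ge \delta$ yields $g^2 \ge 2\rho_-(1)\delta$, i.e. $|\nabla Q(\beta)_j| \ge \sqrt{2\rho_-(1)\delta}$, which is the claim.

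There is no real obstacle here; the argument is essentially ``strong convexity of a restricted quadratic $\Rightarrow$ the optimal one-step decrease is at least $g^2/(2\rho_-(1))$.'' The only point requiring a word of care is the domain restriction: \eqref{eqn_RSCC} is only assumed on $\mathcal{D}_s$, so strictly speaking one should note that the iterate $\beta$ (and the perturbed point $\beta+\eta e_j$ for the relevant range of $\eta$) lies in the set where the bound is valid; in the context where this lemma is applied (the forward step, where $\beta = \beta^{(k)}$ and $Q(\beta^{(k)})\le Q(0)$ with $\|\beta^{(k)}\|_0$ small), this is automatic, so I would either state it as a standing assumption or simply invoke \eqref{eqn_RSCC} directly as the paper does elsewhere.
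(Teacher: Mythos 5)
Your proof is correct and follows essentially the same route as the paper's: both apply the lower RSCC bound with $s=1$ to the one-coordinate perturbation and complete the square in $\eta$ to get $Q(\beta)-\min_\eta Q(\beta+\eta e_j)\le |\nabla Q(\beta)_j|^2/(2\rho_-(1))$, which combined with the hypothesis gives the claim. Your added remark about the domain restriction $\mathcal{D}_s$ is a reasonable point of care that the paper's proof silently elides, but it does not change the argument.
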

\begin{proof}
From the condition, we have
\begin{align*}
-\delta \geq & \min_{\eta} Q(\beta+\eta e_j)-Q(\beta) \\
\geq & \min_{\eta} \langle \eta e_j, \nabla Q(\beta) \rangle + {\rho_-(1)\over 2}\eta^2 ~~~~(\text{from the definition of $\rho_-(1)$})\\
= & \min_{\eta} {\rho_-(1)\over 2}\left(\eta-{\nabla Q(\beta)_j\over \rho_-(1)}\right)^2 - {|\nabla Q(\beta)_j|^2\over 2\rho_-(1)} \\
= &-{|\nabla Q(\beta)_j|^2\over 2\rho_-(1)}.
\end{align*}
It indicates that $|\nabla Q(\beta)_j| \geq
\sqrt{2\rho_-(1)\delta}$.
\end{proof}
%

\begin{lemma} \label{lem_obj_2}
If $Q(\beta)-\min_{j,\eta} Q(\beta+\eta e_j) \leq \delta$, then we
have $$\|\nabla Q(\beta)\|_\infty\leq \sqrt{2\rho_+(1)\delta}.$$
\end{lemma}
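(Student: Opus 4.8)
The plan is to mirror the argument of Lemma~\ref{lem_obj_1}, but using the \emph{upper} inequality in the restricted strong convexity condition \eqref{eqn_RSCC} with $s=1$ instead of the lower one. First observe that the hypothesis $Q(\beta)-\min_{j,\eta}Q(\beta+\eta e_j)\le\delta$ is really a statement about \emph{every} coordinate: since $\min_{j,\eta}Q(\beta+\eta e_j)=\min_j\min_\eta Q(\beta+\eta e_j)$, the hypothesis is equivalent to $Q(\beta)-\min_\eta Q(\beta+\eta e_j)\le\delta$ holding for all $j$. So it suffices to fix an arbitrary coordinate $j$ and show $|\nabla Q(\beta)_j|\le\sqrt{2\rho_+(1)\delta}$; taking the maximum over $j$ then gives $\|\nabla Q(\beta)\|_\infty\le\sqrt{2\rho_+(1)\delta}$.

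For a fixed $j$, apply the right-hand inequality of \eqref{eqn_RSCC} with $t=\eta e_j$ (which has $\|t\|_0\le 1$), giving $Q(\beta+\eta e_j)\le Q(\beta)+\eta\,\nabla Q(\beta)_j+\tfrac{\rho_+(1)}{2}\eta^2$ for every $\eta$. Minimizing the right-hand side exactly over $\eta$ (the minimizer is $\eta^\star=-\nabla Q(\beta)_j/\rho_+(1)$) yields $\min_\eta Q(\beta+\eta e_j)\le Q(\beta)-\tfrac{|\nabla Q(\beta)_j|^2}{2\rho_+(1)}$; equivalently, the best single-coordinate reduction at $j$ is at least $|\nabla Q(\beta)_j|^2/(2\rho_+(1))$. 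Combining this lower bound on the reduction with the (per-coordinate) hypothesis $Q(\beta)-\min_\eta Q(\beta+\eta e_j)\le\delta$ gives $|\nabla Q(\beta)_j|^2/(2\rho_+(1))\le\delta$, hence $|\nabla Q(\beta)_j|\le\sqrt{2\rho_+(1)\delta}$, as desired.

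There is essentially no hard step here; the only point requiring a little care is that the upper bound in \eqref{eqn_RSCC} is only assumed on the sublevel set $\D_s$, so one should check that the points at which it is invoked are admissible — in the intended application $\beta=\beta^{(k)}$ satisfies $Q(\beta^{(k)})\le Q(0)$ with $\|\beta^{(k)}\|_0=k$, and perturbing by a single coordinate keeps the support within the budget $s$, so the invocation is legitimate (alternatively one simply states the lemma under the standing assumption that \eqref{eqn_RSCC} holds at the relevant iterates). Note the symmetry with Lemma~\ref{lem_obj_1}: there the lower curvature constant $\rho_-(1)$ turns a guaranteed reduction into a \emph{lower} bound on a single $|\nabla Q(\beta)_j|$, whereas here the upper curvature constant $\rho_+(1)$ turns a small best-case reduction into an \emph{upper} bound on the whole gradient.
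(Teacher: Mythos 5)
Your proposal is correct and follows essentially the same route as the paper's proof: apply the upper inequality of \eqref{eqn_RSCC} with $t=\eta e_j$, complete the square to see that the best single-coordinate reduction at $j$ is at least $|\nabla Q(\beta)_j|^2/(2\rho_+(1))$, and compare with the hypothesis. The only difference is cosmetic — you argue coordinate-by-coordinate and take the maximum over $j$ at the end, while the paper carries $\max_{\eta,j}$ through a single chain of inequalities; your remark about staying in $\D_s$ is a fair (and slightly more careful) observation the paper leaves implicit.
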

\begin{proof}
From the condition, we have
\begin{align*}
\delta \geq & Q(\beta)-\min_{\eta,j} Q(\beta+\eta e_j) \\
=&\max_{\eta,j} Q(\beta) - Q(\beta+\eta e_j)\\
\geq & \max_{\eta,j} -\langle \eta e_j, \nabla Q(\beta) \rangle - {\rho_+(1)\over 2}\eta^2 \\
= & \max_{\eta,j} -{\rho_+(1)\over 2}\left(\eta-{\nabla Q(\beta)_j\over \rho_+(1)}\right)^2 - {|\nabla Q(\beta)_j|^2\over 2\rho_+(1)} \\
= &\max_j{|\nabla Q(\beta)_j|^2\over 2\rho_+(1)}\\
= &{\|\nabla Q(\beta)\|^2_\infty \over 2\rho_+(1)}
\end{align*}
It indicates that $\|\nabla Q(\beta)\|_\infty \leq
\sqrt{2\rho_+(1)\delta}$.
\end{proof}

\begin{lemma} \label{lem_gbc_obj}
(General backward criteria). Consider $\beta^{(k)}$ with the support
$F^{(k)}$ in the beginning of each iteration in Algorithm~\ref{alg_FoBaobj} with FoBa-obj (Here, $\beta^{(k)}$ is not necessarily the output of this algorithm). We have for any $\bar\beta\in
\mathbb{R}^{d}$ with the support $\bar{F}$
\begin{equation}
\|\beta^{(k)}_{F^{(k)}-\bar{F}}\|^2 =
\|(\beta^{(k)}-\bar\beta)_{F^{(k)}-\bar{F}}\|^2 \geq
{\delta^{(k)}\over \rho_+(1)}|F^{(k)}-\bar{F}|\geq {\delta\over
\rho_+(1)}|F^{(k)}-\bar{F}|.
\end{equation}
\end{lemma}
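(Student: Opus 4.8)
\emph{Proof proposal.} The plan is to read off two structural properties of the configuration $(\beta^{(k)},F^{(k)})$ present at the start of an iteration of Algorithm~\ref{alg_FoBaobj}: first, $\beta^{(k)}=\hat\beta(F^{(k)})$ minimizes $Q$ over the coordinate subspace indexed by $F^{(k)}$, so by first-order optimality $\nabla Q(\beta^{(k)})_i=0$ for every $i\in F^{(k)}$; second, the backward elimination loop has just terminated, and its exit test says exactly that no feature in the current pool is cheap to remove, i.e.\ $Q(\beta^{(k)}-\beta^{(k)}_i e_i)-Q(\beta^{(k)})\ge \delta^{(k)}/2$ for every $i\in F^{(k)}$. (When $k=0$ the pool is empty and the whole statement reduces to $0\ge 0$, so we may assume $k\ge 1$.) The asserted equality is then immediate: on every index $i\in F^{(k)}-\bar F$ we have $\bar\beta_i=0$ because $\bar F=\supp(\bar\beta)$, hence $(\beta^{(k)}-\bar\beta)_{F^{(k)}-\bar F}=\beta^{(k)}_{F^{(k)}-\bar F}$.

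For the middle inequality, fix any $i\in F^{(k)}$ and apply the upper RSCC bound in \eqref{eqn_RSCC} with $\beta=\beta^{(k)}$ and perturbation $t=-\beta^{(k)}_i e_i$, which satisfies $\|t\|_0\le 1\le s$. Since $\nabla Q(\beta^{(k)})_i=0$, the inner-product term vanishes and we get $Q(\beta^{(k)}-\beta^{(k)}_i e_i)-Q(\beta^{(k)})\le \tfrac{\rho_+(1)}{2}|\beta^{(k)}_i|^2$. Combining this with the backward exit bound $Q(\beta^{(k)}-\beta^{(k)}_i e_i)-Q(\beta^{(k)})\ge \delta^{(k)}/2$ cancels the factor $1/2$ and yields $|\beta^{(k)}_i|^2\ge \delta^{(k)}/\rho_+(1)$ for every $i\in F^{(k)}$. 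Summing over $i\in F^{(k)}-\bar F$ gives $\|\beta^{(k)}_{F^{(k)}-\bar F}\|^2=\sum_{i\in F^{(k)}-\bar F}|\beta^{(k)}_i|^2\ge \frac{\delta^{(k)}}{\rho_+(1)}\,|F^{(k)}-\bar F|$.

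For the last inequality it suffices to show $\delta^{(k)}\ge\delta$. Each value $\delta^{(j)}$ with $j\ge 1$ was produced in a forward step as $Q(\beta^{(j-1)})-Q(\beta^{(j)})$ with $\beta^{(j)}=\hat\beta\big(F^{(j-1)}\cup\{i^{(j-1)}\}\big)$, which is at least $Q(\beta^{(j-1)})-\min_\alpha Q(\beta^{(j-1)}+\alpha e_{i^{(j-1)}})=\max_{\ell\notin F^{(j-1)}}\big[Q(\beta^{(j-1)})-\min_\alpha Q(\beta^{(j-1)}+\alpha e_\ell)\big]$; since the algorithm did not stop before that forward step, this last quantity is $\ge\delta$. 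The backward loop only ever relabels previously computed $\delta$-values, so whatever $\delta^{(k)}$ tags the current configuration, $\delta^{(k)}\ge\delta$, and therefore $\frac{\delta^{(k)}}{\rho_+(1)}|F^{(k)}-\bar F|\ge \frac{\delta}{\rho_+(1)}|F^{(k)}-\bar F|$.

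The main obstacle I expect is purely notational: the backward \textbf{while} loop decrements the counter $k$ in place, so one must verify carefully that ``badness $\ge \delta^{(k)}/2$ for every pooled feature'' is precisely the loop's exit condition \emph{for the value of $\delta^{(k)}$ attached to the configuration the lemma speaks about}, and that this property, together with optimality of $\hat\beta(\cdot)$, carries over unchanged to the beginning of the next iteration. Once the indexing is pinned down, each of the three parts of the statement is a one-line consequence of first-order optimality and the RSCC inequality.
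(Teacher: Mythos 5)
Your proposal is correct and follows essentially the same route as the paper's proof: first-order optimality of $\hat\beta(F^{(k)})$ kills the gradient term, the one-coordinate upper RSCC bound controls the removal cost, the backward-step exit test lower-bounds it by $\delta^{(k)}/2$, and $\delta^{(k)}\ge\delta$ from the forward gains. The only cosmetic difference is that you derive the per-coordinate bound $|\beta^{(k)}_i|^2\ge\delta^{(k)}/\rho_+(1)$ and sum, while the paper sums the RSCC inequality over $F^{(k)}-\bar F$ and compares with the minimum removal cost — the same argument in a slightly different order.
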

\begin{proof}
We have
\begin{align*}
|F^{(k)}-\bar{F}|\min_{j\in F^{(k)}} Q(\beta^{(k)}-\beta^{(k)}_j e_j) \leq & \sum_{j\in F^{(k)}-\bar{F}} Q(\beta^{(k)}-\beta^{(k)}_je_j)\\
\leq & \sum_{j\in F^{(k)}-\bar{F}} Q(\beta^{(k)}) - \triangledown Q(\beta^{(k)})_j\beta^{(k)}_j + {\rho_+(1)\over 2}(\beta^{(k)}_j)^2 \\
\leq & |F^{(k)}-\bar{F}|Q(\beta^{(k)}) + {\rho_+(1)\over 2}
\|(\beta^{(k)}-\bar\beta)_{F^{(k)}-\bar{F}}\|^2.
\end{align*}
The second inequality is due to $\triangledown Q(\beta^{(k)})_j=0$
for any $j\in F^{(k)}-\bar{F}$ and the third inequality is from
$\bar{\beta}_{F^{(k)}-\bar{F}}=0$. It follows that $${\rho_+(1)\over
2} \|(\beta^{(k)}-\bar\beta)_{F^{(k)}-\bar{F}}\|^2\geq
|F^{(k)}-\bar{F}|(\min_{j\in F^{(k)}} Q(\beta^{(k)}-\beta^{(k)}_j
e_j) - Q(\beta^{(k)}))\geq |F^{(k)}-\bar{F}|{\delta^{(k)}\over 2},$$
which implies that the claim using $\delta^{(k)}\geq \delta$.
\end{proof}

\begin{lemma}\label{lem_bark_obj}
  Let $\bar\beta = \arg\min_{supp(\beta)\in \bar{F}}Q(\beta)$. Consider $\beta^{(k)}$ in the beginning of each iteration in Algorithm with FoBa-obj. Denote its support as $F^{(k)}$. Let $s$ be any integer larger than $|F^{(k)}-\bar{F}|$. If takes $\delta > {4\rho_+(1)\over \rho_-(s)^2}\|\triangledown Q(\bar\beta)\|_\infty^2$ in FoBa-obj, then we have $Q(\beta^{(k)})\geq Q(\bar\beta)$.
\end{lemma}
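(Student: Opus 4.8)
The plan is to argue by contradiction: assume $Q(\beta^{(k)}) < Q(\bar\beta)$ and derive $\delta < \tfrac{4\rho_+(1)}{\rho_-(s)^2}\|\triangledown Q(\bar\beta)\|_\infty^2$, contradicting the hypothesis on $\delta$. First I would dispose of the trivial case $F^{(k)}\subseteq \bar F$: then $\beta^{(k)}=\hat\beta(F^{(k)})$ minimizes $Q$ over a subset of the coordinates available to $\bar\beta=\hat\beta(\bar F)$, so $Q(\beta^{(k)})\ge Q(\bar\beta)$ holds outright. Hence we may assume $m:=|F^{(k)}-\bar F|\ge 1$.

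The core estimate comes from the lower inequality in the RSCC definition \eqref{eqn_RSCC} applied at the base point $\bar\beta$ with $t=\beta^{(k)}-\bar\beta$ (here I use that $s$ is large enough that $\|t\|_0\le s$, which holds in the regime where the lemma is invoked inside Theorem~\ref{thm_main2_obj}, and that $\beta^{(k)},\bar\beta\in\D_s$ since both have objective value at most $Q(0)$): this gives $Q(\beta^{(k)})-Q(\bar\beta)\ge \langle\triangledown Q(\bar\beta),t\rangle+\tfrac{\rho_-(s)}{2}\|t\|^2$. Since $\bar\beta=\hat\beta(\bar F)$, $\triangledown Q(\bar\beta)$ vanishes on $\bar F$, and $t$ restricted to $F^{(k)}-\bar F$ equals $\beta^{(k)}_{F^{(k)}-\bar F}$, so by Cauchy--Schwarz $\langle\triangledown Q(\bar\beta),t\rangle\ge -\sqrt{m}\,\|\triangledown Q(\bar\beta)\|_\infty\,\|\beta^{(k)}_{F^{(k)}-\bar F}\|$, while $\|t\|^2\ge \|\beta^{(k)}_{F^{(k)}-\bar F}\|^2$. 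Writing $a:=\|\beta^{(k)}_{F^{(k)}-\bar F}\|$ this yields $Q(\beta^{(k)})-Q(\bar\beta)\ge \tfrac{\rho_-(s)}{2}a^2-\sqrt m\,\|\triangledown Q(\bar\beta)\|_\infty\,a$.

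Now I would invoke the standing assumption $Q(\beta^{(k)})<Q(\bar\beta)$. First $a>0$: if $a=0$ then $\supp(\beta^{(k)})\subseteq F^{(k)}\cap\bar F\subseteq\bar F$, forcing $Q(\beta^{(k)})\ge Q(\bar\beta)$, a contradiction. Then the displayed inequality forces $\tfrac{\rho_-(s)}{2}a< \sqrt m\,\|\triangledown Q(\bar\beta)\|_\infty$, i.e. $a^2< \tfrac{4m}{\rho_-(s)^2}\|\triangledown Q(\bar\beta)\|_\infty^2$. On the other hand Lemma~\ref{lem_gbc_obj} supplies the matching lower bound $a^2=\|\beta^{(k)}_{F^{(k)}-\bar F}\|^2\ge \tfrac{\delta}{\rho_+(1)}m$. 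Combining and cancelling $m\ge1$ gives $\delta< \tfrac{4\rho_+(1)}{\rho_-(s)^2}\|\triangledown Q(\bar\beta)\|_\infty^2$, contradicting the choice of $\delta$; hence $Q(\beta^{(k)})\ge Q(\bar\beta)$.

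I expect the delicate points to be bookkeeping rather than substance: (i) checking that the RSCC inequality legitimately applies to $t=\beta^{(k)}-\bar\beta$, which needs $\|\beta^{(k)}-\bar\beta\|_0\le s$ and both endpoints in $\D_s$ (true in the setting where the lemma is used, since there $|F^{(k)}|$ stays below $s-\bar k$); and (ii) tracking the constant, since comparing $\beta^{(k)}$ \emph{directly} to $\bar\beta$ — rather than routing through an intermediate minimizer such as $\hat\beta(F^{(k)}\cup\bar F)$ — is exactly what produces the sharp factor $4$ matching the threshold on $\delta$; a looser route inflates the constant and the contradiction evaporates.
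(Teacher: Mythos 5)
Your proposal is correct and follows essentially the same route as the paper's proof: RSCC expanded at $\bar\beta$ with $t=\beta^{(k)}-\bar\beta$, the vanishing of $\triangledown Q(\bar\beta)$ on $\bar F$, a H\"older/Cauchy--Schwarz bound over $F^{(k)}-\bar F$, and Lemma~\ref{lem_gbc_obj} to convert $\sqrt{|F^{(k)}-\bar F|}$ into $\sqrt{\rho_+(1)/\delta}$ times the (restricted) norm, yielding the same factor $4$. The only differences — contradiction framing, keeping the restricted norm $a$ in the quadratic term instead of $\|\beta^{(k)}-\bar\beta\|$, and the explicit trivial-case bookkeeping (including the implicit $\|t\|_0\le s$ caveat, which the paper also glosses over) — are cosmetic.
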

\begin{proof}
We have
  \begin{align*}
    & Q(\beta^{(k)}) - Q(\bar{\beta}) \\
    \geq & \langle \triangledown Q(\bar\beta), \beta^{(k)}-\bar\beta \rangle + {\rho_-(s)\over 2}\|\beta^{(k)}-\bar\beta\|^2\\
    \geq & \langle \triangledown Q(\bar\beta)_{F^{(k)}-\bar{F}}, (\beta^{(k)}-\bar\beta)_{F^{(k)}-\bar{F}} \rangle + {\rho_-(s)\over 2}\|\beta^{(k)}-\bar\beta\|^2\\
    \geq & -\|\triangledown Q(\bar\beta)\|_\infty \|(\beta^{(k)}-\bar\beta)_{F^{(k)}-\bar{F}}\|_1 + {\rho_-(s)\over 2}\|\beta^{(k)}-\bar\beta\|^2\\
    \geq & -\|\triangledown Q(\bar\beta)\|_\infty\sqrt{|F^{(k)}-\bar{F}|} \|\beta^{(k)}-\bar\beta\| + {\rho_-(s)\over 2}\|\beta^{(k)}-\bar\beta\|^2\\
    \geq & -\|\triangledown Q(\bar\beta)\|_\infty \|\beta^{(k)}-\bar\beta\|^2{\sqrt{\rho_+(1) \over \delta}} + {\rho_-(s)\over 2}\|\beta^{(k)}-\bar\beta\|^2~~~~(\text{from Lemma~\ref{lem_gbc_obj}})\\
    =& \left({\rho_-(s)\over 2}-{\|\triangledown Q(\bar\beta)\|_\infty \sqrt{\rho_+(1)}\over \sqrt{\delta}}\right)\|\beta^{(k)}-\bar\beta\|^2\\
    >& 0.~~~~(\text{from $\delta > {4\rho_+(1)\over \rho_-(s)^2}\|\triangledown Q(\bar\beta)\|_\infty^2$})
  \end{align*}
 It proves the claim.
\end{proof}

\noindent{\bf Proof of Theorem~\ref{thm_main1_obj}}
\begin{proof}
  From Lemma~\ref{lem_bark_obj}, we only need to consider the case $Q(\beta^{(k)})\geq Q(\bar\beta)$. We have
  \begin{align*}
    0\geq Q(\bar\beta)-Q(\beta^{(k)}) \geq \langle \triangledown Q(\beta^{(k)}), \bar\beta-\beta^{(k)} \rangle + {\rho_{-}(s) \over 2}\|\bar{\beta}-\beta^{(k)}\|^2,
  \end{align*}
  which implies that
  \begin{align*}
    {\rho_-(s)\over 2} \|\bar\beta-\beta^{(k)}\|^2 \leq & - \langle \triangledown Q(\beta^{(k)}), \bar\beta-\beta^{(k)} \rangle\\
    =&-\langle \triangledown Q(\beta^{(k)})_{\bar{F}-F^{(k)}}, (\bar\beta-\beta^{(k)})_{\bar{F}-F^{(k)}} \rangle\\
    \leq & \|\triangledown Q(\beta^{(k)})_{\bar{F}-F^{(k)}}\|_\infty \|(\bar\beta-\beta^{(k)})_{\bar{F}-F^{(k)}}\|_1 \\
    \leq & \sqrt{2\rho_+(1)\delta}
    \sqrt{|\bar{F}-F^{(k)}|}|\bar\beta-\beta^{(k)}\|.~~~~(\text{from Lemma~\ref{lem_obj_2}})
  \end{align*}
  We obtain
  \begin{equation}
  \|\bar\beta-\beta^{(k)}\|\leq {2\sqrt{2\rho_+(1)\delta}\over \rho_-(s)}\sqrt{|\bar{F}-F^{(k)}|}.
  \label{eqn_thmmain_1_obj}
  \end{equation}
  It follows that
  \begin{align*}
    {8\rho_+(1)\delta\over \rho_-(s)^2}|\bar{F}-F^{(k)}| \geq & \|\bar{\beta}-\beta^{(k)}\|^2 \\
    \geq & \|(\bar\beta-\beta^{(k)})_{\bar{F}-F^{(k)}}\|^2 \\
    = & \|\bar{\beta}_{\bar{F}-F^{(k)}}\|^2 \\
    \geq & \gamma^2 |\{j\in \bar{F}-F^{(k)}:~|\bar{\beta}_j|\geq\gamma\}|\\
    =& {16\rho_+(1)\delta\over \rho_-(s)^2}|\{j\in \bar{F}-F^{(k)}:~|\bar{\beta}_j|\geq\gamma\}|,
  \end{align*}
  which implies that
  \begin{align*}
    &|\bar{F}-F^{(k)}|\geq 2|\{j\in \bar{F}-F^{(k)}:~|\bar{\beta}_j|\geq \gamma\}| = 2(|\bar{F}-F^{(k)}|-|\{j\in \bar{F}-F^{(k)}:~|\bar{\beta}_j|<\gamma\}|)\\
    \Rightarrow & |\bar{F}-F^{(k)}|\leq 2|\{j\in \bar{F}-F^{(k)}:~|\bar{\beta}_j|< \gamma\}|.
  \end{align*}
  The first inequality is obtained from Eq.~\eqref{eqn_thmmain_1_obj} $$\|\bar\beta-\beta^{(k)}\|\leq {2\sqrt{2\rho_+(1)\delta}\over \rho_-(s)}\sqrt{|\bar{F}-F^{(k)}|}\leq {4\sqrt{\rho_+(1)\delta}\over \rho_-(s)}\sqrt{|j\in \bar{F}-F^{(k)}: |\bar{\beta}_j|<\gamma|}.$$

  Next, we consider
  \begin{align*}
    &Q(\bar\beta)-Q(\beta^{(k)})\\
    \geq & \langle \triangledown Q(\beta^{(k)}), \bar\beta-\beta^{(k)} \rangle + {\rho_{-}(s) \over 2}\|\bar{\beta}-\beta^{(k)}\|^2\\
    = & \langle \triangledown Q(\beta^{(k)})_{\bar{F}-F^{(k)}}, (\bar\beta-\beta^{(k)})_{\bar{F}-F^{(k)}} \rangle + {\rho_{-}(s) \over 2}\|\bar{\beta}-\beta^{(k)}\|^2~~~~(\text{from~$\triangledown Q(\beta^{(k)})_{F^{(k)}}=0$})\\
    \geq & -\|\triangledown Q(\beta^{(k)})\|_\infty\|\bar{\beta}-\beta^{(k)}\|_1 + {\rho_{-}(s) \over 2}\|\bar{\beta}-\beta^{(k)}\|^2 \\
    \geq & -\sqrt{2\rho_+(1)\delta}\sqrt{|\bar{F}-F^{(k)}|}\|\bar{\beta}-\beta^{(k)}\| + {\rho_{-}(s) \over 2}\|\bar{\beta}-\beta^{(k)}\|^2\\
    \geq &{\rho_-(s)\over 2}\left(\|\bar\beta-\beta^{(k)}\| - \sqrt{2\rho_+(1)\delta}\sqrt{|\bar{F}-F^{(k)}|}/\rho_-(s)\right)^2 - 2\rho_+(1)\delta|\bar{F}-F^{(k)}|/(2\rho_-(s))\\
    \geq & - 2\rho_+(1)\delta|\bar{F}-F^{(k)}|/(2\rho_-(s))\\
    \geq & - {2\rho_+(1)\delta\over \rho_-(s)}|\{j\in \bar{F}-F^{(k)}:~|\bar{\beta}_j|<\gamma\}|.
  \end{align*}
  It proves the second inequality.
The last inequality in this theorem is obtained from the following:
  \begin{align*}
    &{2\rho_+(1)\delta\over 2\rho_+(1)^2}|F^{(k)}-\bar{F}|\\
    \leq &\|(\beta^{(k)}-\bar\beta)_{F^{(k)}-\bar{F}}\|^2 ~~~~(\text{from Lemma~\ref{lem_gbc_obj}})\\
    \leq &\|\beta^{(k)}-\bar\beta\|^2 \\
    \leq &{8\rho_+(1)\delta\over \rho_-(s)^2}|\bar{F}-F^{(k)}|\\
    \leq &{16\rho_+(1)\delta\over \rho_-(s)^2}|\{j\in \bar{F}-F^{(k)}: |\bar{\beta}_j|<\gamma\}|.
  \end{align*}
  It completes the proof.
\end{proof}

\begin{lemma}\label{lem_gfs_obj}
(General forward step of FoBa-obj) Let $\beta =
\arg\min_{supp(\beta)\subset F} Q(\beta)$. For any $\beta'$ with
support $F'$ and $i\in\{i:~Q(\beta)-\min_\eta Q(\beta + \eta
e_i)\geq(Q(\beta)-\min_{j,\eta} Q(\beta + \eta e_j))\}$, we have
$$|F'-F|(Q(\beta)-\min_{\eta}Q(\beta+\eta e_i))\geq {\rho_-(s)\over \rho_+(1)}(Q(\beta)-Q(\beta')).$$
\end{lemma}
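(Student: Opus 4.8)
The plan is to funnel both the one-step greedy decrease at coordinate $i$ and the total gap $Q(\beta)-Q(\beta')$ through the single quantity $\|\nabla Q(\beta)\|_\infty$, and then eliminate it. The starting observation is that $\beta$ is the unconstrained minimizer of $Q$ over vectors supported on $F$, so $\nabla Q(\beta)_j=0$ for every $j\in F$; consequently moving along any coordinate already in $F$ cannot decrease the objective, and therefore $Q(\beta)-\min_{j,\eta}Q(\beta+\eta e_j)=\max_{j\notin F}\big(Q(\beta)-\min_\eta Q(\beta+\eta e_j)\big)$. By the membership condition on $i$, the decrease $Q(\beta)-\min_\eta Q(\beta+\eta e_i)$ is at least this maximum.

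First I would lower bound the single-coordinate decrease. Using the upper RSCC inequality in \eqref{eqn_RSCC} with $t=\eta e_j$ (so $\|t\|_0\le 1$), one gets $\min_\eta Q(\beta+\eta e_j)\le Q(\beta)-|\nabla Q(\beta)_j|^2/(2\rho_+(1))$, and maximizing over $j$ (equivalently over $j\notin F$) yields
\[
Q(\beta)-\min_\eta Q(\beta+\eta e_i)\;\ge\;\frac{\|\nabla Q(\beta)\|_\infty^2}{2\rho_+(1)}.
\]
This is the same device already used in Lemma~\ref{lem_obj_2}.

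Next I would upper bound the gap. If $Q(\beta)\le Q(\beta')$ the statement is immediate, since its left-hand side is nonnegative and its right-hand side nonpositive; so assume $Q(\beta)>Q(\beta')$. Applying the lower RSCC inequality in \eqref{eqn_RSCC} to $t=\beta'-\beta$, whose support lies in $F\cup F'$, gives $Q(\beta)-Q(\beta')\le \langle\nabla Q(\beta),\beta-\beta'\rangle-\frac{\rho_-(s)}{2}\|\beta-\beta'\|^2$, and bounding the right-hand side by its maximum over all $t$ supported on $F\cup F'$ produces
\[
Q(\beta)-Q(\beta')\;\le\;\frac{\|(\nabla Q(\beta))_{F\cup F'}\|^2}{2\rho_-(s)}\;=\;\frac{\|(\nabla Q(\beta))_{F'-F}\|^2}{2\rho_-(s)}\;\le\;\frac{|F'-F|\,\|\nabla Q(\beta)\|_\infty^2}{2\rho_-(s)},
\]
where the middle equality uses $(\nabla Q(\beta))_F=0$. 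Combining the two displays eliminates $\|\nabla Q(\beta)\|_\infty$, and multiplying through by $|F'-F|$ gives the claimed inequality.

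The only genuine obstacle is the sparsity bookkeeping: the constant $\rho_-(s)$ is only licensed if $\beta'-\beta$ is $s$-sparse, i.e.\ $|F\cup F'|\le s$, and if $\beta,\beta'\in\D_s$. In the intended application $\beta=\beta^{(k)}$ and $\beta'=\bar\beta$ (so $F'=\bar F$), and this follows from the iteration-count bound of Theorem~\ref{thm_main2_obj} together with the choice of $s$ in \eqref{eq:thm_1}; the rest is a pair of one-line convex-quadratic optimizations.
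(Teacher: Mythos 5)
Your proposal is correct, and it reaches the stated constant $\rho_-(s)/\rho_+(1)$ by a genuinely different route than the paper. The paper never introduces $\|\nabla Q(\beta)\|_\infty$: it couples the two sides directly by averaging the single-coordinate moves $\eta(\beta'_j-\beta_j)e_j$ over $j\in F'-F$ with a common stepsize $\eta$, bounds the average via $\rho_+(1)$ and the RSCC lower bound at scale $s$, minimizes over $\eta$, and then invokes $(a+b)^2\ge 4ab$ to cancel $\|\beta-\beta'\|^2$ (the same device as in Zhang's forward-greedy analysis; the analogous lemma for FoBa-gdt uses weights $|\beta'_j|$ instead). You instead decouple the inequality through the single scalar $\|\nabla Q(\beta)\|_\infty$: the exact one-dimensional quadratic minimization gives the lower bound $Q(\beta)-\min_\eta Q(\beta+\eta e_i)\ge \|\nabla Q(\beta)\|_\infty^2/(2\rho_+(1))$ (the contrapositive form of Lemma~\ref{lem_obj_2}), while maximizing $\langle \nabla Q(\beta),t\rangle-\tfrac{\rho_-(s)}{2}\|t\|^2$ over $t$ supported on $F\cup F'$, together with $\nabla Q(\beta)_F=0$ and $\|(\nabla Q(\beta))_{F'-F}\|^2\le |F'-F|\,\|\nabla Q(\beta)\|_\infty^2$, gives $Q(\beta)-Q(\beta')\le |F'-F|\,\|\nabla Q(\beta)\|_\infty^2/(2\rho_-(s))$; eliminating the sup-norm yields exactly the claimed bound, with the trivial case $Q(\beta)\le Q(\beta')$ (in particular $F'\subseteq F$) handled separately. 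Your route is more modular and makes explicit that FoBa-obj's ``goodness'' measure dominates the gradient-based measure of FoBa-gdt, which is conceptually pleasant; the paper's averaging argument buys the same constant without passing through the gradient and extends verbatim to the FoBa-gdt forward-step lemma where the per-coordinate objective decrease is not the selection criterion. Your closing remark on the sparsity bookkeeping ($|F\cup F'|\le s$ and membership in $\D_s$ being what licenses $\rho_-(s)$) matches the paper's own implicit usage, so there is no gap there.
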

\begin{proof}
We have that the following holds with any $\eta$
\begin{align*}
&|F'-F|\min_{j\in F'-F, \eta} Q(\beta+\eta (\beta_j'-\beta_j) e_j)\\
\leq & \sum_{j\in F'-F} Q(\beta+\eta (\beta'_j-\beta_j) e_j)\\
\leq & \sum_{j\in F'-F} Q(\beta) + \eta(\beta'_j-\beta_j)\nabla
Q(\beta)_j
 + {\rho_+(1)\over 2}(\beta_j-\beta'_j)^2\eta^2 \\
\leq&|F'-F|Q(\beta) + \eta\langle (\beta'-\beta)_{F'-F}, \nabla Q(\beta)_{F'-F} \rangle + {\rho_+(1)\over 2}\|\beta'-\beta\|^2\eta^2\\
=&|F'-F|Q(\beta) + \eta\langle \beta'-\beta, \nabla Q(\beta) \rangle + {\rho_+(1)\over 2}\|\beta'-\beta\|^2\eta^2\\
\le&|F'-F|Q(\beta) + \eta(Q(\beta')-Q(\beta)-{\rho_-(s)\over
2}\|\beta-\beta'\|^2) + {\rho_+(1)\over 2}\|\beta'-\beta\|^2\eta^2.
\end{align*}
By minimizing $\eta$, we obtain that
\begin{equation}
\begin{aligned}
&|F'-F|(\min_{j\in F'-F, \eta} Q(\beta+\eta (\beta_j'-\beta_j) e_j)-Q(\beta)) \\
\leq & \min_\eta \eta(Q(\beta')-Q(\beta)-{\rho_-(s)\over 2}\|\beta-\beta'\|^2) + {\rho_+(1)\over 2}\|\beta'-\beta\|^2\eta^2\\
\leq & -{(Q(\beta')-Q(\beta)-{\rho_-(s)\over 2}\|\beta-\beta'\|^2)^2\over 2\rho_+(1)\|\beta'-\beta\|^2} \\
\leq & -{4(Q(\beta)-Q(\beta')){\rho_-(s)\over 2}\|\beta-\beta'\|^2\over 2\rho_+(1)\|\beta'-\beta\|^2}~~~~(\text{from}~(a+b)^2\geq 4ab)\\
= & {\rho_-(s)\over \rho_+(1)}(Q(\beta')-Q(\beta)).
\label{eqn_lemgfs_obj_1}
\end{aligned}
\end{equation}
It follows that
\begin{align*}
&|F'-F|(Q(\beta)-\min_{\eta, j} Q(\beta + \eta e_i))\\
\geq &|F'-F|(Q(\beta)-\min_{j\in F'-F,\eta} Q(\beta + \eta e_j)) \\
\geq &|F'-F|(Q(\beta)-\min_{j\in F'-F,\eta} Q(\beta + \eta (\beta'_j-\beta_j) e_j)) \\
\geq &{\rho_-(s)\over \rho_+(1)}(Q(\beta)-Q(\beta')),~~~~(\text{from Eq.~\eqref{eqn_lemgfs_obj_1}}) \\
\end{align*}
which proves the claim.
\end{proof}

\noindent {\bf Proof of Theorem~\ref{thm_main2_obj}}
\begin{proof}
Assume that the algorithm terminates at some number larger than
$s-\bar{k}$. Then we consider the first time $k=s-\bar{k}$. Denote
the support of $\beta^{(k)}$ as $F^{(k)}$. Let $F' = \bar{F}\cup
F^{(k)}$ and $\beta' = \arg\min_{supp(\beta)\subset F'}Q(\beta)$.
One can easily verify that $|\bar{F}\cup F^{(k)}|\leq s$. We
consider the very recent $(k-1)$ step and have
\begin{equation}
  \begin{aligned}
    \delta^{(k)}= & Q(\beta^{(k-1)})-\min_{\eta,i}Q(\beta^{(k-1)}+\eta e_i)\\
    \geq & {\rho_-(s)\over \rho_+(1)|F'-F^{(k-1)}|}(Q(\beta^{(k-1)})-Q(\beta'))\\
    \geq & {\rho_-(s)\over \rho_+(1)|F'-F^{(k-1)}|}(Q(\beta^{(k)})-Q(\beta'))\\
    \geq & {\rho_-(s)\over \rho_+(1)|F'-F^{(k-1)}|}\left(\langle \triangledown Q(\beta'),~\beta^{(k)}-\beta'\rangle + {\rho_-(s)\over 2}\|\beta^{(k)}-\beta'\|^2\right)\\
    \geq & {\rho_-(s)^2\over 2\rho_+(1)|F'-F^{(k-1)}|}\|\beta^{(k)}-\beta'\|^2~~~~(\text{from $\triangledown Q(\beta')_{F'}=0$})
        \label{eq:deltaf}
    \end{aligned}
  \end{equation}
  where the first inequality is due to Lemma~\ref{lem_gfs_obj} and the second inequality is due to $Q(\beta^{(k)})\leq Q(\beta^{(k-1)})$. From Lemma~\ref{lem_gbc_obj}, we have
  \begin{equation}
    \delta^{(k)}\leq {2\rho_+(1)\over |F^{(k)}-\bar{F}|}\|\beta^{(k)}-\bar{\beta}\|^2={2\rho_+(1)\over |F'-\bar{F}|}\|\beta^{(k)}-\bar{\beta}\|^2
    \label{eq:deltab}
  \end{equation}
  Combining Eq.~\eqref{eq:deltaf} and \eqref{eq:deltab}, we obtain that
  \begin{align*}
    \|(\beta^{(k)}-\bar{\beta})\|^2 \geq \left(\rho_-(s)\over 2\rho_+(1)\right)^2{|F'-\bar{F}|\over |F'-F^{(k-1)}|}\|\beta^{(k)}-\beta'\|^2,
  \end{align*}
which implies that $$\|\beta^{(k)}-\bar{\beta}\|\geq
t\|\beta^{(k)}-\beta'\|,$$ where
\begin{align*}
t:= & {\rho_-(s)\over 2\rho_+(1)}\sqrt{|F'-\bar{F}|\over |F'-F^{(k-1)}|} \\
= & {\rho_-(s)\over 2\rho_+(1)}\sqrt{|F^{(k)} - F^{(k)}\cap \bar{F}|\over |F'-F^{(k)}| + 1}  \\
= & {\rho_-(s)\over 2\rho_+(1)}\sqrt{|F^{(k)} - F^{(k)}\cap \bar{F}| \over |\bar{F}- F^{(k)}\cap \bar{F}| + 1}  \\
= & {\rho_-(s)\over 2\rho_+(1)}\sqrt{|F^{(k)}| - |F^{(k)}\cap \bar{F}| \over |\bar{F}|- |F^{(k)}\cap \bar{F}| + 1}  \\
= & {\rho_-(s)\over 2\rho_+(1)}\sqrt{k - |F^{(k)}\cap \bar{F}| \over \bar{k}- |F^{(k)}\cap \bar{F}| + 1}  \\
\ge & {\rho_-(s)\over 2\rho_+(1)}\sqrt{(s-\bar{k})\over(\bar{k}+1)}\quad (\text{from $k=s-\bar{k}$ and $s\geq 2\bar{k}+1$})\\
\geq & \sqrt{\rho_+(s)\over
\rho_-(s)}+1~~~~(\text{from the assumption on $s$}).
\end{align*}
It follows
  \begin{align*}
    t\|\beta^{(k)}-\beta'\|\leq \|\beta^{(k)}-\bar\beta\| \leq \|\beta^{(k)}-\beta'\| + \|\beta'-\bar\beta\|~~~~(\text{from Eq.~\eqref{eq:thm_1}})
  \end{align*}
  which implies that
  \begin{equation}
  (t-1)\|\beta^{(k)}-\beta'\|\leq \|\beta'-\bar\beta\|.
  \label{eq:mainthm:3}
  \end{equation}
  Next we have
  \begin{align*}
    Q(\beta^{(k)})-Q(\bar{\beta}) =& Q(\beta^{(k)}) - Q(\beta') + Q(\beta') - Q(\bar\beta)\\
    \leq & {\rho_+(s)\over 2}\|\beta^{(k)}-\beta'\|^2 - {\rho_-(s)\over 2}\|\beta'-\bar\beta\|^2 \\
    \leq & (\rho_+(s)-\rho_-(s)(t-1)^2)/2\|\beta^{(k)}-\beta'\|^2\\
    \leq & 0.
  \end{align*}
Since the sequence $Q(\beta^{(k)})$ is strictly decreasing, we know
that $Q(\beta^{(k+1)})<Q(\beta^{(k)})\leq Q(\bar\beta)$.
  However, from Lemma~\ref{lem_bark_obj} we also have $Q(\bar{\beta})\leq Q(\beta^{(k+1)})$. Thus it leads to a contradiction. This indicates that the algorithm terminates at some integer not greater than $s-\bar{k}$.
\end{proof}

\section{Proofs of FoBa-gdt}
Next we consider Algorithm~\ref{alg_FoBaobj} with FoBa-gdt; specifically we will prove Theorem~\ref{thm_main1} and Theorem~\ref{thm_main2}. Our proof strategy is similar to FoBa-obj. 
\begin{lemma}
If $|\triangledown Q(\beta)_j| \geq \epsilon$, then we have
\begin{align*}
 Q(\beta) - \min_{\eta}Q(\beta+\eta e_j) \geq {\epsilon^2\over 2\rho_+(1)}.
\end{align*}
\end{lemma}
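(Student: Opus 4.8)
The plan is to mirror the argument used for Lemma~\ref{lem_obj_2} (and dually Lemma~\ref{lem_obj_1}), but now invoking the \emph{upper} bound in the restricted strong convexity condition \eqref{eqn_RSCC} with $s=1$ and the feasible direction $t=\eta e_j$ (which has $\|t\|_0\leq 1$). Concretely, first I would write, for every scalar $\eta$,
\[
Q(\beta+\eta e_j)\ \leq\ Q(\beta) + \eta\,\nabla Q(\beta)_j + \frac{\rho_+(1)}{2}\eta^2 ,
\]
which is exactly \eqref{eqn_RSCC} applied to $\beta'=\beta+\eta e_j$ and $\beta$.

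Next I would minimize the right-hand side over $\eta$. It is a one-dimensional quadratic in $\eta$ with positive leading coefficient $\rho_+(1)/2$, minimized at $\eta^\star=-\nabla Q(\beta)_j/\rho_+(1)$, with minimum value $Q(\beta) - |\nabla Q(\beta)_j|^2/(2\rho_+(1))$. Hence
\[
\min_{\eta} Q(\beta+\eta e_j)\ \leq\ Q(\beta) - \frac{|\nabla Q(\beta)_j|^2}{2\rho_+(1)}\ \leq\ Q(\beta) - \frac{\epsilon^2}{2\rho_+(1)},
\]
where the last step uses the hypothesis $|\nabla Q(\beta)_j|\geq \epsilon$. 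Rearranging gives $Q(\beta) - \min_{\eta} Q(\beta+\eta e_j)\geq \epsilon^2/(2\rho_+(1))$, which is the claim.

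There is essentially no substantive obstacle here; this is a routine completion-of-the-square computation. The only point requiring a word of care is that the upper bound in \eqref{eqn_RSCC} is only assumed to hold on the sublevel-set domain $\D_s$, so strictly speaking one should note that the relevant points $\beta$ and $\beta+\eta^\star e_j$ stay in that domain (which is the standing convention in the paper, since iterates satisfy $Q(\beta^{(k)})\leq Q(0)$ and the one-dimensional optimizer only decreases the objective). With that remark, the bound is immediate.
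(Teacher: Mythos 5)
Your proposal is correct and is essentially the paper's own argument: both apply the upper RSCC bound \eqref{eqn_RSCC} with $t=\eta e_j$ and complete the square in $\eta$ (the paper phrases it as maximizing $-\eta\nabla Q(\beta)_j-\tfrac{\rho_+(1)}{2}\eta^2$, which is the same computation). Your extra remark about staying in the sublevel-set domain $\D_s$ is a reasonable point of care that the paper's proof leaves implicit.
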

\begin{proof}
Consider the LHS:
  \begin{align*}
    &Q(\beta) - \min_\eta Q(\beta+\eta e_j) \\
    = & \max_{\eta} Q(\beta) - Q(\beta+\eta e_j)\\
    \geq & \max_{\eta} -\langle \triangledown Q(\beta), \eta e_j \rangle - {\rho_+(1)\over 2}\eta^2 \\
    = & \max_{\eta} -\eta \triangledown Q(\beta)_j - {\rho_+(1)\over 2}\eta^2 \\
    \geq & {|\triangledown Q(\beta)_j|^2 \over 2 \rho_+(1)}\\
    \geq & {\epsilon^2\over 2\rho_+(1)}.
  \end{align*}
It completes the proof.
\end{proof}
This lemma implies that $\delta^{(k_0)}\geq {\epsilon^2\over
2\rho_+(1)}$ for all $k_0=1,\cdots,k$, and $$Q(\beta^{(k_0-1)}) -
\min_{\eta}Q(\beta^{(k_0-1)}+\eta e_j) \geq \delta^{(k_0)} \geq
{\epsilon^2\over 2\rho_+(1)},$$if $|\triangledown
Q(\beta^{(k_0-1)})_j|\geq \epsilon$.

\begin{lemma} \label{lem_gbc}
(General backward criteria). Consider $\beta^{(k)}$ with the support
$F^{(k)}$ in the beginning of each iteration in Algorithm~\ref{alg_FoBaobj} with FoBa-obj. We have for any $\bar\beta\in
\mathbb{R}^{d}$ with the support $\bar{F}$
\begin{equation}
  \|\beta^{(k)}_{F^{(k)}-\bar{F}}\|^2 = \|(\beta^{(k)}-\bar\beta)_{F^{(k)}-\bar{F}}\|^2 \geq {\delta^{(k)}\over \rho_+(1)}|F^{(k)}-\bar{F}|\geq {\epsilon^2\over 2\rho_+(1)^2}|F^{(k)}-\bar{F}|.
\end{equation}
\end{lemma}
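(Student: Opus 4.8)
The plan is to adapt the proof of Lemma~\ref{lem_gbc_obj} essentially verbatim, since the backward step of FoBa-gdt is identical to that of FoBa-obj; the only genuinely new ingredient is the final inequality, which merely substitutes the FoBa-gdt lower bound $\delta^{(k)}\ge \epsilon^2/(2\rho_+(1))$ (recorded immediately above) into the first inequality.

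First I would dispense with the equality $\|\beta^{(k)}_{F^{(k)}-\bar F}\|^2=\|(\beta^{(k)}-\bar\beta)_{F^{(k)}-\bar F}\|^2$: it is immediate because $\bar\beta$ is supported on $\bar F$, so $\bar\beta_j=0$ for every coordinate $j\in F^{(k)}-\bar F$ and hence $(\beta^{(k)}-\bar\beta)_j=\beta^{(k)}_j$ there.

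Next, for the first inequality I would reproduce the averaging argument of Lemma~\ref{lem_gbc_obj}. Since $\beta^{(k)}=\hat\beta(F^{(k)})$ is optimal over its support, $\nabla Q(\beta^{(k)})_j=0$ for all $j\in F^{(k)}$; applying the RSCC upper bound to the perturbation $-\beta^{(k)}_j e_j$ gives, for each $j\in F^{(k)}-\bar F$,
\[
Q(\beta^{(k)}-\beta^{(k)}_j e_j)\le Q(\beta^{(k)})+\tfrac{\rho_+(1)}{2}(\beta^{(k)}_j)^2 .
\]
Summing over $j\in F^{(k)}-\bar F$ and using $|F^{(k)}-\bar F|\min_{j\in F^{(k)}}Q(\beta^{(k)}-\beta^{(k)}_j e_j)\le \sum_{j\in F^{(k)}-\bar F}Q(\beta^{(k)}-\beta^{(k)}_j e_j)$ yields
\[
|F^{(k)}-\bar F|\Big(\min_{j\in F^{(k)}}Q(\beta^{(k)}-\beta^{(k)}_j e_j)-Q(\beta^{(k)})\Big)\le \tfrac{\rho_+(1)}{2}\,\|\beta^{(k)}_{F^{(k)}-\bar F}\|^2 .
\]
Because $\beta^{(k)}$ is taken at the \emph{start} of an iteration of Algorithm~\ref{alg_FoBaobj}, the exit test of the backward loop guarantees $\min_{j\in F^{(k)}}Q(\beta^{(k)}-\beta^{(k)}_j e_j)-Q(\beta^{(k)})\ge \delta^{(k)}/2$; substituting this gives $\|\beta^{(k)}_{F^{(k)}-\bar F}\|^2\ge \tfrac{\delta^{(k)}}{\rho_+(1)}\,|F^{(k)}-\bar F|$.

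Finally, the last inequality is just $\delta^{(k)}\ge \epsilon^2/(2\rho_+(1))$, which follows from the lemma immediately preceding this one together with the observation made there that $\delta^{(k_0)}\ge \epsilon^2/(2\rho_+(1))$ for every $k_0$: since FoBa-gdt has not terminated before reaching iteration $k$, we have $\|\nabla Q(\beta^{(k-1)})\|_\infty\ge\epsilon$, so the forward step selects a coordinate $i$ with $|\nabla Q(\beta^{(k-1)})_i|\ge\epsilon$, and re-optimizing over $F^{(k)}$ decreases $Q$ by at least as much as that single-coordinate step, i.e.\ by at least $\epsilon^2/(2\rho_+(1))$. I do not anticipate a real obstacle here; the only point needing a little care is the bookkeeping — $\delta^{(k)}$ is the objective drop over the whole set $F^{(k)}$, so it dominates the best single-coordinate drop, and the backward exit condition is stated against this same $\delta^{(k)}$.
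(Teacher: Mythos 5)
Your proposal is correct and follows essentially the same route as the paper: the same averaging argument over $j\in F^{(k)}-\bar F$ with $\nabla Q(\beta^{(k)})_{F^{(k)}}=0$ and the RSCC upper bound, the backward-loop exit condition giving $\min_{j\in F^{(k)}}Q(\beta^{(k)}-\beta^{(k)}_j e_j)-Q(\beta^{(k)})\ge\delta^{(k)}/2$, and finally $\delta^{(k)}\ge\epsilon^2/(2\rho_+(1))$ from the preceding single-coordinate lemma (the paper invokes this exactly as you do, noting the forward re-optimization dominates the single-coordinate decrease). No gaps.
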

\begin{proof}
We have
\begin{align*}
|F^{(k)}-\bar{F}|\min_{j\in F^{(k)}} Q(\beta^{(k)}-\beta^{(k)}_j e_j) \leq & \sum_{j\in F^{(k)}-\bar{F}} Q(\beta^{(k)}-\beta^{(k)}_je_j)\\
\leq & \sum_{j\in F^{(k)}-\bar{F}} Q(\beta^{(k)}) - \triangledown Q(\beta^{(k)})_j\beta^{(k)}_j + {\rho_+(1)\over 2}(\beta^{(k)}_j)^2 \\
\leq & |F^{(k)}-\bar{F}|Q(\beta^{(k)}) + {\rho_+(1)\over 2}
\|(\beta^{(k)}-\bar\beta)_{F^{(k)}-\bar{F}}\|^2.
\end{align*}
The second inequality is due to $\triangledown Q(\beta^{(k)})_j=0$
for any $j\in F^{(k)}-\bar{F}$ and the third inequality is from
$\bar{\beta}_{F^{(k)}-\bar{F}}=0$. It follows that $${\rho_+(1)\over
2} \|(\beta^{(k)}-\bar\beta)_{F^{(k)}-\bar{F}}\|^2\geq
|F^{(k)}-\bar{F}|(\min_{j\in F^{(k)}} Q(\beta^{(k)}-\beta^{(k)}_j
e_j) - Q(\beta^{(k)}))\geq |F^{(k)}-\bar{F}|{\delta^{(k)}\over 2},$$
which implies the claim using $\delta^{(k)}\geq {\epsilon^2\over
2\rho_+(1)}$.
\end{proof}

\begin{lemma}\label{lem_bark}
  Let $\bar\beta = \arg\min_{supp(\beta)\in \bar{F}}Q(\beta)$. Consider $\beta^{(k)}$ in the beginning of each iteration in Algorithm~\ref{alg_FoBaobj} with FoBa-obj. Denote its support as $F^{(k)}$. Let $s$ be any integer larger than $|F^{(k)}-\bar{F}|$. If take $\epsilon > {2\sqrt{2}\rho_+(1)\over \rho_-(s)}\|\triangledown Q(\bar\beta)\|_\infty$ in FoBa-obj, then we have $Q(\beta^{(k)})\geq Q(\bar\beta)$.
\end{lemma}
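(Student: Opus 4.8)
The plan is to follow the proof of Lemma~\ref{lem_bark_obj} essentially verbatim, with the single change that the backward-step estimate Lemma~\ref{lem_gbc} (the gradient version) replaces Lemma~\ref{lem_gbc_obj}. The structure is: apply restricted strong convexity at $\bar\beta$, kill most of the linear term using the optimality of $\bar\beta$ on $\bar F$, control the surviving $\ell_1$ term by $\sqrt{|F^{(k)}-\bar F|}$ times the $\ell_2$ norm, and then use the backward lemma to trade $\sqrt{|F^{(k)}-\bar F|}$ against $\|\beta^{(k)}-\bar\beta\|$ so that everything collapses to a single positive multiple of $\|\beta^{(k)}-\bar\beta\|^2$.

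Concretely, first I would instantiate \eqref{eqn_RSCC} with $t=\beta^{(k)}-\bar\beta$ to get $Q(\beta^{(k)})-Q(\bar\beta)\geq \langle\triangledown Q(\bar\beta),\beta^{(k)}-\bar\beta\rangle+\tfrac{\rho_-(s)}{2}\|\beta^{(k)}-\bar\beta\|^2$. Since $\bar\beta=\arg\min_{\supp(\beta)\subset\bar F}Q(\beta)$ we have $\triangledown Q(\bar\beta)_{\bar F}=0$, and since $\beta^{(k)}-\bar\beta$ is supported on $F^{(k)}\cup\bar F$, the inner product reduces to $\langle \triangledown Q(\bar\beta)_{F^{(k)}-\bar F},(\beta^{(k)}-\bar\beta)_{F^{(k)}-\bar F}\rangle\geq -\|\triangledown Q(\bar\beta)\|_\infty\|(\beta^{(k)}-\bar\beta)_{F^{(k)}-\bar F}\|_1$. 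Then I use $\|v\|_1\leq\sqrt{|F^{(k)}-\bar F|}\,\|v\|$ on the vector supported on $F^{(k)}-\bar F$, followed by Lemma~\ref{lem_gbc}, which gives $\sqrt{|F^{(k)}-\bar F|}\leq \tfrac{\sqrt{2}\,\rho_+(1)}{\epsilon}\|(\beta^{(k)}-\bar\beta)_{F^{(k)}-\bar F}\|\leq \tfrac{\sqrt{2}\,\rho_+(1)}{\epsilon}\|\beta^{(k)}-\bar\beta\|$. Chaining these bounds yields
\[
Q(\beta^{(k)})-Q(\bar\beta)\;\geq\;\left(\frac{\rho_-(s)}{2}-\frac{\sqrt{2}\,\rho_+(1)}{\epsilon}\|\triangledown Q(\bar\beta)\|_\infty\right)\|\beta^{(k)}-\bar\beta\|^2.
\]
The hypothesis $\epsilon>\tfrac{2\sqrt{2}\,\rho_+(1)}{\rho_-(s)}\|\triangledown Q(\bar\beta)\|_\infty$ is exactly what makes the parenthesized coefficient nonnegative (strictly positive), so $Q(\beta^{(k)})\geq Q(\bar\beta)$, proving the claim.

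There is no genuine obstacle here; it is a routine transcription of the FoBa-obj argument. The only point requiring care is that Lemma~\ref{lem_gbc} carries the constant $\tfrac{\epsilon^2}{2\rho_+(1)^2}$ rather than $\tfrac{\delta}{\rho_+(1)}$, so the factor one extracts when inverting it is $\tfrac{\sqrt{2}\,\rho_+(1)}{\epsilon}$; keeping track of that $\sqrt{2}$ is what produces the threshold $\tfrac{2\sqrt{2}\,\rho_+(1)}{\rho_-(s)}\|\triangledown Q(\bar\beta)\|_\infty$ in the statement (as opposed to the $\tfrac{4\rho_+(1)}{\rho_-(s)^2}\|\triangledown Q(\bar\beta)\|_\infty^2$ bound on $\delta$ in Lemma~\ref{lem_bark_obj}). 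One should also note, as in Lemma~\ref{lem_bark_obj}, that $s>|F^{(k)}-\bar F|$ is used only through $\rho_-(s)$, and the inequality $\delta^{(k)}\geq \tfrac{\epsilon^2}{2\rho_+(1)}$ established right before the lemma is what legitimizes invoking Lemma~\ref{lem_gbc} in the first place.
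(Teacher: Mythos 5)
Your proposal is correct and matches the paper's own proof essentially step for step: restricted strong convexity at $\bar\beta$, vanishing of $\triangledown Q(\bar\beta)$ on $\bar F$, the $\ell_1$--$\ell_2$ bound on $F^{(k)}-\bar F$, and Lemma~\ref{lem_gbc} to absorb $\sqrt{|F^{(k)}-\bar F|}$, yielding the coefficient $\frac{\rho_-(s)}{2}-\frac{\sqrt{2}\rho_+(1)\|\triangledown Q(\bar\beta)\|_\infty}{\epsilon}$. The only (immaterial) difference is that you conclude directly from the nonnegativity of this coefficient, whereas the paper phrases the same chain as a contradiction.
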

\begin{proof}
 We have
  \begin{align*}
    0 > & Q(\beta^{(k)}) - Q(\bar{\beta}) \\
    \geq & \langle \triangledown Q(\bar\beta), \beta^{(k)}-\bar\beta \rangle + {\rho_-(s)\over 2}\|\beta^{(k)}-\bar\beta\|^2\\
    \geq & \langle \triangledown Q(\bar\beta)_{F^{(k)}-\bar{F}}, (\beta^{(k)}-\bar\beta)_{F^{(k)}-\bar{F}} \rangle + {\rho_-(s)\over 2}\|\beta^{(k)}-\bar\beta\|^2\\
    \geq & -\|\triangledown Q(\bar\beta)\|_\infty \|(\beta^{(k)}-\bar\beta)_{F^{(k)}-\bar{F}}\|_1 + {\rho_-(s)\over 2}\|\beta^{(k)}-\bar\beta\|^2\\
    \geq & -\|\triangledown Q(\bar\beta)\|_\infty\sqrt{|F^{(k)}-\bar{F}|} \|\beta^{(k)}-\bar\beta\| + {\rho_-(s)\over 2}\|\beta^{(k)}-\bar\beta\|^2\\
    \geq & -\|\triangledown Q(\bar\beta)\|_\infty \|\beta^{(k)}-\bar\beta\|^2{\sqrt{2} \rho_+(1)\over \epsilon} + {\rho_-(s)\over 2}\|\beta^{(k)}-\bar\beta\|^2~~~~(\text{from Lemma~\ref{lem_gbc}})\\
    =& \left({\rho_-(s)\over 2}-{\|\triangledown Q(\bar\beta)\|_\infty \sqrt{2} \rho_+(1)\over \epsilon}\right)\|\beta^{(k)}-\bar\beta\|^2\\
    >& 0.~~~~(\text{from $\epsilon > {2\sqrt{2}\rho_+(1)\over \rho_-(s)}\|\triangledown Q(\bar\beta)\|_\infty$})
  \end{align*}
  It proves our claim.
\end{proof}

\noindent {\bf Proof of Theorem~\ref{thm_main1}}
\begin{proof}
  From Lemma~\ref{lem_bark}, we only need to consider the case $Q(\beta^{(k)})\geq Q(\bar\beta)$. We have
  \begin{align*}
    0\geq Q(\bar\beta)-Q(\beta^{(k)}) \geq \langle \triangledown Q(\beta^{(k)}), \bar\beta-\beta^{(k)} \rangle + {\rho_{-}(s) \over 2}\|\bar{\beta}-\beta^{(k)}\|^2,
  \end{align*}
  which implies that
  \begin{align*}
    {\rho_-(s)\over 2} \|\bar\beta-\beta^{(k)}\|^2 \leq & - \langle \triangledown Q(\beta^{(k)}), \bar\beta-\beta^{(k)} \rangle\\
    =&-\langle \triangledown Q(\beta^{(k)})_{\bar{F}-F^{(k)}}, (\bar\beta-\beta^{(k)})_{\bar{F}-F^{(k)}} \rangle\\
    \leq & \|\triangledown Q(\beta^{(k)})_{\bar{F}-F^{(k)}}\|_\infty \|(\bar\beta-\beta^{(k)})_{\bar{F}-F^{(k)}}\|_1 \\
    \leq & \epsilon \sqrt{|\bar{F}-F^{(k)}|}|\bar\beta-\beta^{(k)}\|.
  \end{align*}
  We obtain
  \begin{equation}
  \|\bar\beta-\beta^{(k)}\|\leq {2\epsilon\over \rho_-(s)}\sqrt{|\bar{F}-F^{(k)}|}.
  \label{eqn_thmmain_1}
  \end{equation}
  It follows that
  \begin{align*}
    {4\epsilon^2\over \rho_-(s)^2}|\bar{F}-F^{(k)}| \geq & \|\bar{\beta}-\beta^{(k)}\|^2 \\
    \geq & \|(\bar\beta-\beta^{(k)})_{\bar{F}-F^{(k)}}\|^2 \\
    = & \|\bar{\beta}_{\bar{F}-F^{(k)}}\|^2 \\
    \geq & \gamma^2 |\{j\in \bar{F}-F^{(k)}:~|\bar{\beta}_j|\geq\gamma\}|\\
    =& {8\epsilon^2\over \rho_-(s)^2}|\{j\in \bar{F}-F^{(k)}:~|\bar{\beta}_j|\geq\gamma\}|,
  \end{align*}
  which implies that
  \begin{align*}
    &|\bar{F}-F^{(k)}|\geq 2|\{j\in \bar{F}-F^{(k)}:~|\bar{\beta}_j|\geq \gamma\}| = 2(|\bar{F}-F^{(k)}|-|\{j\in \bar{F}-F^{(k)}:~|\bar{\beta}_j|<\gamma\}|)\\
    \Rightarrow & |\bar{F}-F^{(k)}|\leq 2|\{j\in \bar{F}-F^{(k)}:~|\bar{\beta}_j|< \gamma\}|.
  \end{align*}
  The first inequality is obtained from Eq.~\eqref{eqn_thmmain_1} $$\|\bar\beta-\beta^{(k)}\|\leq {2\epsilon\over \rho_-(s)}\sqrt{|\bar{F}-F^{(k)}|}\leq {2\sqrt{2}\epsilon\over \rho_-(s)}\sqrt{|j\in \bar{F}-F^{(k)}: |\bar{\beta}_j|<\gamma|}.$$
  Next, we consider
  \begin{align*}
    &Q(\bar\beta)-Q(\beta^{(k)})\\
    \geq & \langle \triangledown Q(\beta^{(k)}), \bar\beta-\beta^{(k)} \rangle + {\rho_{-}(s) \over 2}\|\bar{\beta}-\beta^{(k)}\|^2\\
    = & \langle \triangledown Q(\beta^{(k)})_{\bar{F}-F^{(k)}}, (\bar\beta-\beta^{(k)})_{\bar{F}-F^{(k)}} \rangle + {\rho_{-}(s) \over 2}\|\bar{\beta}-\beta^{(k)}\|^2~~~~(\text{from~$\triangledown Q(\beta^{(k)})_{F^{(k)}}=0$})\\
    \geq & -\|\triangledown Q(\beta^{(k)})\|_\infty\|\bar{\beta}-\beta^{(k)}\|_1 + {\rho_{-}(s) \over 2}\|\bar{\beta}-\beta^{(k)}\|^2 \\
    \geq & -\epsilon\sqrt{|\bar{F}-F^{(k)}|}\|\bar{\beta}-\beta^{(k)}\| + {\rho_{-}(s) \over 2}\|\bar{\beta}-\beta^{(k)}\|^2\\
    \geq &{\rho_-(s)\over 2}\left(\|\bar\beta-\beta^{(k)}\| - \epsilon\sqrt{|\bar{F}-F^{(k)}|}/\rho_-(s)\right)^2 - \epsilon^2|\bar{F}-F^{(k)}|/(2\rho_-(s))\\
    \geq & - \epsilon^2|\bar{F}-F^{(k)}|/(2\rho_-(s))\\
    \geq & - {\epsilon^2\over \rho_-(s)}|\{j\in \bar{F}-F^{(k)}:~|\bar{\beta}_j|<\gamma\}|.
  \end{align*}
  It proves the second inequality.
The last inequality in this theorem is obtained from
  \begin{align*}
    &{\epsilon^2\over 2\rho_+(1)^2}|F^{(k)}-\bar{F}|\\
    \leq &\|(\beta^{(k)}-\bar\beta)_{F^{(k)}-\bar{F}}\|^2 ~~~~(\text{from Lemma~\ref{lem_gbc}})\\
    \leq &\|\beta^{(k)}-\bar\beta\|^2 \\
    \leq &{4\epsilon^2\over \rho_-(s)^2}|\bar{F}-F^{(k)}|\\
    \leq &{8\epsilon^2\over \rho_-(s)^2}|\{j\in \bar{F}-F^{(k)}: |\bar{\beta}_j|<\gamma\}|.
  \end{align*}
  It completes the proof.
\end{proof}

Next we will study the upper bound of ``k'' when Algorithm~\ref{alg_FoBaobj} terminates.
\begin{lemma}
(General forward step) Let $\beta = \arg\min_{supp(\beta)\subset F}
Q(\beta)$. For any $\beta'$ with support $F'$ and
$i\in\{i:~|\triangledown Q(\beta)_i| \geq \max_{j}|\triangledown
Q(\beta)_j|\}$, we have
$$|F'-F|(Q(\beta)-\min_{\eta}Q(\beta+\eta e_i))\geq {\rho_-(s)\over \rho_+(1)}(Q(\beta)-Q(\beta')).$$
\end{lemma}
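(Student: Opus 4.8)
The plan is to follow the blueprint of Lemma~\ref{lem_gfs_obj}, but to exploit that the gradient selection rule links directly --- without any averaging over the coordinates in $F'-F$ --- both to the one-step objective decrease at $e_i$ and to the total gap $Q(\beta)-Q(\beta')$. Concretely, I will show that the quantity $|F'-F|\,\|\nabla Q(\beta)\|_\infty^2/(2\rho_+(1))$ lower-bounds the left-hand side $|F'-F|(Q(\beta)-\min_\eta Q(\beta+\eta e_i))$ and simultaneously upper-bounds $\frac{\rho_-(s)}{\rho_+(1)}(Q(\beta)-Q(\beta'))$; matching the two ends then finishes the proof. Throughout I use that $\beta$ being the minimizer of $Q$ over vectors supported on $F$ gives $\nabla Q(\beta)_F=0$, and that $\|\beta-\beta'\|_0\le s$ in the regime where the lemma is applied (exactly the convention of Lemma~\ref{lem_gfs_obj}), so that $\rho_-(s)$ may legitimately be used in \eqref{eqn_RSCC}.

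For the left-hand side: since $i$ attains $|\nabla Q(\beta)_i|=\|\nabla Q(\beta)\|_\infty$, the first (single-coordinate) lemma of the present section --- whose proof in fact establishes $Q(\beta)-\min_\eta Q(\beta+\eta e_j)\ge |\nabla Q(\beta)_j|^2/(2\rho_+(1))$ for every $j$ --- gives $Q(\beta)-\min_\eta Q(\beta+\eta e_i)\ge \|\nabla Q(\beta)\|_\infty^2/(2\rho_+(1))$. Multiplying by $|F'-F|$ yields the desired lower bound on the left-hand side.

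For $Q(\beta)-Q(\beta')$: the lower inequality in \eqref{eqn_RSCC} applied to the pair $\beta,\beta'$ gives $Q(\beta)-Q(\beta')\le \langle \nabla Q(\beta),\,\beta-\beta'\rangle-\frac{\rho_-(s)}{2}\|\beta-\beta'\|^2$. Using $\nabla Q(\beta)_F=0$ and $\supp(\beta-\beta')\subseteq F\cup F'$, the inner product collapses to $\langle \nabla Q(\beta)_{F'-F},\,(\beta-\beta')_{F'-F}\rangle$, which is at most $\|\nabla Q(\beta)\|_\infty\,\|(\beta-\beta')_{F'-F}\|_1 \le \|\nabla Q(\beta)\|_\infty\sqrt{|F'-F|}\,\|\beta-\beta'\|$ by H\"older followed by Cauchy--Schwarz. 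Maximizing the concave quadratic $a x-\frac{\rho_-(s)}{2}x^2$ over $x=\|\beta-\beta'\|\ge 0$, with $a=\|\nabla Q(\beta)\|_\infty\sqrt{|F'-F|}$, gives $Q(\beta)-Q(\beta')\le a^2/(2\rho_-(s)) = |F'-F|\,\|\nabla Q(\beta)\|_\infty^2/(2\rho_-(s))$, hence $\frac{\rho_-(s)}{\rho_+(1)}(Q(\beta)-Q(\beta'))\le |F'-F|\,\|\nabla Q(\beta)\|_\infty^2/(2\rho_+(1))$. Chaining this with the bound of the previous paragraph proves the lemma; note the two estimates meet exactly at $|F'-F|\,\|\nabla Q(\beta)\|_\infty^2/(2\rho_+(1))$.

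I expect no genuine obstacle here. Unlike the FoBa-obj case, no $(a+b)^2\ge 4ab$ manipulation and no substitution of the coordinate-wise step $\eta(\beta'_j-\beta_j)e_j$ is needed, since the gradient rule already speaks the same language ($\|\nabla Q(\beta)\|_\infty^2$) on both sides of the target inequality. The only point that must be handled with care is the implicit sparsity budget $|F\cup F'|\le s$ required to invoke $\rho_-(s)$ in \eqref{eqn_RSCC}; this matches the convention in Lemma~\ref{lem_gfs_obj} and is satisfied whenever the lemma is later used (e.g.\ with $F'=\bar F\cup F^{(k)}$ in the termination-bound argument).
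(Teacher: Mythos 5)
Your proof is correct, and it takes a genuinely different route from the paper's. The paper follows the Zhang-style argument (its Lemma A.3 blueprint, as in Lemma~\ref{lem_gfs_obj}): it introduces the surrogate functions $P_j(\eta)=\eta\,\mathrm{sgn}(\beta'_j)\nabla Q(\beta)_j+\tfrac{\rho_+(1)}{2}\eta^2$, averages them over $j\in F'-F$ with weights $|\beta'_j|$, compares the best coordinate $i$ to this weighted average via $\min_\eta P_i(\eta)\le\min_j P_j(\eta)$, and then extracts the gap $Q(\beta)-Q(\beta')$ with a specific choice of $\eta$, the inequality $(a+b)^2\ge 4ab$, and the Cauchy--Schwarz bound $\|(\beta-\beta')_{F'-F}\|_1^2\le |F'-F|\,\|\beta-\beta'\|^2$. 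You instead decouple the two sides through the common pivot $|F'-F|\,\|\nabla Q(\beta)\|_\infty^2/(2\rho_+(1))$: the single-coordinate estimate (which the proof of the section's first lemma indeed establishes in the form $Q(\beta)-\min_\eta Q(\beta+\eta e_j)\ge|\nabla Q(\beta)_j|^2/(2\rho_+(1))$) lower-bounds the left-hand side, while RSCC at scale $s$ plus H\"older/Cauchy--Schwarz and maximizing the concave quadratic in $\|\beta-\beta'\|$ gives $Q(\beta)-Q(\beta')\le|F'-F|\,\|\nabla Q(\beta)\|_\infty^2/(2\rho_-(s))$; chaining yields exactly the stated constant, and the inequality is trivial when $Q(\beta)\le Q(\beta')$. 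Your version is shorter and avoids both the sign-alignment trick and the weighted averaging; what the paper's route buys is mainly uniformity with the FoBa-obj forward lemma (the same template covers the objective-based selection rule, where no single quantity like $\|\nabla Q(\beta)\|_\infty$ is available to pivot on), whereas your argument exploits the special structure of the gradient rule. You also correctly flag the one hypothesis both proofs use implicitly: $|F\cup F'|\le s$ (and $\beta,\beta'\in\D_s$) so that $\rho_-(s)$ in \eqref{eqn_RSCC} applies to the pair $(\beta,\beta')$, which holds in the lemma's later applications.
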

\begin{proof}
The following proof follows the idea of Lemma A.3 in
\cite{Zhang11a}. Denote $i^*$ as $\arg\max_i|\triangledown
Q(\beta)_{i}|$. For all $j\in\{1,\cdots,d\}$, we define
$$P_j(\eta)=\eta \bold{sgn}(\beta'_j)\triangledown Q(\beta)_j +{\rho_+(1)\over 2}\eta^2.$$ It is easy to verify that $\min_{\eta}P_j(\eta) = -{|\triangledown Q(\beta)_j|^2\over 2\rho_+(1)}$ and
\begin{equation}
\min_\eta P_i(\eta) \leq \min_{\eta} P_{i^*}(\eta)=\min_{j,\eta}
P_{j}(\eta)\leq \min_{j}P_j(\eta). \label{eqn_lemgfs_1}
\end{equation}

Denoting $u=\|\beta'_{F'-F}\|_1$, we obtain that
\begin{align*}
  u\min_{j}P_j(\eta) \leq & \sum_{j\in F'-F}|\beta'_j|P_j(\eta) \\
  = & \eta\sum_{j\in F'-F}\beta'_j\triangledown Q(\beta)_j + {u\rho_+(1)\over 2}\eta^2 \\
  = & \eta\langle\beta'_{F'-F}, \triangledown Q(\beta)_{F'-F}\rangle + {u\rho_+(1)\over 2}\eta^2 \\
  = & \eta\langle\beta'-\beta, \triangledown Q(\beta)\rangle + {u\rho_+(1)\over 2}\eta^2~~~~(\text{from $\triangledown Q(\beta)_{F}=0$}) \\
  \leq &\eta(Q(\beta')-Q(\beta)-{\rho_-(s)\over 2}\|\beta'-\beta\|^2) + {u\rho_+(1)\over 2}\eta^2.
 \end{align*}
 Taking $\eta = (Q(\beta)-Q(\beta')+{\rho_-(s)\over 2}\|\beta'-\beta\|^2)/{u\rho_+(1)}$ on both sides, we get
 \begin{align*}
 u\min_{j}P_j(\eta) \leq -{\left(Q(\beta)-Q(\beta') + {\rho_-(s)\over 2}\|\beta-\beta'\|^2\right)^2 \over 2\rho_+(1)u}\leq -{(Q(\beta)-Q(\beta'))\rho_-(s)\|\beta-\beta'\|^2\over \rho_+(1)u},
 \end{align*}
 where it uses the inequality $(a+b)^2\geq 4ab$. From Eq.~\eqref{eqn_lemgfs_1}, we have
 \begin{align*}
   \min_\eta P_i(\eta)\leq & \min_{j}P_j(\eta) \\
   \leq &-{(Q(\beta)-Q(\beta'))\rho_-(s)\|\beta-\beta'\|^2\over \rho_+(1)u^2} \\
   \leq &-{\rho_-(s)\over \rho_+(1)|F'-F|}(Q(\beta)-Q(\beta')),
 \end{align*}
 where we use ${\|\beta-\beta'\|^2\over u^2} = {\|\beta-\beta'\|^2\over \|(\beta-\beta')_{F'-F}\|^2} \geq {1\over |F'-F|}$. Finally, we use $$\min_\eta Q(\beta+\eta e_i) - Q(\beta) \leq \min_\eta Q(\beta+\eta \bold{sgn}(\beta'_i)e_i) - Q(\beta) \leq \min_\eta P_i(\eta)$$ to prove the claim.
\end{proof}

\noindent {\bf Proof of Theorem~\ref{thm_main2}}
\begin{proof}
  Please refer to the proof of Theorem~\ref{thm_main2_obj}.
\end{proof}

\section{Proofs of RSCC}

\noindent{\bf Proof of Theorem~\ref{thm_kappa}}
\begin{proof}
First from the random matrix theory~\citep{Vershynin11}, we have that for any random matrix $X\in\mathbb{R}^{n\times d}$ with independent sub-gaussian isotropic random rows or columns, there exists a constant $C_0$ such that
\begin{equation}
\sqrt{n} - C_0 \sqrt{s\log d} \leq \min_{\|h\|_0\leq s} \frac{\|Xh\|}{\|h\|} \leq \max_{\|h\|_0\leq s} \frac{\|Xh\|}{\|h\|} \leq \sqrt{n} + C_0\sqrt{s\log d}
\label{eqn_ripbound}
\end{equation}
holds with high probability.

From the mean value theorem, for any $\beta', \beta \in \D_s$, there exists a point $\theta$ in the segment of $\beta'$ and $\beta$ satisfying
\begin{align}
&
Q(\beta')-Q(\beta) - \langle \nabla Q(\beta), \alpha-\beta \rangle = {1\over 2}(\beta'-\beta)^T\nabla^2 Q(\theta)(\beta'-\beta)
\nonumber
\\ & \quad
={1\over 2}(\beta'-\beta)^T\left({1\over n}\sum_{i=1}^nX^T_{i.}\nabla^2_1l_i(X_{i.}\theta, y_i)X_{i.} + \nabla^2 R(\theta) \right)(\beta'-\beta)
\nonumber
\\ & \quad\quad
\text{(from $\theta\in\D_s$)}
\nonumber
\\ & \quad
\geq {1\over 2}(\beta'-\beta)^T\left({1\over n}\sum_{i=1}^n\lambda^-X^T_{i.}X_{i.} + \lambda^-_R I \right)(\beta'-\beta)
\nonumber
\\ & \quad
= {1\over 2}(\beta'-\beta)^T\left({\lambda^-\over n}X^TX + \lambda^-_R I \right)(\beta'-\beta)
\nonumber
\\ & \quad
= {\lambda^-\over 2n}\|X(\beta'-\beta)\|^2 + {\lambda^-_R\over 2} \|\beta'-\beta\|^2
\label{eqn_proof_kappa_1}
\end{align}
Similarly, one can easily obtain
\begin{equation}
Q(\beta')-Q(\beta) - \langle \nabla Q(\beta), \alpha-\beta \rangle \leq {\lambda^+\over 2n}\|X(\beta'-\beta)\|^2 + {\lambda^+_R\over 2} \|\beta'-\beta\|^2
\label{eqn_proof_kappa_2}
\end{equation}
Using~\eqref{eqn_ripbound}, we have
\[
(\sqrt{n} - C_0 \sqrt{s\log d})^2 \|\beta'-\beta\|^2 \leq \|X(\beta'-\beta)\|^2 \leq (\sqrt{n} + C_0 \sqrt{s\log d})^2 \|\beta'-\beta\|^2
\]
Together with~\eqref{eqn_proof_kappa_1} and \eqref{eqn_proof_kappa_2}, we obtain
\begin{align*}
 &{1\over 2}\left({\lambda^-_R + \lambda^-\left(1-C_0\sqrt{\frac{s\log d}{n}}\right)^2}\right)\|\beta'-\beta\|^2 \leq Q(\beta')-Q(\beta) - \langle \nabla Q(\beta), \alpha-\beta \rangle \\
 &\quad \leq {1\over 2}\left({\lambda^+_R + \lambda^+\left(1+C_0\sqrt{\frac{s\log d}{n}}\right)^2}\right)\|\beta'-\beta\|^2,
\end{align*}
which implies the claims \eqref{eqn_thm_kappa_+}, \eqref{eqn_thm_kappa_+}, and \eqref{eqn_thm_kappa_+-} by taking $n \geq Cs\log d$ where $C = C_0^2 / (1-\sqrt{2}/2)^{2}$.

Next we apply this result to prove the rest part. In Algorithm~\ref{alg_FoBaobj}, since choose $n\geq Cs\log d$, we have that $\rho_-(s)$, $\rho_+(s)$, and $\kappa(s)$ in Algorithm~\ref{alg_FoBaobj} with FoBa-obj satisfy the bounds in \eqref{eqn_thm_kappa_all}. To see why $s$ chosen in~\eqref{eqn_thm_kappa_s} satisfies the condition in~\eqref{eq:thm_1}, we verify the right hand side of \eqref{eq:thm_1}:
\[
RHS=(\bar{k}+1)\left[\left(\sqrt{\rho_+(s)\over \rho_-(s)}+1\right){2\rho_+(1)\over \rho_-(s)}\right]^2 \leq 4(\bar{k}+1)(\sqrt{\kappa}+1)^2\kappa^2 \leq s - \bar{k}.
\]
Therefore, Algorithm~\ref{alg_FoBaobj} terminates at most $4\kappa^2(\sqrt{\kappa}+1)^2(\bar{k}+1)$ iterations with high probability from Theorem~\ref{thm_main2_obj}. The claims for Algorithm~\ref{alg_FoBaobj} with FoBa-gdt can be proven similarly.

\end{proof}

\clearpage
\begin{center}
{\huge\bf
Appendix: Additional Experiments
}
\end{center}
\section{Additional Experiments} \label{sec:addexp}

We conduct additional experiments using two models: logistic regression and linear-chain CRFs. Four algorithms (FoBa-gdt, FoBa-obj, and their corresponding forward
greedy algorithms, i.e., Forward-obj and Forward-gdt) are compared on both synthetic data and real world data. Note that comparisons of FoBa-obj and L1-regularization methods can be found in previous studies~\citep{Jalali11, Zhang11}.

It is hard to evaluate $\rho_-(s)$ and $\rho_+(s)$ in practice, and we typically employ the following strategies to set $\delta$ in FoBa-obj and FoBa-gdt:
\begin{itemize}
\item If we have the knowledge of $\bar{F}$ (e.g., in synthetic simulations), we can compute the true solution $\bar{\beta}$. $\delta$ and $\epsilon$ are given as $\delta = Q(\bar{\beta}) - \min_{\alpha, j\notin \bar{F}} Q(\bar\beta + \alpha e_j)$ and $\epsilon = \|\triangledown Q(\bar\beta)\|_\infty$.
\item We can change the stopping criteria in both FoBa-obj and FoBa-gdt based on the sparsity of $\beta^{(k)}$, that is, when the number of nonzero entries in $\beta^{(k)}$ exceeds a given sparsity level, the algorithm terminates.
\end{itemize}
We employed the first strategy only in synthetic simulations for logistic regression and the second strategy in all other cases.


\subsection{Experiments on Logistic Regression}


L$_2$-regularized logistic regression is a typical example of models
with smooth convex loss functions:
$$Q(\beta) = {1\over n}\sum_i \log (1+\exp(-y_i X_{i.}\beta)) + {1\over 2}\lambda ||\beta||^2.$$
where $X_{i.}\in\mathbb{R}^{d}$, that is, the $i^{th}$ row of $X$, is the $i^{th}$ sample and $y_i\in\{1,-1\}$ is the corresponding label. This is a binary
classification problem for finding a hyperplane defined by
$\beta$ in order to separate two classes of data in $X$. $\log(1+\exp (.))$
defines the penalty for misclassification. The $\ell_2$-norm is used for
regularization in order to mitigate overfitting. The gradient of $Q(\beta)$
is computed by
\begin{align*}
\frac{\partial Q(\beta)}{\partial \beta} = {1\over n}\sum_{i} \frac{-y_i\exp(-y_iX_{i.}\beta)}{1+\exp (-y_iX_{i.}\beta)}X^T_{i.} + \lambda \beta.
\end{align*}

\subsubsection{Synthetic Data}

We first compare FoBa-obj and FoBa-gdt with Forward-obj and
Forward-gdt\footnote{Forward-gdt is equivalent to the orthogonal
matching pursuit in \citep{Zhang09}.} using synthetic data generated as
follows. {\rt The data matrix $X \in \mathbb{R}^{n \times
d}$ consistes of two classes of data (each row is a sample) from
i.i.d. Gaussian distribution $\mathcal{N}(\beta^*, I_{d \times d})$
and $\mathcal{N}(-\beta^*, I_{d \times d})$, respectively, where the
true parameter $\beta^*$ is sparse and its nonzero entries follow
i.i.d. uniform distribution $\mathcal{U}[0,1]$. The two classes are
of the same size.} The norm of $\beta^*$ is normalized to 5 and its
sparseness ranges from 5 to 14. The vector $y$ is the class label
vector with values being 1 or -1. We set $\lambda=0.01, n=100$, and
$d=500$. The results are averaged over 50 random trials.

Four evaluation measures are considered in our comparison:
F-measure ($2|F^{(k)} \cap \bar{F}| / (|F^{(k)}| + |\bar{F}|)$),
estimation error ($\|\beta^{(k)} - \bar{\beta}\| /
\|\bar{\beta}\|$), objective value, and the number of nonzero
elements in $\beta^{(k)}$ (the value of $k$ when the algorithm
terminates). The results in Fig.~\ref{fig_LGT} suggest that:
\begin{itemize}
\item The FoBa algorithms outperform the forward algorithms overall, especially in F-measure (feature selection) performance;
\item The FoBa algorithms are likely to yield sparser solutions than the forward algorithms, since they allow for the removal of bad features;
\item There are minor differences in estimation errors and objective values between the FoBa algorithms and the forward algorithms;
\item FoBa-obj and FoBa-gdt perform similarly. The former gives better objective values and the latter is better in estimation error;
\item FoBa-obj and Forward-obj decrease objective values more quickly than FoBa-gdt and Forward-gdt since they employ the direct ``goodness'' measure (reduction of the objective value).
\end{itemize}

\begin{figure*}[ti]
  \centering
    \subfigure{\includegraphics[scale=0.22]{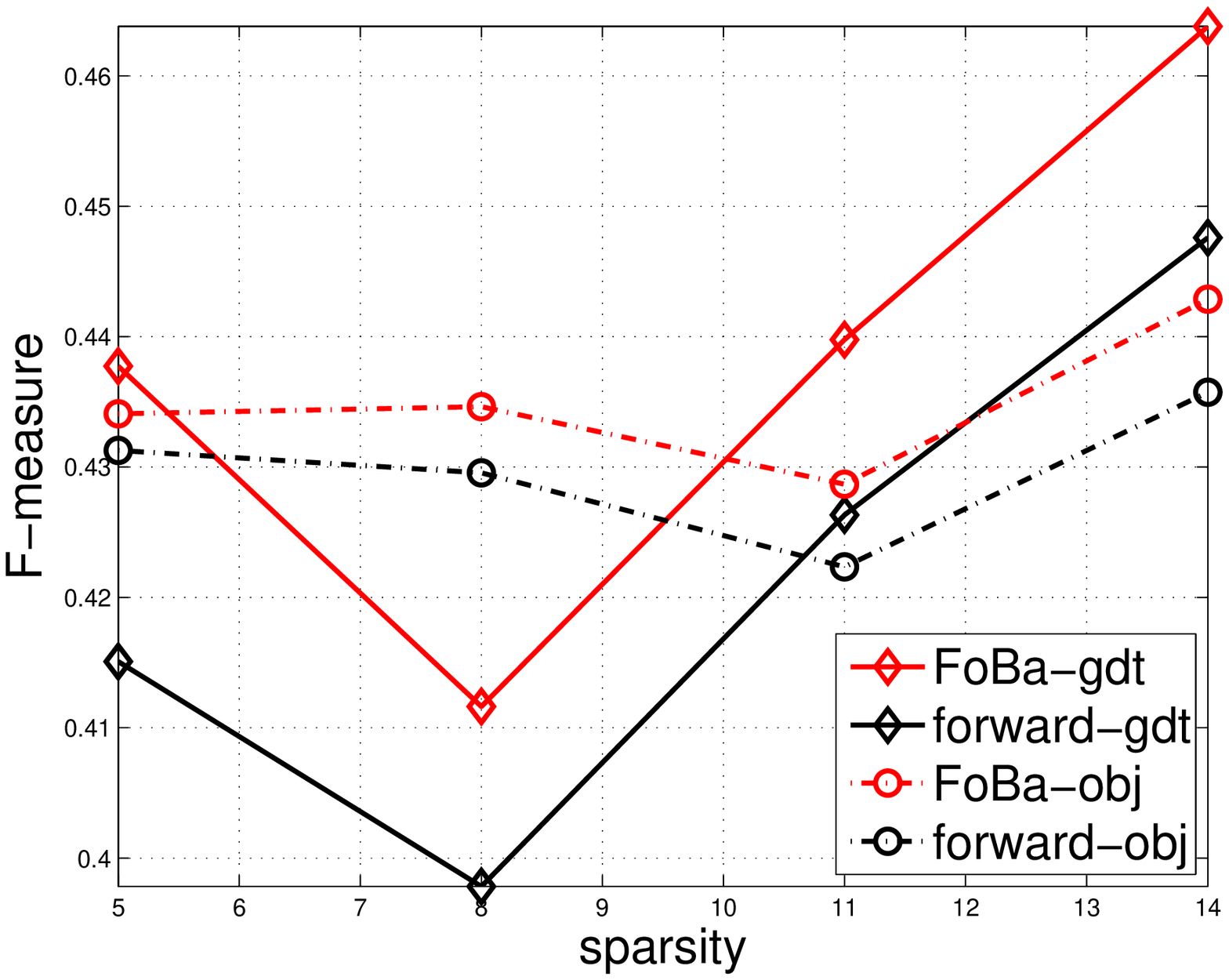}}
    \subfigure{\includegraphics[scale=0.22]{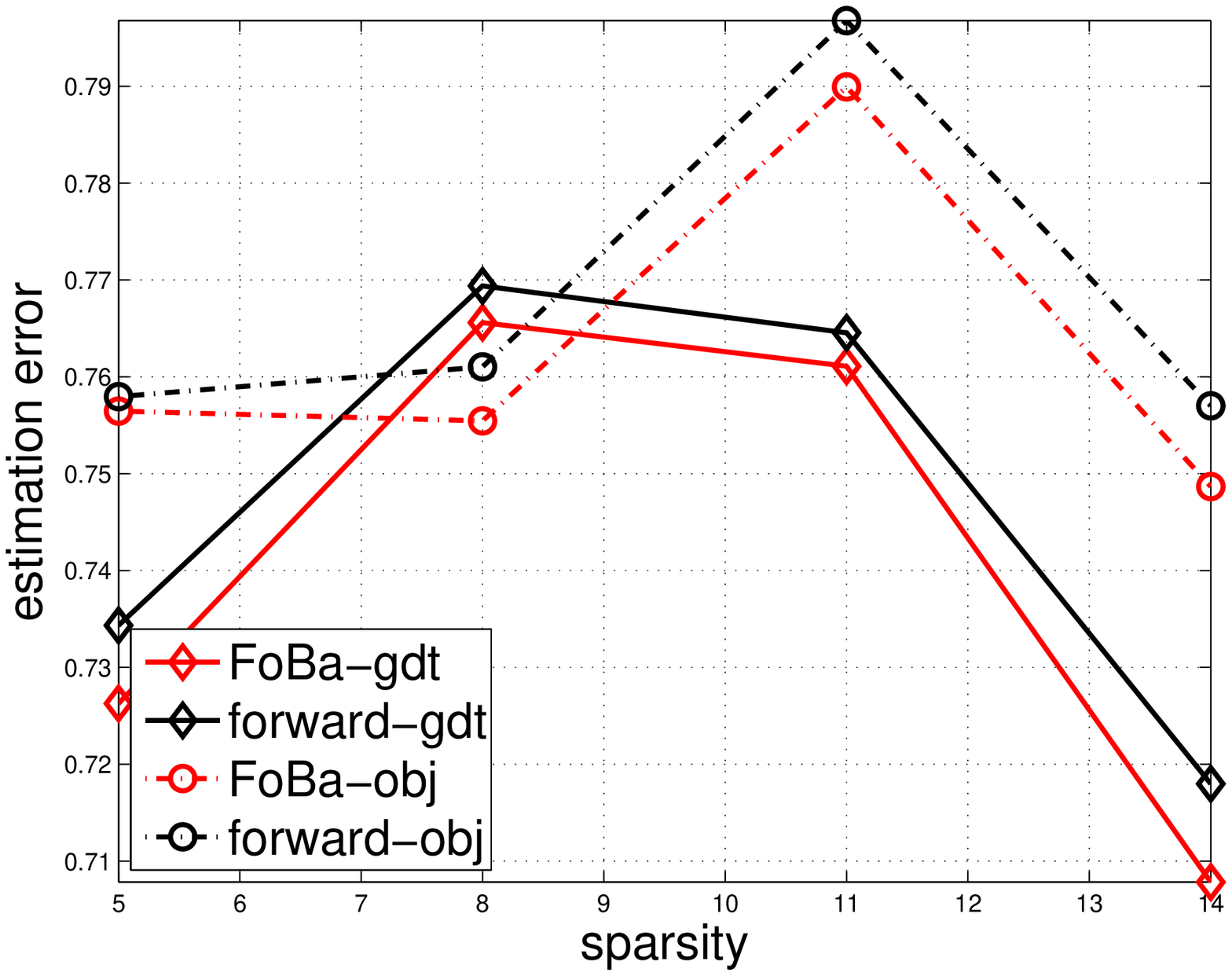}}
    \subfigure{\includegraphics[scale=0.22]{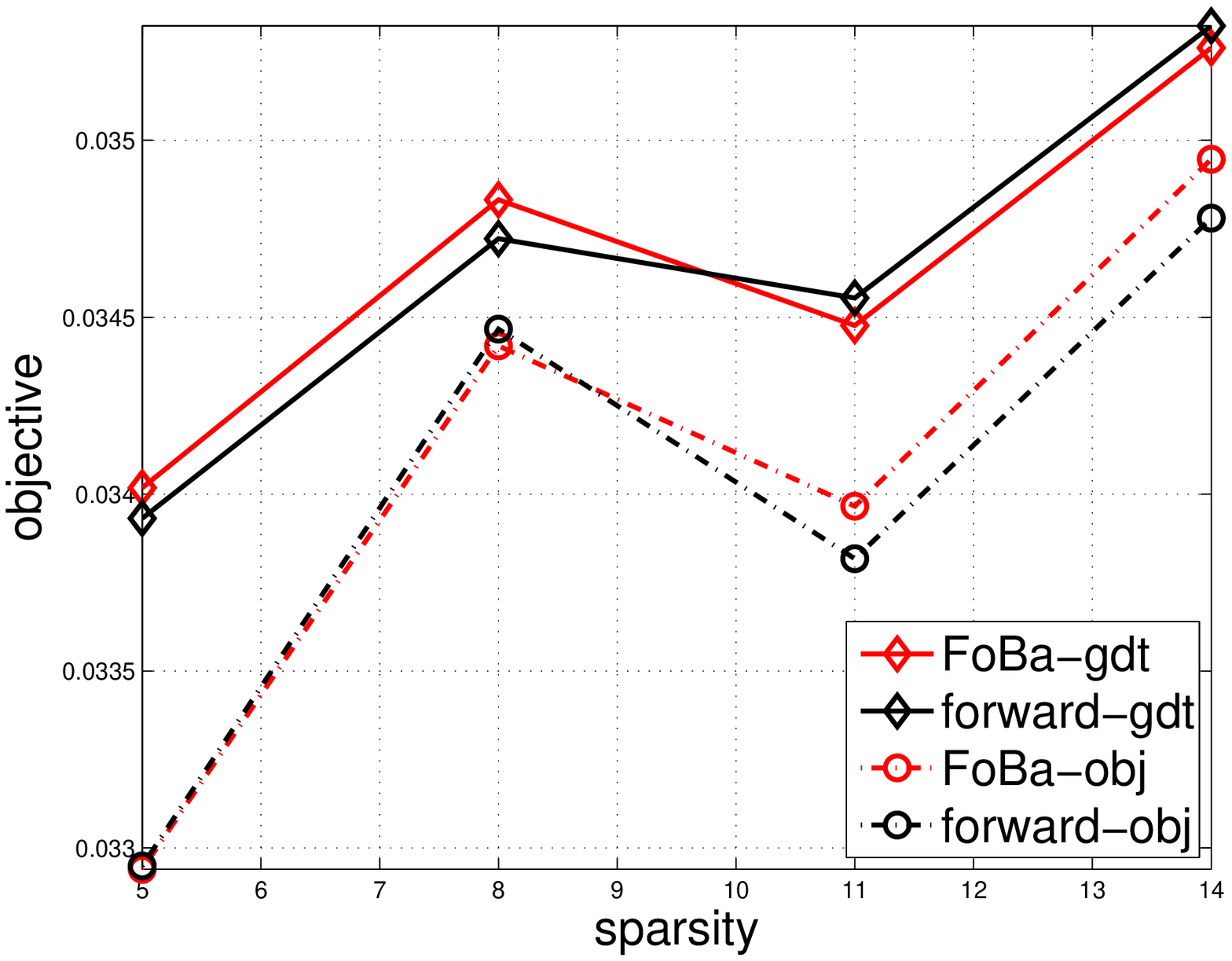}}
    \subfigure{\includegraphics[scale=0.22]{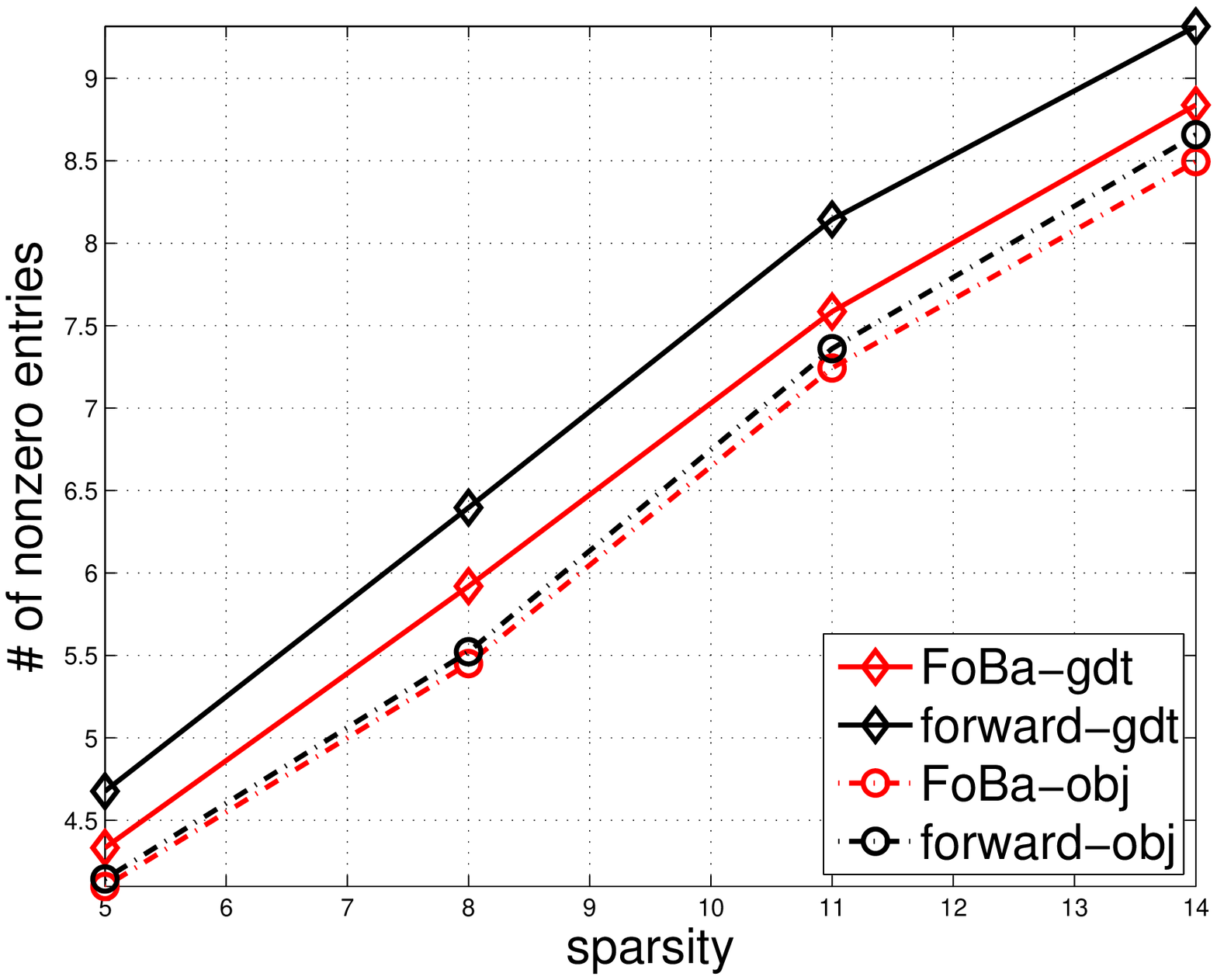}}
    \caption{Comparison for F-measure, estimation error, objective value, and the number of nonzero elements. The horizontal axis is the sparsity number of the true model ranging from 5 to 14.}
    \label{fig_LGT}
\end{figure*}

As shown in Table~\ref{tab_ct_syn}, FoBa-gdt and Forward-gdt are much more efficient than FoBa-obj and Forward-obj,
because FoBa-obj and Forward-obj solve a large number of optimization problems in each iteration.

\begin{table}[t]
\centering
{\bt \caption{Computation time (seconds) on synthetic data (top:
logistic regression; bottom: CRF).}
  \begin{tabular}{ c  c  c  c c}
\\ \toprule
    $S$ &FoBa-gdt&Forward-gdt&FoBa-obj&Forward-obj\\ \midrule
    5 &1.96e-2&0.65e-2&1.25e+1&1.24e+1 \\
    8 &1.16e-2&0.76e-2&1.52e+1&1.52e+1 \\
    11 &1.53e-2&1.05e-2&1.83e+1&1.80e+1 \\
    14 &1.51e-2&1.00e-2&2.29e+1&2.25e+1 \\ \midrule
   10&1.02e+0  &  0.52e+0 &  3.95e+1  & 1.88e+1 \\
    15 & 1.85e+0  &  1.05e+0  & 5.53e+1 &  2.65e+1 \\
    20 & 2.51e+0  &  1.53e+0 &  6.89e+1  & 3.33e+1 \\
    25 & 3.86e+0  &  2.00e+0 &  8.06e+1 &  3.91e+1 \\
    30 & 5.13e+0  &  3.00e+0 &  9.04e+1  & 4.39e+1 \\
    \bottomrule
  \end{tabular}\label{tab_ct_syn}
}
\end{table}

\subsubsection{UCI data}
\begin{table}[h]
\caption{Comparison of different algorithms in terms of training error, testing error, and objective function value (from left to right) on a1a, a2a, and w1a data sets (from top to bottom). The values of $n$ denotes the number of training/testing samples.}
\centering
  \begin{tabular}{ c  c  c  c }
\toprule
$S$ &objective&training error&testing error \\
\midrule \multicolumn{4}{c} {a1a (FoBa-obj/FoBa-gdt), $n=1605/30956,
d=123$}\\ \midrule
    10&{\bf556}/558&{\bf0.169}/0.17&0.165/{\bf0.164}\\
    20&{\bf517}/529&0.163/{\bf0.16}&0.163/{\bf0.160}\\
    30&{\bf499}/515&0.151/{\bf0.145}&0.162/{\bf0.158}\\
    40&{\bf490}/509&0.150/{\bf0.146}&0.162/{\bf0.159}\\
    50&{\bf484}/494&0.146/{\bf0.141}&0.162/{\bf0.161}\\
    60&{\bf481}/484&{\bf0.14}/0.141&0.166/{\bf0.161}\\
    70&{\bf480}/{\bf480}&{\bf0.138}/0.139&0.166/{\bf0.163}\\ \midrule
\multicolumn{4}{c} {a2a (FoBa-obj/FoBa-gdt), $n=2205/30296,
d=123$}\\ \midrule
    10&870/{\bf836}&0.191/{\bf0.185}&0.174/{\bf0.160}\\
    20&{\bf801}/810&0.18/{\bf0.177}&0.163/{\bf0.157}\\
    30&{\bf775}/790&0.172/{\bf0.168}&0.165/{\bf0.156}\\
    40&{\bf758}/776&{\bf0.162}/0.167&0.163/{\bf0.155}\\
    50&{\bf749}/764&{\bf0.163}/0.166&0.162/{\bf0.157}\\
    60&{\bf743}/750&0.162/{\bf0.161}&0.163/{\bf0.160}\\
    70&{\bf740}/742&0.162/{\bf0.158}&0.163/{\bf0.161}\\ \midrule
\multicolumn{4}{c} {w1a (FoBa-obj/FoBa-gdt), $n=2477/47272, d=300$}
\\ \midrule
    10&{\bf574}/595&0.135/{\bf0.125}&0.142/{\bf0.127}\\
    20&{\bf424}/487&0.0969/{\bf0.0959}&0.11/{\bf0.104}\\
    30&{\bf341}/395&0.0813/{\bf0.0805}&0.0993/{\bf0.0923}\\
    40&{\bf288}/337&{\bf0.0704}/0.0715&0.089/{\bf0.0853}\\
    50&{\bf238}/282&{\bf0.06}/0.0658&0.0832/{\bf0.0825}\\
    60&{\bf215}/226&{\bf0.0547}/0.0553&{\bf0.0814}/0.0818\\
    70&{\bf198}/206&{\bf0.0511}/{\bf0.0511}&0.0776/{\bf0.0775}\\\bottomrule
  \end{tabular}
\label{tab_lgt_real}
\end{table}

We next compare FoBa-obj and FoBa-gdt on UCI data sets. We use
$\lambda=10^{-4}$ in all experiments. Note that the sparsity numbers here
are different from those in the past experiment. Here
``S'' denotes the number of nonzero entries in the output. We
use three evaluation measures: 1) training classification error, 2)
test classification error, and 3) training objective value (i.e.,
the value of $Q(\beta^{(k)})$). The datasets are available
online\footnote{\url{http://www.csie.ntu.edu.tw/~cjlin/libsvmtools/
datasets}}. From Table~\ref{tab_lgt_real}, we observe that
\begin{itemize}
\item FoBa-obj and FoBa-gdt obtain similar training errors in all cases;
\item FoBa-obj usually obtain a lower objective value than FoBa-gdt, but this does not imply that FoBa-obj selects better features. Indeed, FoBa-gdt achieves lower test errors in several cases.
\end{itemize}
We also compare the computation time FoBa-obj and FoBa-gdt and
found that FoBa-gdt was empirically at least 10 times faster than
FoBa-obj (detailed results omitted due to space limitations).
In summary, FoBa-gdt has the same theoretical guarantee as FoBa-obj, and empirically FoBa-dgt achieves competitive in terms of empirical performance; in terms of computational time, FoBa-gdt is much more efficient.

\subsection{Simulations on Linear-Chain CRFs}\label{sec:CRFsyn}
Another typical class of model family with smooth convex loss
functions is conditional random fields~(CRFs) \citep{LaffertyMP01, Sutton06}, which is most commonly used in segmentation and tagging
problems with respect to sequential data. Let
$X_t\in\mathbb{R}^{D},~t=1,2,\cdots, T$ be a sequence of random
vectors. The corresponding labels of $X_t$'s are stored in a vector
$y\in \mathbb{R}^{T}$. Let $\beta\in\mathbb{R}^{M}$ be a parameter
vector and $f_m(y, y', x_t),~m=1,\cdots,M$, be a set of real-valued
feature functions. A linear-chain CRFs will then be a distribution
$\mathbb{P}_\beta(y|X)$ that takes the form
\begin{equation*}
\mathbb{P}_\beta(y|X) = {1\over Z(X)}\prod_{t=1}^T
\exp\left\{\sum_{m=1}^M\beta_mf_m(y_t, y_{t-1}, X_t)\right\}
\end{equation*}
where $Z(X)$ is an instance-specific normalization function
\begin{equation}
Z(X) = \sum_y\prod_{t=1}^T \exp\left\{\sum_{m=1}^M\beta_mf_m(y_t,
y_{t-1}, X_t)\right\}. \label{eqn_CRF_Z}
\end{equation}
In order to simplify the introduction below, we assume that all
$X_t$'s are discrete random vectors with limited states. If we have
training data $X^{i}$ and $y^i$($i=1,\cdots, I$) we can estimate the
value of the parameter $\beta$ by maximizing the likelihood
$\prod_{i}\mathbb{P}_{\beta}(y^i|X^i)$, which is equivalent to
minimizing the negative log-likelihood:
\begin{align*}
\min_{\beta}:~&Q(\beta):=-\log\left(\prod_{i=1}^I\mathbb{P}_\beta(y^i|X^i)\right)\\
=&-\sum_{i=1}^I\sum_{t=1}^T\sum_{m=1}^M\beta_mf_m(y_t^i,y_{t-1}^i,x_t^i)+\sum_{i=1}^I\log
Z(X^i).
\end{align*}
The gradient is computed by
\begin{align*}
{\partial Q(\beta)\over \partial
\beta_m}=&-\sum_{i}\sum_{t}f_m(y^i_t, y^i_{t-1},
X^{i}_t)+\sum_{i}{1\over Z(X^i)}{\partial Z(X^i)\over \partial \beta_{m}}
\end{align*}
Note that it would be extremely expensive (the complexity would be
$O(2^T)$) to directly calculate $Z(X^i)$ from
\eqref{eqn_CRF_Z}, since the sample space of $y$ is too large. So is
${\partial Z(X^i)\over \partial \beta_{m}}$. The sum-product
algorithm can reduce complexity to polynomial time in terms
of $T$(see \citep{Sutton06} for details).

{\bt In this experiment, we use a data generator provided in Matlab
CRF\footnote{\url{http://www.di.ens.fr/~mschmidt/Software/crfChain.html}}.
We omit a detailed description of the data generator here because of
space limitations. We generate a data matrix $X\in\{1,2,3,4,5\}^{T\times D}$
with $T=800$ and $D=4$ and the corresponding label vector
$y\in\{1,2,3,4\}^{T}$~(four classes)\footnote{We here choose a
small sample size because of the high computational costs of FoBa-obj
and Forward-obj.}. Because of our target application, i.e., sensor
selection, we only enforced sparseness to the features related to
observation (56 in all) and the features w.r.t. transitions
remained dense.
Fig.~\ref{fig_CRF_syn} shows the performance, averaged over 20 runs,
with respect to training error, test error, and objective value. We
observe from Fig~\ref{fig_CRF_syn}:}
\begin{itemize}
\item FoBa-obj and Forward-obj gives lower objective values, training error, and test error than FoBa-gdt and Forward-gdt when the sparsity level is low.
\item The performance gap between different algorithms becomes smaller when the sparsity level increases.
\item The performance of FoBa-obj is quite close to that of Forward-obj, while FoBa-gdt outperforms Forward-gdt particularly when the sparsity level was low.
\end{itemize}

{\lt The bottom of Table~\ref{tab_ct_syn} shows computation
time for all four algorithms. Similar to the logistic
regression case, the gradient based algorithms (FoBa-gdt and
Forward-gdt) are much more efficient than the objective based
algorithms FoBa-obj and Forward-obj. From Table~\ref{tab_ct_syn}, FoBa-obj is computationally more expensive than Forward-obj, while they are comparable in the logistic regression case.} The main reason for this is that the
evaluation of objective values in the CRF case is more expensive than that in the logistic regression case. These results demonstrate the potential of FoBa-gdt.

\begin{figure*}[h!]
  \centering
    \subfigure{\includegraphics[scale=0.30]{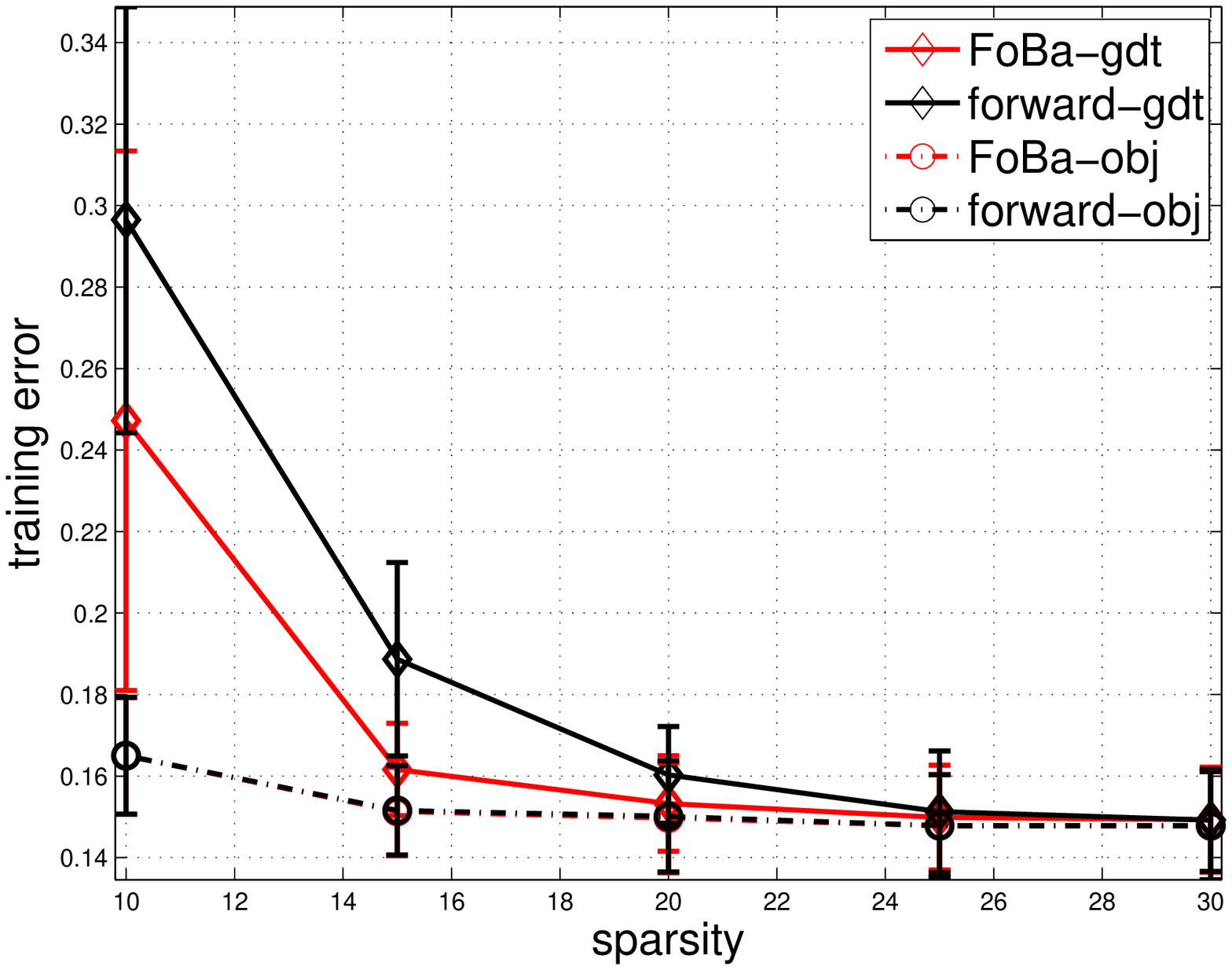}}
        \subfigure{\includegraphics[scale=0.30]{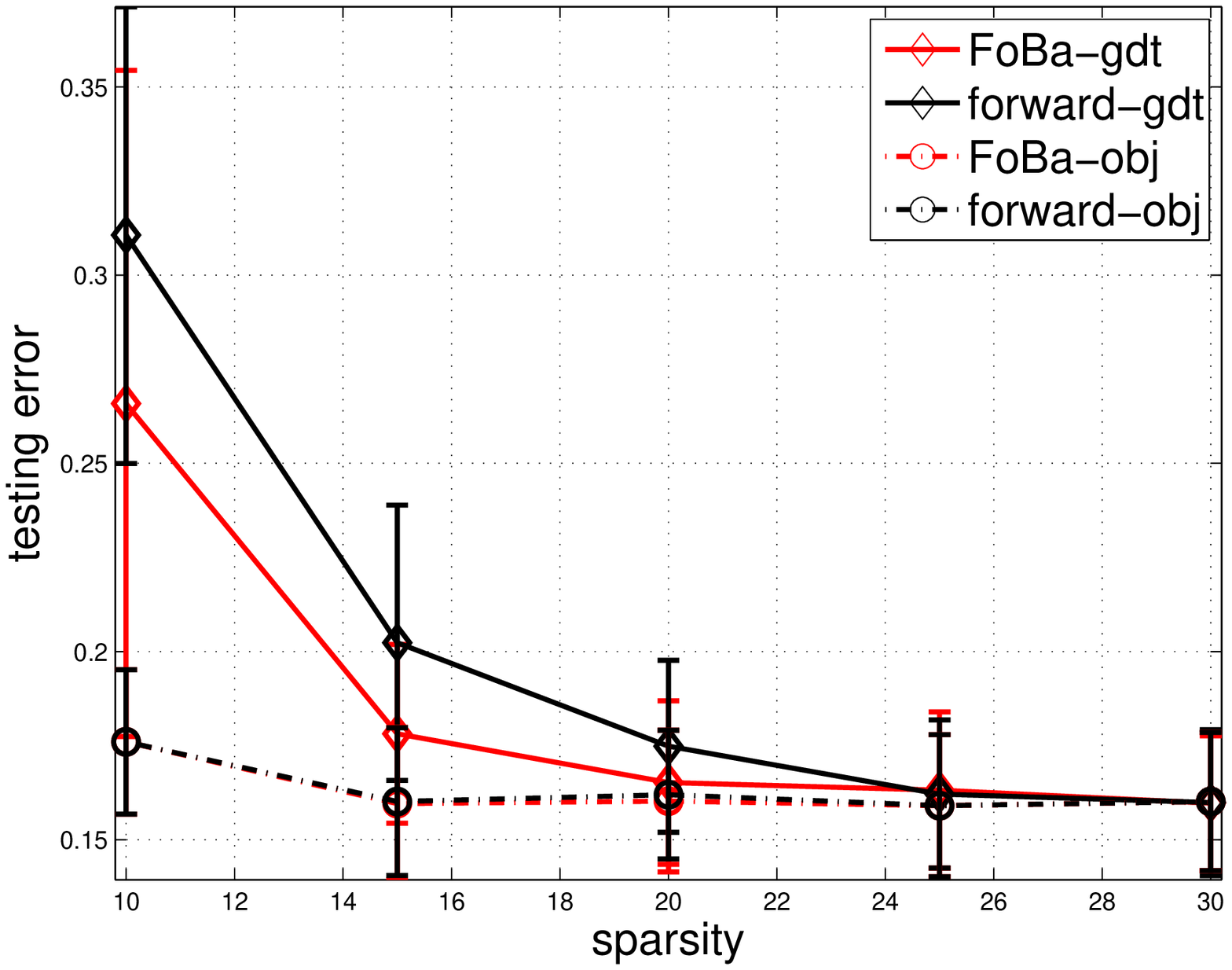}}
        \subfigure{\includegraphics[scale=0.30]{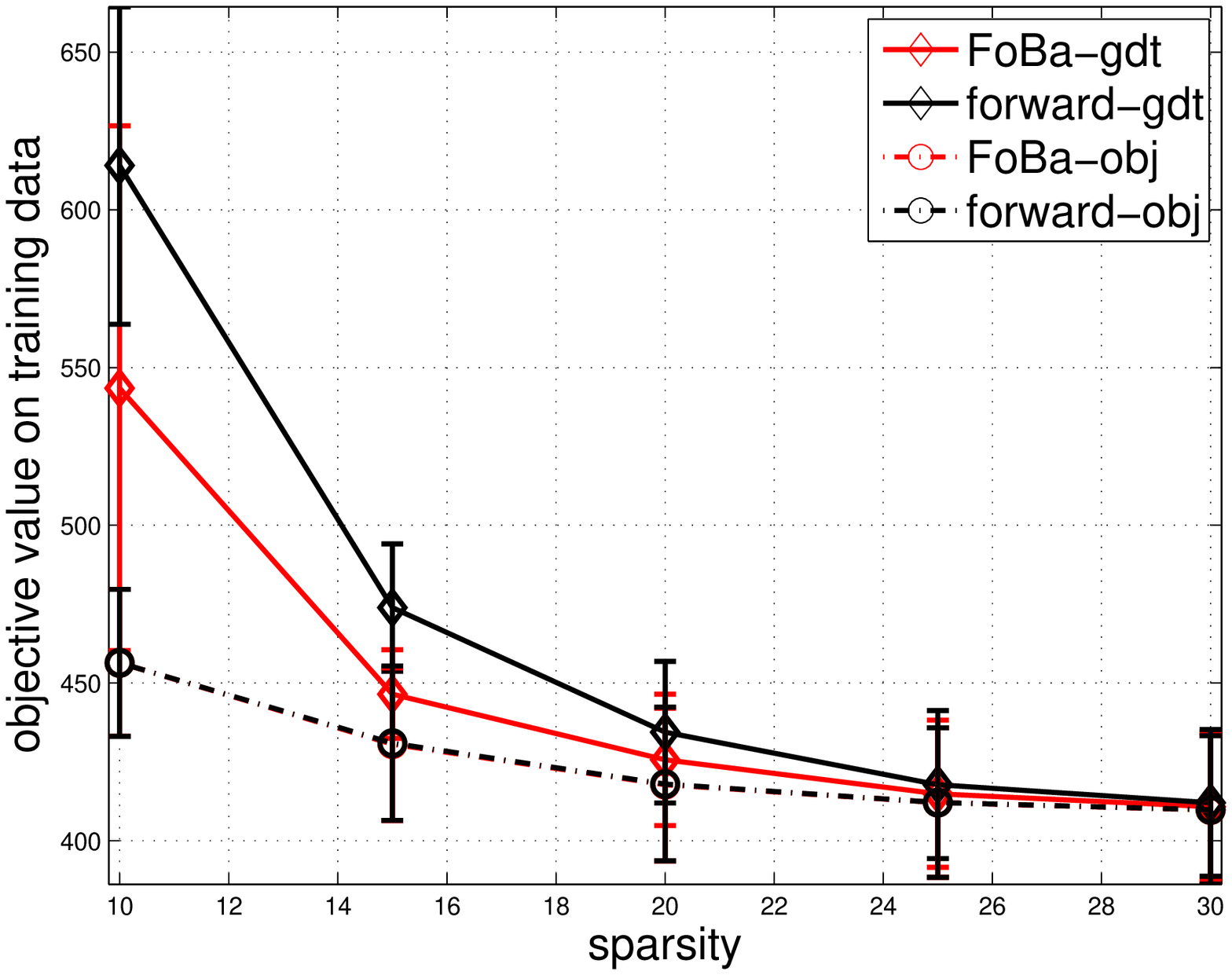}}
    \caption{Comparison of different algorithms in terms of training error and test error on synthetic data for CRF. The horizontal axis is the sparsity number of the true model ranging from 10 to 30.}
    \label{fig_CRF_syn}
\end{figure*}

}
\end{document}